\DeclarePairedDelimiterX{\infdivx}[2]{(}{)}{%
  #1\;\delimsize\|\;#2%
}
\DeclarePairedDelimiterX{\inftvx}[2]{(}{)}{%
  #1, #2%
}
\newcommand{\Prob}{\mathbb{P}}
\newcommand{\R}{\mathbb{R}}
\newcommand{\N}{\mathbb{N}}
\newcommand{\G}{\mathcal{G}}
\newcommand{\V}{\mathcal{V}}
\newcommand{\F}{\mathcal{F}}
\newcommand{\U}{\mathcal{U}}
\newcommand{\I}{\mathcal{I}}
\newcommand{\norm}[1]{\left\lVert#1\right\rVert}
\DeclareMathOperator{\Tr}{Tr}
\DeclareMathOperator{\Pa}{Pa}
\newcommand{\rankb}[1]{\textrm{Rank}_{\mathcal{B}}\left(#1\right)}
\newtheorem{prop}{Proposition}
\newtheorem{definition}{Definition}
\newtheorem{lemma}{Lemma}
\newtheorem{example}{Example}
\newcommand{\matr}[1]{\mathbf{#1}}
\newcommand{\cmark}{\ding{51}}%
\newcommand{\xmark}{\ding{55}}%
\definecolor{mydarkblue}{rgb}{0,0.08,0.45}
\definecolor{myblue}{HTML}{D2E4FC}
\definecolor{mygreen}{HTML}{008000}
\definecolor{myorange}{HTML}{FFC48C}
\definecolor{Gray}{gray}{0.92}
\newcommand{\red}{\color{red}}
\definecolor{codegreen}{rgb}{0,0.6,0}
\definecolor{codegray}{rgb}{0.5,0.5,0.5}
\definecolor{codepurple}{rgb}{0.58,0,0.82}
\definecolor{backcolour}{rgb}{0.95,0.95,0.92}
\lstdefinestyle{mystyle}{
    backgroundcolor=\color{backcolour},   
    commentstyle=\color{codegreen},
    keywordstyle=\color{magenta},
    numberstyle=\tiny\color{codegray},
    stringstyle=\color{codepurple},
    basicstyle=\ttfamily\footnotesize,
    breakatwhitespace=false,         
    breaklines=true,                 
    captionpos=b,
    keepspaces=true,
    numbers=left,
    numbersep=5pt,
    showspaces=false,
    showstringspaces=false,
    showtabs=false,                  
    tabsize=2
}
\definecolor{shadecolor}{RGB}{242, 242, 242}
\definecolor{mydarkblue}{rgb}{0,0.08,0.45}
\title{Large-Scale Differentiable \\Causal Discovery of Factor Graphs}
\author{
Romain Lopez$^{1, 2}$, Jan-Christian H\"utter$^1$, \\ \textbf{Jonathan K. Pritchard$^{2, 3,\dagger}$, Aviv Regev$^{1, \dagger}$} \\ 
~\\
$^1$ Division of Research and Early Development, Genentech\\
\Verb+\{lopez.romain, huettej1, regeva\}@gene.com+\\~\\
$^2$ Department of Genetics, Stanford University\\
\Verb+pritch@stanford.edu+\\~\\
$^3$ Department of Biology, Stanford University\\~\\
$^\dagger$ These authors contributed equally to this work.
}
\begin{document}

\maketitle

% 1. introduction
% 2. background
% 3. our method
% 3.1 factor graphs for rank-constrained DAG search
% 3.2 associated neural likelihood model 
% 4. experiments
% 4.1 simulations
% 4.2 Perturb-seq real data analysis

\begin{abstract}
    % A common theme in causal inference is learning causal relationships between observed variables. This is usually a daunting task, given the large number of candidate causal graphs and the combinatorial nature of the search space. For this reason perhaps, most research has so far focused on relatively small causal graphs, with up to hundreds of nodes. However, recent efforts in bridging the gap between causal discovery and continuous optimization bear promise to scale this task up to tens of thousands of variables. We achieve this scale-up by restricting the search space to the factor directed acyclic graphs ($f$-DAGs), which gives rise to non-linear low-rank causal interaction models.  Additionally, we quantify the effect on the rank of edge perturbations of the $f$-DAGs skeleton based on random graphs. Our theoretical analysis suggests that the set of candidate $f$-DAGs is much smaller compared to the whole DAG space, motivating our choice for large graphs where the skeleton is unknown and hard to assess. We propose Differentiable Causal Discovery of Factor Graphs (DCD-FG), a scalable implementation of the method for causal discovery based on high-dimensional interventional data. DCD-FG uses a Gaussian non-linear low-rank structural equation model and shows significant improvements compared to state-of-the-art methods in both simulations as well as a recent large-scale single-cell RNA sequencing data set with hundreds of genetic interventions. 
    
    A common theme in causal inference is learning causal relationships between observed variables, also known as causal discovery. This is usually a daunting task, given the large number of candidate causal graphs and the combinatorial nature of the search space. Perhaps for this reason, most research has so far focused on relatively small causal graphs, with up to hundreds of nodes. However, recent advances in fields like biology enable generating experimental data sets with thousands of interventions followed by rich profiling of thousands of variables, raising the opportunity and urgent need for large causal graph models.  Here, we introduce the notion of factor directed acyclic graphs ($f$-DAGs) as a way to restrict the search space to non-linear low-rank causal interaction models. Combining this novel structural assumption with recent advances that bridge the gap between causal discovery and continuous optimization, we achieve causal discovery on thousands of variables. Additionally, as a model for the impact of statistical noise on this estimation procedure, we study a model of edge perturbations of the $f$-DAG skeleton based on random graphs and quantify the effect of such perturbations on the $f$-DAG rank. This theoretical analysis suggests that the set of candidate $f$-DAGs is much smaller than the whole DAG space and thus may be more suitable as a search space in the high-dimensional regime where the underlying skeleton is hard to assess. We propose Differentiable Causal Discovery of Factor Graphs (DCD-FG), a scalable implementation of $f$-DAG constrained causal discovery for high-dimensional interventional data. DCD-FG uses a Gaussian non-linear low-rank structural equation model and shows significant improvements compared to state-of-the-art methods in both simulations as well as a recent large-scale single-cell RNA sequencing data set with hundreds of genetic interventions.
\end{abstract}

\section{Introduction}

%  1. learning causal relationships from interventions is exciting and promising
Characterizing causal dependencies between variables is a fundamental problem in science~\cite{sachs2005causal, zhang2013integrated,segal2005learning}. Such relationships are often described via a directed acyclic graph (DAG) where nodes represent variables and directed edges encode causal links between pairs of variables. One may then consider a set of conditional probability distributions in addition to the DAG to define a causal graphical model (CGM)~\cite{koller2009}. The graphical model is referred to as ``causal'' because it provides a clear semantic for defining the effect of interventions on the joint likelihood~\cite{peters2017elements}. 

%  most methodological research is concerned with tens of nodes, but there is interventional data with thousands of interventions
The combinatorial nature of the set of DAGs and its size make inference of causal structure from data a hard problem. Indeed, the number of DAGs (or the number of permutations) grows super exponentially in the number of nodes, limiting the practical application of most methods to tens of nodes~\cite{brouillarddcdi, igsp, spirtes2000causation}. This highlights a disparity between the development of the field and emerging data from real-world problems in finance~\cite{diamandis2018ranking,rigana2021using}, neuroscience~\cite{malladi2016identifying,ramsey2021} and high-throughput biology~\cite{dixit2016perturb, frangieh2021multimodal, norman2019exploring, Replogle2021.12.16.473013}. For example, experimental advances in biology have enabled the generation of data sets where expression profiles of thousands of RNA transcripts are measured in hundreds of thousands of cells following genetic interventions in each of hundreds or thousands of different genes.  

% Low-rank stuff in statistics
To reduce this computational burden, a promising direction is to limit the complexity of the search space. This idea has become commonplace at the interface of statistics and optimization, where sparsity~\cite{sparsitybook} as well as low-rank assumptions~\cite{koren2009matrix} are often exploited in machine learning algorithms. Although sparsity is widely used in causal structure learning~\cite{igsp, zheng2018dags}, the use of low-rank constraints in this specific setting remains largely under-explored, with a few notable exceptions~\cite{lowrankdags,dong2022graphs}. This may be due to the fact that such an assumption may be hard to incorporate into common methods for DAG learning that rely on graphical constraints or on permutations. 

% Continuous optimization and challenges
Recently, the NOTEARS methodology~\cite{zheng2018dags} introduced a continuous relaxation of DAG learning, effectively closing the gap between causal structure learning and continuous optimization. NOTEARS facilitates incorporating more flexible structural assumptions into DAG learning, such as neural network parametrizations of the conditional probability distributions. Although NOTEARS has been the subject of several follow-up works~\cite{grandag,ng2019masked,zheng2019learning,dag_gnn,lee2019scaling}, including learning from interventional data~\cite{brouillarddcdi}, several challenges remain. First, the likelihood function usually decomposes as a product of local likelihoods for each node, and existing options for those local models are either fast to fit but likely underfitting the data (linear models~\cite{zheng2018dags}) or are computationally expensive and prone to overfitting (a neural network taking as input all of the other nodes~\cite{brouillarddcdi}). Second, most methods rely on a differentiable penalty for acyclicity whose evaluation has cubic complexity in the number of nodes. As a result, these methods are impractical for graphs with more than a hundred nodes.

  \begin{table*}[t]
        \caption{Comparison with related work. $d$ denotes the number of features, $m$ denotes the number of learned factors. Because $m$ remains low (e.g., 20), we have $m \ll d$. $^\dagger$additive cubic model only. 
      }\label{table:related_work}
      \centering
    %   \scalebox{0.75}{
    %   \footnotesize
      \setlength\tabcolsep{6pt}\begin{tabular}{l c c c c }
        \toprule
         \multirow{2}{*}{Related work}                                  & \multirow{1}{*}{non-linear}                                    & \multirow{1}{*}{likelihood eval.}                                   &
        \multirow{1}{*}{DAG penalty eval.}                                          & \multirow{1}{*}{intervention}                                                                                                                                                                                                   \\
         & \multirow{-1}{*}{link function}                                     & \multirow{-1}{*}{complexity}                                  & \multirow{-1}{*}{complexity} & \multirow{-1}{*}{model}  \\\midrule
        %  Related work                                  & non-linear link function                                    & likelihood eval. complexity                                   &
        % DAG penalty eval. complexity                                          & intervention model                                                                                                                                                                                                   \\\midrule

\multirow{-1}{*}{\textbf{NO-TEARS}~\cite{zheng2018dags}\cellcolor{Gray}}                      & \multirow{-1}{*}{\red\large\xmark} \cellcolor{Gray}                            & \cellcolor{Gray}$O(d^2)$                            &\cellcolor{Gray}
        $O(d^3)$
                                                                         &
        \multirow{-1}{*}{\red\large\xmark}           \cellcolor{Gray}               \\

\multirow{-1}{*}{\textbf{NO-BEARS}~\cite{lee2019scaling}}        &
        \color{mygreen}{\large\cmark}$^\dag$                                  &  $O(d^2)$      &
        $O(d^2)$                                        &
        \multirow{-1}{*}{\red\large\xmark}                                                                                                                                                                                                                                           \\
            
\multirow{-1}{*}{\textbf{NO-TEARS-LR}~\cite{lowrankdags}\cellcolor{Gray}}                      & \multirow{-1}{*}{\red\large\xmark}  \cellcolor{Gray}                           & $~~O(md)$    \cellcolor{Gray}                        &
        $O(d^3)$\cellcolor{Gray}
                                                                         &
        \multirow{-1}{*}{\red\large\xmark}    \cellcolor{Gray}                     \\
        \multirow{-1}{*}{\textbf{DCDI}~\cite{brouillarddcdi}}                     &\multirow{-1}{*}{\color{mygreen}{\large\cmark}}                  &  $O(d^2)$
                                                                         &  $O(d^3)$
                                                                         &  \multirow{-1}{*}{\color{mygreen}{\large\cmark}}               \\
                                                                         \textbf{DCD-FG}~\emph{(this work)}{\cellcolor{Gray}}        &
        \multirow{-1}{*}{\color{mygreen}\large\cmark}\cellcolor{Gray}
       \cellcolor{Gray}                                                 &
        \cellcolor{Gray} $O(md)$   & \cellcolor{Gray} $O(md)$ %or $O(m^2d + m^3)$
        & \cellcolor{Gray}\multirow{-1}{*}{\color{mygreen}\large\cmark}          \\
 \bottomrule 
      \end{tabular}
    %   }
        % \vspace{-0.6cm}
    \end{table*}

%To address both issues, we propose to learn a factor graph instead of a dense graph
In this work, we investigate both challenges and propose a methodology for large-scale discovery of causal structure and prediction of unseen interventions that scales to millions of samples and thousands of nodes. Our key idea is to limit the search space to what we refer to as factor directed acyclic graphs ($f$-DAGs), a class of low-rank graphs defined in Section \ref{sec:f-dag-defn}. This constraint assumes that many nodes share similar sets of parents and children, which is the case in scale-free topologies~\cite{lowrankdags} and many biological systems, where genes act together in programs~\cite{segal2005learning,cleary2017efficient,heimberg2016low}. Based on this class of graphs, we design a flexible model and a scalable inference procedure (Table~\ref{table:related_work}) that we refer to as Differentiable Causal Discovery of Factor Graphs (DCD-FG).

% 6. Outline of the paper
After introducing the necessary background (Section~\ref{sec:related}), we define the class of $f$-DAGs (Section~\ref{sec:f-dags}) and draw connections to several flavors of matrix factorization. In particular, we show connections between the number of factors in an $f$-DAG and the Boolean rank~\cite{miettinen2021recent} of its adjacency matrix. We exploit these connections to present a scalable acyclicity score with linear complexity in the number of observed variables. Finally, we characterize the identifiability of these graphs under an Erd\H{o}s-R\'enyi random graph model and prove the instability of the Boolean rank under edge perturbations. The latter analysis highlights that restricting inference to $f$-DAGs is more efficient in the high-dimensional causal discovery regime. Then, we posit a flexible class of likelihood models as well as a scalable inference method for our DCD-FG framework (Section~\ref{sec:DDCFG}). Finally, we present runtime experiments, simulation studies, and a case study on single-cell RNA sequencing data with hundreds of genetic interventions (Section~\ref{sec:simu}). In this last challenging instance of interventional data with high-dimensional measurements, we show that our framework outperforms current state-of-the-art causal discovery approaches for predicting the effect of held-out interventions.
\section{Background}
\label{sec:related}
Our work builds upon continuous relaxations of the causal structure learning problem. We therefore first briefly introduce causal graphical models and then how the inference problem is solved with a gradient-based optimization framework. 

\subsection{Causal Graphical Models}
Following the framework introduced in~\cite{peters2017elements}, let $X = (X_1, \ldots, X_d)$ denote a set of random variables with a joint probability distribution $P$. Let $G = (V, E)$ be a DAG where each vertex $v_i \in V$ is associated with a random variable $X_i$ and each edge $(v_i, v_j)$ represents a causal relationship from $X_i$ to $X_j$. A correspondence between the graph and the probability distribution $P$ is obtained via the factorization condition
\begin{align}
% \(
P(X_1, \ldots, X_d) = \prod_{j=1}^d P(X_j \mid X_{\pi(j)}),
% \)
    \label{eq:facto}
\end{align}
where $X_{\pi(j)}$ denotes the vector of random variables formed by all parents of node $v_j$ in $G$. We refer to a pair $(P, G)$ that satisfies the factorization condition as a causal graphical model (CGM). 

Unlike classical graphical models, CGMs provide a principled way to model interventions~\cite{eberhardt2007interventions}. With interventional data, each observation is measured under a specific regime $k$ with interventional joint density  $P^{(k)}$. For each regime, a subset of the target variables $\I_k$ is subject to intervention. Each of these interventions affects the relationship between the target node and its parents, by altering the conditional distribution. In the case of perfect interventions~\cite{spirtes2000causation}, the dependency of intervened upon nodes from their parents is removed, and the interventional joint density $P^{(k)}$ becomes
\begin{align}
    P^{(k)} = \prod_{j \nin \I_k} P(X_j \mid X_{\pi(j)})\prod_{j \in \I_k} P^{(k)}(X_j),
\end{align}
where $P^{(k)}$ models the effect of the perfect (stochastic) interventions on each targeted feature of $\I_k$. %We define the intervention family as the set of targets for each regime $\I = (\I_1, \ldots, \I_K)$. 
% Aviv Regev: totally *not* for this paper, but for future discussion, some interventions in biological systems can be seen as interventions on edges. Just putting here so I do not forget to mention later.

\subsection{Differentiable Causal Structure Learning} 
A significant challenge is to identify and learn CGMs from data when the causal relationships between features are not known beforehand. Several methods have been proposed for this problem, such as constraint-based methods (e.g., the PC algorithm~\cite{spirtes2000causation}), score-based methods (e.g., GSP~\cite{gsp}), and their extensions for modeling interventions~(e.g., IGSP~\cite{igsp}), all reviewed in~\cite{glymour2019review}. 

The NOTEARS methodology~\cite{zheng2018dags} proposes to solve a continuous relaxation of the optimization problem of score-based methods, which in the case of observational data can be written as
\begin{align}
    \min_{\matr{W}} \frac{1}{n}\sum_{i=1}^n\norm{X^i - \matr{W}X^i}^2_2 + \lambda \norm{\matr{W}}_1 ~~\text{such that}~~ \Tr\exp\{\matr{W} \circ \matr{W}\} = d,
    \label{eq:tears}
\end{align}
where $n$ denotes the total number of i.i.d. observations, $X^i \in \mathbb{R}^d$ denotes the $i$-th observation of $X$, $\circ$ denotes the Hadamard product, $\matr{W} \in \R^{d \times d}$ denotes the parameters of a linear Gaussian conditional distribution, assuming equal variance for all nodes, and $\lambda > 0$ denotes a tuning parameter. The search space is restricted to DAGs by enforcing that the trace of the exponential (tr-exp) of $\matr{W} \circ \matr{W}$ is equal to its dimension $d$. %For a non-negative-valued square matrix, this equality is true if and only if the non-zero pattern of the matrix corresponds to the adjacency matrix of a DAG. 
Importantly, evaluating the tr-exp of $\matr{W}\circ \matr{W}$ as well its gradient with respect to $\matr{W}$ has $O(d^3)$ time complexity, which makes it potentially prohibitive for large-scale applications.

This fundamental work has been subject to multiple extensions, of which the most relevant ones are listed in Table~\ref{table:related_work}. First, different models of relationships between variables were introduced with the aim of better fitting the data, including several non-linear models, such as neural networks~\cite{brouillarddcdi, grandag, zheng2019learning, mooij2016joint, ng2019masked}. Because these methods use one neural network per conditional distribution (or one deep sigmoidal flow), they are computationally expensive, as emphasized in~\cite{brouillarddcdi}. The additive cubic model proposed by NOBEARS~\cite{lee2019scaling} does not have these scalability issues, but may be of limited flexibility. To reduce the number of parameters, graph neural networks have been proposed~\cite{dag_gnn}, but that architecture cannot be easily applied to the case of interventional data. %\cite{dag_gnn} uses variational inference to learn the exogenous noise distribution, naturally generalizing inference for a wide class of observation models. \cite{brouillarddcdi} proposes a version of their method with deep sigmoidal flows, that may approximate any distribution with enough capacity. 
%AVIV: I put in addition here, as it was confusing to have "finally" and right after "second"
In addition, NOTEARS-LR~\cite{lowrankdags} proposes a low-rank decomposition of the linear model from NOTEARS but did not consider that the computational complexity could become a bottleneck for large graphs.

Second, studies also investigated variants of the acyclicity penalty. \cite{dag_gnn} proposes a matrix power variant of the trace exponential criterion, for numerical stability. %\cite{wei2020dags} analyzed the satisfaction of the KKT conditions for the optimization problem~\eqref{eq:tears} and proposed more suitable variants of the algorithms. However, in each of those settings, the evaluation of the penalty (or its gradient) still has a time complexity of $O(d^3)$. 
NOBEARS~\cite{lee2019scaling} uses an algebraic characterization of acyclicity based on the spectral radius of the adjacency matrix, that can be approximated in $O(d^2)$. LoRAM~\cite{dong2022graphs} exploits a low-rank assumption to obtain a $O(d^2m)$ acyclicity score, where $m$ denotes the rank. However, the proposed framework is tailored to projections of DAG into a low-rank space but not to causal discovery learning.

\section{Factor Directed Acyclic Graphs}
\label{sec:f-dags}

Next, we introduce factor directed acyclic graphs ($f$-DAGs) and draw connections to low-rank matrix factorizations. Then, we motivate their use in causal inference by studying the effect of edge perturbation on their rank. Additionally, we provide two differentiable acyclicity penalties computable in linear time in the number of nodes for a fixed set of factors by exploiting the low-rank structure.
\newcommand{\ltkiz}{10pt}
    
\begin{figure}


    \captionsetup[subfigure]{justification=centering}
\tikzstyle{latent} = [circle,fill=red!20,draw=black, inner sep=3pt, minimum size=10pt]
\tikzstyle{det} = [diamond,fill=orange!20,draw=black, inner sep=1pt, minimum size=10pt]
    \centering
    \begin{subfigure}[t]{0.3\textwidth}
        \centering  
% \scalebox{0.75}{
\tikz{ %
  \node[latent] (a1) {${v_1}$} ; %
  \node[latent, right=\ltkiz of a1] (a2) {${v_2}$} ;
  \node[latent, right=\ltkiz of a2] (a3) {${v_3}$} ;
  \node[latent, right=\ltkiz of a3] (a4) {${v_4}$} ;
  \node[det, below=0.5 * \ltkiz of a1, xshift=+\ltkiz] (u) {${f_1}$};
  \node[det, right=\ltkiz of u] (v) {${f_2}$};
  \node[latent, below=0.5 * \ltkiz of u, xshift=+\ltkiz] (a5) {${v_5}$} ;
  \node[det, right=2*\ltkiz of a5, yshift=1.1*\ltkiz] (w) {${f_3}$};
  \node[latent, below=0.5 * \ltkiz of w, xshift=-\ltkiz] (a6) {${v_6}$} ;
  \node[latent, right=\ltkiz of a6] (a7) {${v_7}$} ;

    \edge {a1} {u};
    \edge {a2} {u};
    \edge {a3} {v};
    \edge {a4} {v};
    \edge {u} {a5};
    \edge {v} {a5};
    \edge {a4} {w};
    \edge {a5} {w};
    \edge {w} {a6};
    \edge {w} {a7};
  }
%   }
  \caption{
  Complete graph $G_f$}
    \end{subfigure}
    ~
    \begin{subfigure}[t]{0.3\textwidth}
        \centering   
% \scalebox{0.75}{
\tikz{ %
  \node[det] (u) {${f_1}$};
  \node[det, right=\ltkiz of u] (v) {${f_2}$};
  \node[det, below=2*\ltkiz of u] (w) {${f_3}$};
  
    \edge {u} {w};
    \edge {v} {w};
  }         
%   }
  \caption{
  Half-square $G_f^2[F]$}
    \end{subfigure}
    ~
    \begin{subfigure}[t]{0.3\textwidth}
        \centering   
% \scalebox{0.75}{
\tikz{ %
  \node[latent] (a1) {${v_1}$} ; %
  \node[latent, right=0.4 * \ltkiz of a1] (a2) {${v_2}$} ;
  \node[latent, right=0.4 * \ltkiz of a2] (a3) {${v_3}$} ;
  \node[latent, right=0.4 * \ltkiz of a3] (a4) {${v_4}$} ;
  \node[latent, below=\ltkiz of a2, xshift=+\ltkiz] (a5) {${v_5}$} ;
  \node[latent, below=\ltkiz of a5, xshift=-\ltkiz] (a6) {${v_6}$} ;
  \node[latent, right=\ltkiz of a6] (a7) {${v_7}$} ;

    \edge {a1} {a5};
    \edge {a2} {a5};
    \edge {a3} {a5};
    \edge {a4} {a5};
    \edge {a4} {a7};
    \edge {a4} {a6};
    \edge {a5} {a6};
    \edge {a5} {a7};
  }     
%  }
  \caption{
  Half-square $G = G_f^2[V]$}
    \end{subfigure}
    \caption{A factor graph and its induced half-square graphs. Red circles represent variable nodes. Orange diamonds represent factor nodes.}
    % \vspace{-0.3cm}
    \label{fig:fgraph}
\end{figure}

\subsection{Definitions and relationship to low-rank decomposition}
\label{sec:f-dag-defn}
% the bipartite object
Factor graphs, especially undirected ones, are commonly used in the graphical models literature to factorize probability distributions and describe the complexity of message passing algorithms~\cite{loeliger2004introduction}. In this work, we use them to construct causal graphs over features. In addition to the set of feature vertices $V = \{v_1, \ldots, v_d\}$, we consider a set of factor vertices $F = \{f_1, \ldots, f_m\}$, for $m \in \N$. When the edge set $E$ links only vertices of different types, the graph $G_f = (V, F, E)$ is a bipartite graph and we refer to it as a factor directed graph ($f$-DiGraph). If $G_f$ additionally does not contain cycles, we call it an $f$-DAG. $G_f$ canonically induces two half-square graphs $G_f^2[V]$ and $G_f^2[F]$, defined by drawing an edge between vertices of $V$ (or $F$) of distance exactly two in $G_f$ (see example in Figure~\ref{fig:fgraph}). Unless otherwise mentioned, we will refer to $G=G_f^2[V]$ as the half-square of $G_f$ over vertices. We define the set of half-square graphs of $f$-DiGraphs with $d$ variables and $m$ factors as
\begin{align}
    \G^m_d = \left\{G = G_f^2[V] \mid G_f = (V, F, E) \text{~is a factor directed graph and~} |F| = m\right\}.
\end{align}
Intriguingly, the set $\G^m_d$ may be identified as the set of matrices with Boolean rank~\cite{miettinen2021recent} of at most $m$.
\begin{restatable}[Bounded Boolean rank of half-square adjacency matrix]{prop}{boolrank}
\label{prop:boolrank}
For a factor graph $G_f = (V, F, E)$, let $\matr{U} \in \{0, 1\}^{d \times m}$ (resp. $\matr{V} \in \{0, 1\}^{m \times d}$) be the binary matrix encoding the presence or absence of edges directed towards factor nodes (resp. variable nodes), according to $E$. The adjacency matrix $\mathcal{A}(G)$ of the half-square graph $G$ may be decomposed as $\mathcal{A}(G) = \matr{U} \diamond \matr{V}$ where $\diamond$ denotes the Boolean matrix product, $(\matr{U} \diamond \matr{V})_{ij} = \bigvee_{k=1}^m \matr{U}_{ik} \wedge \matr{V}_{kj}$, $i, j \in [d]$. Consequently, $\mathcal{A}(G)$ has Boolean rank bounded above by $m$. Conversely, every adjacency matrix over the feature nodes with Boolean rank bounded by $m$ can be written as half-square of an $f$-DiGraph with $m$ factors.
\end{restatable}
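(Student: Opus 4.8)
The plan is to prove the two directions separately, with the forward direction doing essentially all of the work and the converse reducing to it. The crux throughout is to unwind the definition of the half-square $G = G_f^2[V]$ in the bipartite directed graph $G_f$ and to recognize that it is recorded exactly by the Boolean matrix product.

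For the forward direction, I would first invoke bipartiteness: since every edge of $G_f$ joins a vertex of $V$ to a vertex of $F$, any directed walk of length two starting at a variable node $v_i$ must have the form $v_i \to f_k \to v_j$ for some factor $f_k$ and some variable $v_j$. Hence $(v_i, v_j)$ is an edge of $G$ if and only if there exists an index $k \in [m]$ with an edge $v_i \to f_k$ in $G_f$ (i.e. $\matr{U}_{ik} = 1$) and an edge $f_k \to v_j$ in $G_f$ (i.e. $\matr{V}_{kj} = 1$). The key step is then to observe that this existential quantifier over connecting factors is precisely a disjunction, so that $\mathcal{A}(G)_{ij} = \bigvee_{k=1}^m (\matr{U}_{ik} \wedge \matr{V}_{kj}) = (\matr{U} \diamond \matr{V})_{ij}$. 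Since $\matr{U}$ has $m$ columns and $\matr{V}$ has $m$ rows, this exhibits $\mathcal{A}(G)$ as a Boolean product with inner dimension $m$, so its Boolean rank is at most $m$ by definition.

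For the converse, I would start from an adjacency matrix $\matr{A}$ over the feature nodes with Boolean rank $r \le m$. By definition of the Boolean rank there is a factorization $\matr{A} = \matr{U} \diamond \matr{V}$ with $\matr{U} \in \{0,1\}^{d \times r}$ and $\matr{V} \in \{0,1\}^{r \times d}$; I would pad this to inner dimension exactly $m$ by appending $m - r$ zero columns to $\matr{U}$ and $m - r$ zero rows to $\matr{V}$, which leaves the Boolean product unchanged and corresponds to introducing $m - r$ isolated factor nodes. I then define the $f$-DiGraph $G_f = (V, F, E)$ with $|F| = m$ by placing an edge $v_i \to f_k$ whenever $\matr{U}_{ik} = 1$ and an edge $f_k \to v_j$ whenever $\matr{V}_{kj} = 1$, and apply the forward direction to conclude $\mathcal{A}(G_f^2[V]) = \matr{U} \diamond \matr{V} = \matr{A}$, as required.

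I expect no genuine obstacle: once the definitions are set up the argument is a direct bookkeeping exercise. The one point that needs care, and that carries the whole proposition, is the translation of ``there exists a factor connecting $v_i$ to $v_j$'' into the Boolean OR in $(\matr{U} \diamond \matr{V})_{ij}$ — this is exactly why the relevant notion is the Boolean rank rather than the usual rank, under which an entry would instead count the number of connecting factors. A secondary detail is to read ``distance exactly two'' in the directed sense $v_i \to f_k \to v_j$ (consistent with Figure~\ref{fig:fgraph}); bipartiteness guarantees there are no shorter $V$--$V$ connections, so for $i \neq j$ the conditions ``at distance two'' and ``a directed length-two walk exists'' coincide.
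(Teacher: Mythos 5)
Your proposal is correct and follows essentially the same route as the paper's proof: both directions rest on reading the half-square edge relation $v_i \to f_k \to v_j$ off the bipartite structure, translating the existential quantifier over connecting factors into the Boolean disjunction $(\matr{U} \diamond \matr{V})_{ij} = \bigvee_{k=1}^m \matr{U}_{ik} \wedge \matr{V}_{kj}$, and for the converse turning a Boolean factorization back into an $f$-DiGraph via the same dictionary. Your explicit zero-padding step to reach inner dimension exactly $m$ is a small tidiness the paper leaves implicit, but it does not change the argument.
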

This result, proven in Appendix~\ref{sec:app:fdags}, establishes a connection between inference of causal $f$-DAGs and Boolean matrix factorization. Additionally, it is easy to notice that the (integer-valued) matrix product $\matr{U}\matr{V}$ provides a valid (weighted) adjacency matrix for $G$. In this work, we show that the Boolean decomposition is suited for theoretical analysis of the method, while the linear one is useful for efficient algorithm design. We further note that this proposed class of graphs is smaller than the one arising from adjacency matrices with (unconstrained) linear matrix rank bounded by $m$, which we detail further in Appendix~\ref{sec:app:alternate-defs}.

\subsection{Statistical Properties of Random Causal Factor Graphs}
An important theoretical question pertains to the assumptions necessary for identification of acyclic graphs in $\G_d^m$ (which we denote by $\mathcal{D}_d^m$) from data. Under the classical set of assumptions (causal sufficiency, causal Markov property and faithfulness), we may identify the causal DAG from observational data only up to its Markov equivalence class (MEC)~\cite{yang2018characterizing}. However, graphs in $\mathcal{D}_d^m$ can have many v-structures (emerging from having several feature parents), potentially making them identifiable. Indeed, we prove in Appendix~\ref{sec:app:ident} that under an adapted Erd\H{o}s-R\'enyi random graph model~\cite{erdHos1960evolution} over $\mathcal{D}_d^m$, graphs are identifiable with high probability (i.e., their MEC is reduced to one graph). We also verified this with simulations (Figure~\ref{fig:dag-count}A).

Although valid, the previous result may be disconnected from real-world applications in high-dimensional regimes for which stronger assumptions are required (e.g., strong faithfulness~\cite{10.1093/biomet/asz055}), but rarely hold~\cite{uhler2013geometry}. More concretely, we expect errors in the estimated skeleton of the feature node graph. As a toy model of these errors, we assume the true causal graph $G$ is in $\G_d^m$ and apply a stochastic edge perturbation operator $\Lambda$ that randomly removes or adds one edge to $G$. While there are no general rules as to how identifiability is changed under these perturbations~\cite{eigenmann2017structure} (i.e., the size of the MEC could increase or decrease), we show that the Boolean rank of the graph strictly increases with high probability:
\begin{restatable}[Boolean rank instability for edge perturbation]{thm}{boolinstab}
\label{thm:rank}
Let $G \in \G_d^m$ be sampled according to an Erd\H{o}s-R\'enyi random directed graph model. The probability that adding or removing an edge increases the Boolean rank is arbitrarily high for large $d$:
 \begin{align}
 \mathbb{P}\left( \rankb{\Lambda(G)} > \rankb{G}\right) &\geq 1 - \alpha q^d,
    \end{align}
    where $\textrm{Rank}_{\mathcal{B}}$ denotes the Boolean rank, and $\alpha > 0$ and $q \in (0, 1)$ depend on $m$ and the parameters of the random graph model.
\end{restatable}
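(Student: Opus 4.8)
The plan is to recast the Boolean rank as a covering number and then certify a strict increase on the perturbed graph by a purely combinatorial witness. Recall that $\rankb{G}$ equals the minimum number of all-ones combinatorial rectangles (bicliques) whose union is exactly the support of $\mathcal{A}(G)$: each factor $f_k$ contributes the rectangle $I_k \times J_k$, with $I_k = \{i : \matr{U}_{ik}=1\}$ and $J_k = \{j : \matr{V}_{kj}=1\}$, and the Boolean product $\matr{U}\diamond\matr{V}$ of Proposition~\ref{prop:boolrank} is precisely the statement that these rectangles cover the support without touching a $0$. I would then invoke the standard fooling-set lower bound: if $S$ is a set of $1$-entries of a Boolean matrix $B$ such that for every distinct $(a,b),(c,d)\in S$ at least one of $B_{ad}, B_{cb}$ vanishes, then no single all-ones rectangle can contain two elements of $S$, whence $\rankb{B}\ge|S|$. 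The whole argument thus reduces to exhibiting, with high probability, a fooling set of size $m+1$ in the perturbed matrix $\mathcal{A}(\Lambda(G))$; since Proposition~\ref{prop:boolrank} already gives $\rankb{G}\le m$, the strict inequality follows at once.

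The only probabilistic input is a concentration event. Under the Erd\H{o}s--R\'enyi factor-graph model the columns of $\matr{U}$ and rows of $\matr{V}$ are (up to the acyclicity constraint) independent Bernoulli vectors. Call a feature a \emph{private parent} of factor $k$ if it sends an edge to $f_k$ and to no other factor, and define private children symmetrically; the probability that a fixed feature is a private parent of $k$ is $\rho = p(1-p)^{m-1}>0$, so the count of private parents of $k$ concentrates around $d\rho$. Let $\E$ be the event that every factor has at least two private parents and at least two private children. A binomial tail estimate together with a union bound over the $2m$ conditions yields $\Prob(\E^c)\le \alpha q^d$ with $q = 1-\rho\in(0,1)$, the polynomial prefactor being absorbed into $\alpha$; this matches the claimed rate, and acyclicity of the $f$-DAG and unequal edge probabilities only alter the constants $\alpha,q$.

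Working on $\E$, I would fix for each $k$ a private entry $(i_k,j_k)$ with $i_k\ne i$ and $j_k\ne j$ (the two endpoints of the perturbed edge), which the two reserved private features make possible. If $\Lambda$ \emph{adds} the edge $(i,j)$ (a former non-edge, hence covered by no factor, $K_{ij}=\varnothing$), then $\{(i_k,j_k)\}_{k=1}^m\cup\{(i,j)\}$ is a fooling set of size $m+1$: the within-pairs fool because privacy forces the cross terms $(i_k,j_l),(i_l,j_k)$ to be $0$, and for the pair $(i,j),(i_k,j_k)$ the cross terms $(i,j_k)$ and $(i_k,j)$ are $1$ iff $i\in I_k$ resp.\ $j\in J_k$, never both since $(i,j)$ is uncovered. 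If $\Lambda$ \emph{removes} the edge $(i,j)$, I pick any factor $k_0$ covering it ($i\in I_{k_0}, j\in J_{k_0}$) and \emph{replace} its private entry $(i_{k_0},j_{k_0})$ by the two entries $(i,j_{k_0})$ and $(i_{k_0},j)$ straddling the hole, giving the set $\{(i_k,j_k)\}_{k\ne k_0}\cup\{(i,j_{k_0}),(i_{k_0},j)\}$ of size $m+1$; these two straddling entries fool each other because the unique rectangle through both would contain the now-deleted $(i,j)$, and one checks exactly as before that they fool every remaining $(i_k,j_k)$. In either case $\rankb{\Lambda(G)}\ge m+1 > m \ge \rankb{G}$ \emph{for every} admissible choice of perturbed edge, so conditioning contributes nothing and $\Prob(\rankb{\Lambda(G)}>\rankb{G})\ge\Prob(\E)\ge 1-\alpha q^d$.

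I expect the removal case to be the main obstacle. A deleted $1$ can only raise the Boolean rank through the hole it creates, so the replacement must be engineered so that the deletion simultaneously fools a \emph{new} pair while the deletion does not accidentally repair any other cross term; this is exactly why I reserve two private features per factor, in order to dodge the endpoints $i$ and $j$ when selecting the $(i_k,j_k)$. The probabilistic estimate itself is routine concentration; the conceptual heart is the fooling-set certificate and the non-monotonicity of Boolean rank it exploits, which is what makes edge deletion (not merely insertion) increase the rank.
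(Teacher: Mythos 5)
Your proof is correct, but it takes a genuinely different route from the paper's. The paper works with column patterns: it shows that with high probability (i) all $2^m$ Boolean unions of the row-supports of $\matr{U}$ are pairwise at Hamming distance at least $2$ (their Lemma on random subset non-overlap) and (ii) every pattern in $\{0,1\}^m$ occurs at least twice among the columns of $\matr{V}$ (their coverage lemma); then $\mathcal{A}(G)$ has exactly $2^m$ distinct columns, forcing $\rankb{G}=m$, and flipping any single entry creates a $(2^m{+}1)$-st distinct column, forcing $\rankb{\Lambda(G)}=m+1$, since $r$ Boolean factors can generate at most $2^r$ distinct column patterns. You instead view Boolean rank as a biclique (rectangle) cover number and exhibit a fooling set of size $m+1$ in the perturbed matrix, certified by two private parents and two private children per factor; I checked the fooling verifications in both the addition and the deletion case (including the straddling pair $(i,j_{k_0})$, $(i_{k_0},j)$, whose fooling cross term is precisely the deleted entry) and they go through, with the event being uniform over the choice of perturbed entry exactly as in the paper. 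Your route buys substantially better constants: your failure event is a union over $2m$ binomial tails with $q \approx 1-p(1-p)^{m-1}$ and prefactor linear in $m$, whereas the paper's coverage condition effectively demands $d \gtrsim 2^m$ and its bound carries prefactors of order $2^{2m}\gamma + \delta$ with $\delta \sim 2^m$; conversely, the paper's argument pins down the ranks exactly ($m$ and $m+1$), a slightly stronger conclusion than your one-sided bounds, which nonetheless suffice for the theorem. One cosmetic slip: the binomial tail $\mathbb{P}(\mathrm{Bin}(d,\rho)\le 1)=(1-\rho)^d+d\rho(1-\rho)^{d-1}$ has a prefactor growing linearly in $d$, so it cannot be absorbed into a constant $\alpha$ while keeping $q=1-\rho$ exactly; you should instead fold the linear factor into a slightly larger $q<1$, which is exactly the maneuver the paper performs at the end of its own proof.
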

Precise definitions of the random graph model and proof appear in Appendix~\ref{sec:app:bool}. This result, analogous to the upper semicontinuity of the matrix rank, suggests that in noisy settings, the skeleton of graphs in $\mathcal{D}_d^m$ may be more easily recoverable compared to arbitrary DAGs. We verified those results with simulations (Figure~\ref{fig:dag-count}B) for small graphs using a mixed integer linear programming approach~\cite{kovacs2021binary}. We performed a larger-scale analysis by randomly perturbing a fraction $q$ of the edges in the graph and reporting the resulting matrix rank\footnote{Using the matrix rank is only a heuristic approximation of the quantitative increase in complexity of the underlying matrix, because in general it provides neither an upper nor a lower bound on the Boolean rank~\cite{10.1145/3522594}.} of the adjacency matrix for larger graphs (Figure~\ref{fig:dag-count}C). 

\begin{figure}[t]
    \centering
    \includegraphics[width=\textwidth]{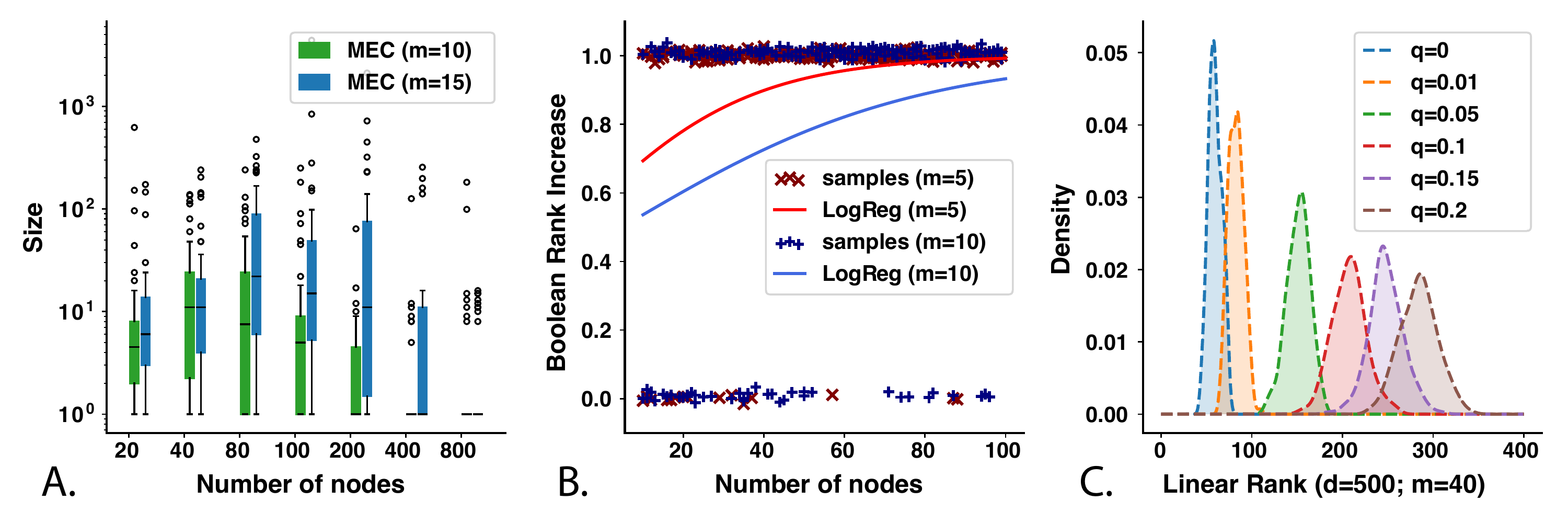}
    \caption{Properties of simulated Erd\H{o}s-R\'enyi random $f$-DiGraphs. % with $p=\nicefrac{1}{2}$. 
    (A) Size of MEC of the simulated half-square graph with varying $m$ and $d$. (B) Probability of increase of Boolean rank after edge perturbation (Theorem~\ref{thm:rank}). A single point denotes the result of an experiment, the solid line is the probability estimated by logistic regression. (C) Matrix rank change after multiple random edge perturbations.\label{fig:dag-count}}
    % \vspace{-0.3cm}
\end{figure}

\subsection{Characterizing Acyclicity of Factor Graphs}
% characterize acyclicity to make the graph suitable for causal inference
% graphical discussion of identifiablity

We start by relating the acyclicity of a $f$-DiGraph with the one of its induced half-squares. A simple graphical argument is enough to show that acyclicity for a $f$-DiGraph need only be enforced on the smaller of its half-square graphs. 
\begin{restatable}[Induced acyclicity]{lemma}{acyclic}
\label{prop:acyclic}
Let $G = (V, F, E)$ be an $f$-DiGraph. Then,
\begin{align}
    G_f \text{~is acyclic~} \Leftrightarrow G = G_f^2[V] \text{~is acyclic~} \Leftrightarrow G_f^2[F] \text{~is acyclic}.
\end{align} 
\end{restatable}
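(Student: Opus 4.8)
The plan is to establish a direct correspondence between directed cycles in $G_f$ and directed cycles in each of its half-squares, so that all three equivalences follow at once by contraposition (acyclic $=$ no directed cycle). The structural fact I would lean on is that $G_f$ is bipartite with every edge running between $V$ and $F$, so every directed walk strictly alternates between variable and factor nodes. In particular, every directed cycle in $G_f$ has even length and, after a cyclic rotation of its starting vertex, can be written as
\begin{align}
v_{i_1} \to f_{j_1} \to v_{i_2} \to f_{j_2} \to \cdots \to v_{i_\ell} \to f_{j_\ell} \to v_{i_1},
\end{align}
visiting $\ell$ variable nodes and $\ell$ factor nodes.

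First I would prove that $G_f$ contains a directed cycle if and only if $G = G_f^2[V]$ does. For the forward direction, I take the displayed cycle and project it onto the variable nodes: each length-two segment $v_{i_p} \to f_{j_p} \to v_{i_{p+1}}$ certifies a half-square edge $v_{i_p} \to v_{i_{p+1}}$ by definition of $G_f^2[V]$, so the projection $v_{i_1} \to v_{i_2} \to \cdots \to v_{i_\ell} \to v_{i_1}$ is a closed directed walk of positive length in $G$. For the converse, I take a cycle in $G$ and lift each edge $v_{i_p} \to v_{i_{p+1}}$ back to some length-two path $v_{i_p} \to f \to v_{i_{p+1}}$ in $G_f$ that witnesses it, then concatenate to obtain a closed directed walk in $G_f$. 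In both directions I invoke the elementary fact that a finite directed graph containing a closed walk of positive length contains a directed cycle (extract a shortest closed sub-walk, which is a simple cycle), upgrading "closed walk" to "cycle." Running the identical argument with the roles of $V$ and $F$ exchanged, and the cycle rotated to begin at a factor node, shows $G_f$ has a cycle iff $G_f^2[F]$ does; chaining both equivalences through $G_f$ yields the three-way statement.

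The one genuine subtlety, which I expect to be the main obstacle, is the degenerate case $\ell = 1$: a $2$-cycle $v \to f \to v$ in $G_f$ projects to a self-loop at $v$ rather than to a length-$\geq 2$ cycle. To keep the correspondence exact I would adopt the convention that the half-square includes such self-loops, which is consistent with the definition since a node sits at directed distance exactly two from itself precisely when it lies on a $2$-cycle of $G_f$; a self-loop then correctly counts as a cycle on the variable (and factor) side. Without this convention the equivalence would fail on $2$-cycles, so the bookkeeping around alternation parity and self-loops is exactly where care is needed; the remaining work is routine path concatenation.

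As a cross-check, I would also record the algebraic route, which makes the $G_f^2[V] \Leftrightarrow G_f^2[F]$ equivalence especially transparent. Writing the adjacency matrix of $G_f$ in block form with $V$-to-$F$ block $\matr{U}$ and $F$-to-$V$ block $\matr{V}$ as in Proposition~\ref{prop:boolrank}, its square is block-diagonal with diagonal blocks $\matr{U}\matr{V}$ and $\matr{V}\matr{U}$, whose supports are exactly the adjacency matrices of $G_f^2[V]$ and $G_f^2[F]$. Since a nonnegative matrix is nilpotent iff its associated graph is acyclic, and since a matrix is nilpotent iff its square is, acyclicity of $G_f$ is equivalent to nilpotency of both blocks; the classical fact that $\matr{U}\matr{V}$ and $\matr{V}\matr{U}$ share the same nonzero spectrum then forces one block to be nilpotent exactly when the other is, recovering the equivalence between the two half-squares directly.
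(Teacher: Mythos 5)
Your proposal is correct and takes essentially the same route as the paper's own proof: both argue by contraposition, projecting a cycle of $G_f$ onto a closed walk through its variable nodes and lifting each edge of a cycle in $G$ back to a witnessing length-two path in $G_f$, handling the factor side by symmetry, and both record the same algebraic alternative via nilpotency of $\matr{U}\matr{V}$ versus $\matr{V}\matr{U}$. Your additional bookkeeping --- treating the $\ell = 1$ case where a $2$-cycle $v \to f \to v$ projects to a self-loop (a convention consistent with the paper, which elsewhere explicitly regards self-loops as edges of the induced half-square), and extracting a simple cycle from a closed walk --- only makes explicit steps that the paper's proof leaves implicit.
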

The proof can be found in Appendix~\ref{sec:app:acycl}. We further note that the matrix $\matr{U}\matr{V}$ ($\matr{V}\matr{U}$, resp.) counts the number of paths between two nodes in $V$ ($F$, resp.) and is thus a valid (weighted) adjacency matrix for $G$ ($G_f^2[F]$, resp.). As a result, we may characterize acyclicity by applying the tr-exp penalty to the matrix $\matr{VU}$ in time $O(m^3 + m^2d)$ compared to the $O(d^3 + md^2)$ steps needed for evaluating the penalty on $\matr{UV}$. Alternatively, we may use the spectral radius of the adjacency matrix as an acyclicity score~\cite{lee2019scaling} that can be approximately computed in $O(Tmd)$ steps, with $T$ iterations of the power method between each gradient step (details in Appendix~\ref{sec:app:acycl}). The resulting computational gains are showcased in Figure~\ref{fig:runtime}A. For small $m$, both variants have a similar runtime.

\begin{figure}[t]
    \centering
    \includegraphics[width=\textwidth]{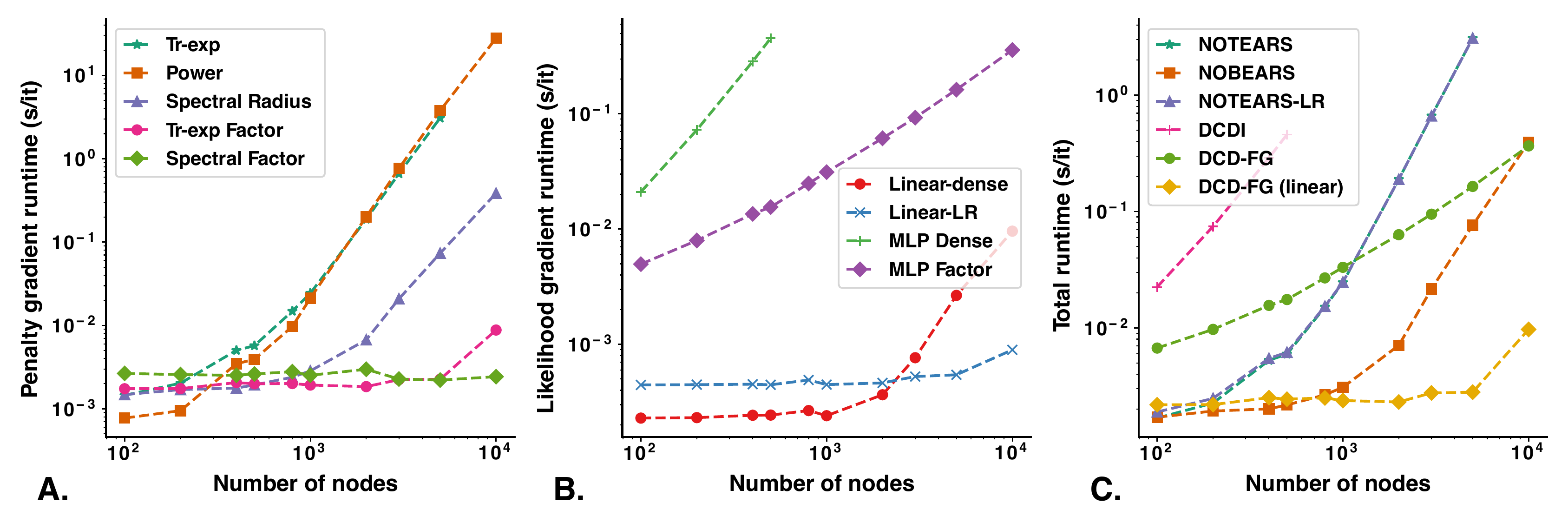}
    % \vspace{-0.2cm}
    \caption{Runtime analysis. Time for gradient calculation of likelihood (A), penalty (B) or their sum (C) for different variants of optimization-based DAG inference method (on one NVIDIA Tesla T4 GPU with 15Gb of RAM). We selected a batch size of $128$ datapoints and a number of $m=40$ factors for all experiments. If a value is non-reported, a memory error was raised at runtime. NOTEARS and NOTEARS-LR have almost identical runtime in this analysis.}
    \label{fig:runtime}
    % \vspace{-0.3cm}
\end{figure}

\section{Differentiable Discovery of Causal Factor Graphs}
\label{sec:DDCFG}

To define a CGM, we couple the $f$-DiGraph with a likelihood model. We follow recent work that partitions the parameter space into conditional distribution parameters $\Theta$, and parameters $\Phi$ encoding the causal graph~\cite{brouillarddcdi}. In particular, let us assume we have at our disposal a parameterized distribution $\matr{M}(\Phi) = [\matr{U}(\Phi), \matr{V}(\Phi)]$ over adjacency matrices of $f$-DiGraphs. The score function, assuming perfect interventions, is defined as:
\begin{align}
    \mathcal{S}(\Phi, \Theta) = \mathbb{E}_{\matr{M}' \sim \matr{M}(\Phi)} \left[\sum_{k=1}^K \mathbb{E}_{X \sim P_{\text{data}}^{(k)}} \sum_{j \nin \I_k}\log p^j_\Theta(X_j; \matr{M}'_j,  X_{-j})\right] - \lambda \norm{\mathbb{E}\left[\matr{M}(\Phi)\right]}_1 ,
\end{align}
where $P_{\text{data}}^{(k)}$ denotes the distribution of data points $X$ under regime $k$, $p^j_\Theta$ denotes a density model for feature $X_j$, conditioned on all other features $X_{-j}$ that are parents of the feature $j$ according to the sampled matrix $\matr{M}'_{j}$. We optimize the score function $\mathcal{S}$ under an acyclicity constraint,
\begin{align}
\label{eq:dcdi:gen}
    \max_{\Phi, \Theta} \mathcal{S}(\Phi, \Theta) \text{~such that~} \mathcal{C}(\mathbb{E}[\matr{M}(\Phi)]) = 0,
\end{align}
where $\mathcal{C}$ may correspond to either the spectral radius, or the tr-exp characterization of acyclicity. Numerically, first-order optimization techniques with reparameterized gradients and the augmented Lagrangian method are used to solve problem~\eqref{eq:dcdi:gen}. We outline here some key features of DCD-FG, and provide the complete implementation details in Appendix~\ref{sec:app:impl}. 

\paragraph{Differentiable Sampling of Factor Graphs}

A first important challenge specific to our work is constructing a density $\matr{M}(\Phi)$ over $f$-DiGraphs. The DCDI framework~\cite{brouillarddcdi}, and a few earlier methods~\cite{ng2019masked,sam}, parameterize the set of adjacency matrices with entry-wise Gumbel-sigmoid~\cite{maddison2016concrete} samples, and zeros in the diagonal entries. Naively applying this parameterization for sampling matrices $\matr{M} = [\matr{U}, \matr{V}]$ causes the induced feature graphs to have a large number of self-loops, i.e., edges of the form $(v, v)$ that we found to be detrimental to the performance of the model. 
To circumvent this issue, we propose an alternative model in which the matrices $\matr{U}$ and $\matr{V}$ are correlated. More precisely, for $\matr{W} \in \{0, -1, 1\}^{d \times m}$ sampled according to a Gumbel-softmax distribution \cite{maddison2016concrete}, the entries of $\matr{U}$ and $\matr{V}$ are constructed from $\matr{W}$ as $\matr{U}_{ij} = \mathds{1}\{\matr{W}_{ij} = 1\}$ and $\matr{V}_{ji} = \mathds{1}\{\matr{W}_{ij} = -1\}$ for $i \in [d], j \in [m]$. Because the entries $\matr{U}_{ij}$ and $\matr{V}_{ji}$ may never be both equal to 1, there are no self-loops in the induced half-square graph. 

\paragraph{A hybrid likelihood model}
A second important challenge is to propose flexible density models $p_\Theta$ that have reasonable runtime as well as enough capacity for practical purposes.
%Because neither the low-rank linear model proposed by~\cite{lowrankdags} nor the multi-layer perceptron model in DCDI~\cite{brouillarddcdi} meet these requirements, 
For this, we further exploit the semantics of the factor graph by introducing deterministic factor variables $h_f$ at each factor node $f \in F$. These variables are calculated as the output of a multi-layer perceptron on the input variables of each factor defined by the matrix $\matr{U}$, $h_f = \textrm{MLP}(\matr{U}_{:, f} \circ X; \Theta_f)$, for neural networks parameters $\Theta_f$. Then, the conditional distribution of each node depends linearly on its parent factors defined by the matrix $\matr{V}$, $X_j \sim \textrm{Normal}\left(\alpha_j^\top (\matr{V}_{:, j}\circ h) + \beta_j, \sigma^2_j \right)$ for parameters $\alpha_j \in \mathbb{R}^m$, $\beta_j \in \mathbb{R}$, $\sigma_j > 0$. The resulting computational gains are highlighted in Figure~\ref{fig:runtime}B,C. Although we present here the specific case of a Gaussian likelihood model, the same strategy may be adopted for more complex distributional models such as sigmoidal flows~\cite{brouillarddcdi}. 
\section{Experiments}
\label{sec:simu}

We tested DCD-FG on both synthetic and real-world data sets with $d=100$ to $d=1{,}000$, and a large number of observations ($n \geq 50{,}000$). In this large-scale setting, many state-of-the-art causal discovery methods fail to terminate (DCDI and IGSP). Therefore, we compared DCD-FG to NOTEARS~\cite{zheng2018dags}, its additive non-linear variant NOBEARS~\cite{lee2019scaling} and its linear low-rank variant NOTEARS-LR~\cite{lowrankdags}. In order to have a baseline that is external to the NOTEARS framework, we applied IGSP (after feature aggregation via clustering with different resolutions when necessary for a reasonable runtime). For every model, we performed a hyperparameter search using a goodness of fit metric on a small validation set. We provide further details on all experiments, including the grids used for hyperparameter search, as well as supplementary experiments, in Appendix~\ref{sec:app:exp}.

\subsection{Gaussian Structural Causal Models}
\label{sec:synth}
We consider synthetic data sets with perfect interventions and known targets. Each data set has $d=100$ nodes and $n=50{,}000$ observations, sampled from interventional distributions governed by either a linear causal mechanism~\cite{ut_igsp} or a nonlinear causal mechanism with additive noise (NN)~\cite{kalainathan2019causal}. Graphs are sampled from an Erd\H{o}s-R\'enyi random directed graph model with $m=10$ factors. A total of $K=100$ interventions were performed, each sampling up to $3$ target nodes. Datapoints from $20$ interventional regimes were held-out for model evaluation.

\begin{figure}[ht]
    \centering
    \includegraphics[width=0.7\textwidth]{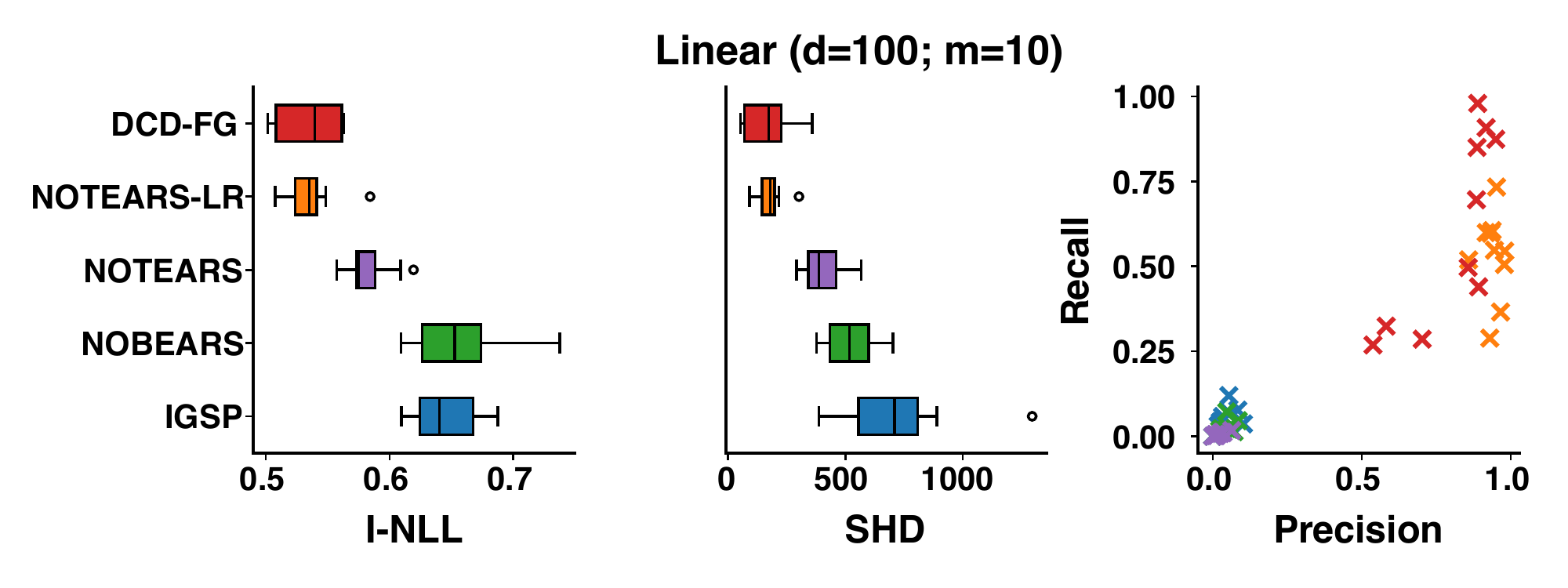}\\~\\
    \includegraphics[width=0.7\textwidth]{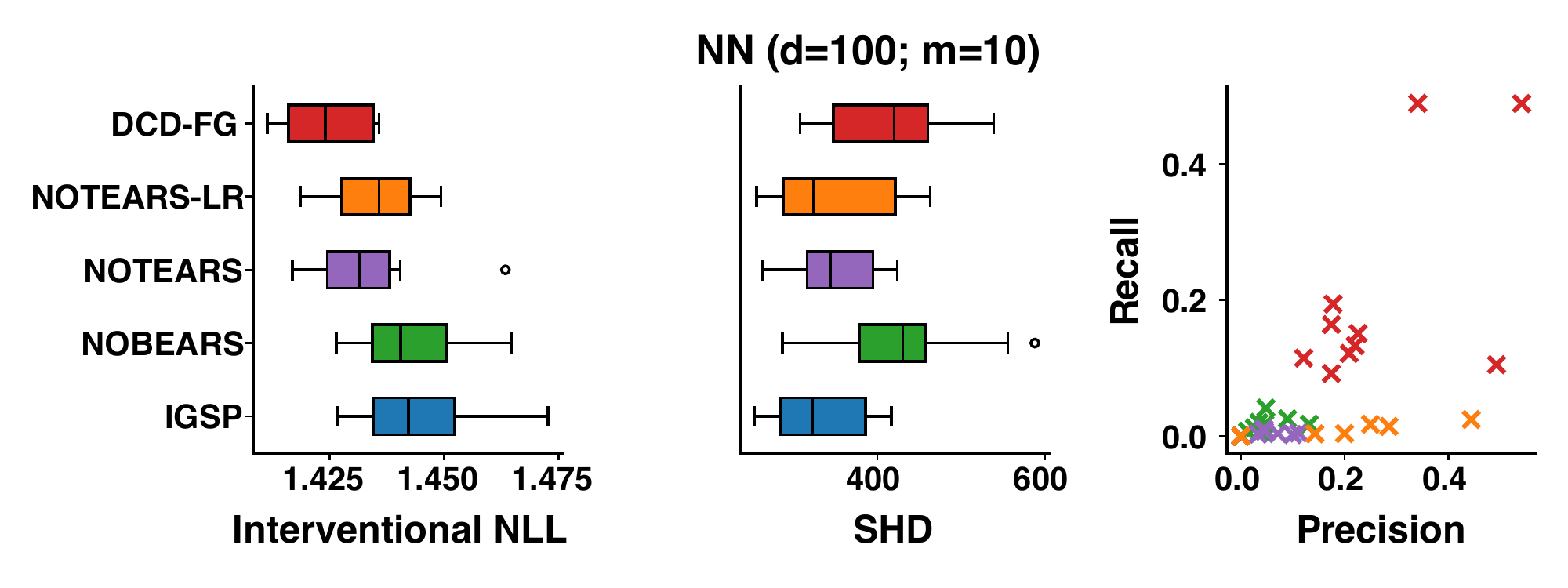}
    % \vspace{-0.3cm}
    \caption{Results on simulated Gaussian structural causal models (Section~\ref{sec:synth}).}
    \label{fig:gaussian}
    % \vspace{-0.5cm}
\end{figure}

We assessed the performance of each method by the negative log-likelihood of datapoints from unseen interventions (Interventional NLL)~\cite{gentzel2019case}, as well as two metrics comparing the estimated graph to the ground truth graphs: the structural Hamming distance (SHD), and the precision and recall of edge detection. We report results for 10 randomly generated graphs and data sets in Figure~\ref{fig:gaussian}. NOTEARS-LR and DCD-FG both outperformed all methods for all metrics on the linear dataset ($p < 0.01$, Wilcoxon signed-rank test), showing that both effectively exploit the low-rank structure of the causal graph. On the non-linear (NN) dataset, DCD-FG outperformed all methods in terms of interventional NLL, as well as recall and F1-score (combining precision and recall, Appendix~\ref{sec:app:exp}) ($p < 0.01$, Wilcoxon signed-rank test). DCD-FG has high SHD but we attribute this to the fact that all methods besides DCD-FG discover very sparse graphs.
\subsection{Genetic Interventions and Gene Expression Data}
\label{sec:scrna}

As an application to real world data, we present an experiment focused on causal learning of gene regulatory networks from gene expression data with genetic interventions, a central problem in modern molecular biology~\cite{segal2005learning,friedman2000using,pe2001inferring}. Although this particular task has been studied by computational biologists for over two decades, there have been substantial experimental advances in the last few years. In particular, a method called Perturb-Seq now allows us to perform interventions targeting hundreds or thousands of genes and measure the effect on full gene expression profiles in hundreds of thousands of single cells using single cell RNA-seq~\cite{dixit2016perturb}. Surprisingly, however, little to no causal learning work has focused on these advanced datasets. A few notable exceptions, such as \cite{yang2018characterizing}, focused on early data for which the benchmarked methods were tractable ($d=24$ genes). 

We focus on a recent Perturb-CITE-seq experiment~\cite{frangieh2021multimodal} that contains expression profiles from $218{,}331$ melanoma (cancer) cells, after interventions targeting each of $249$ genes. Each measurement from a single-cell combines the identity of the intervention (target gene) and a count vector where each entry is the expression level of each gene in the genome. Because of experimental limitations~\cite{Grun2014}, we observe signal only for a subset of several thousand genes (here we selected $d= 1{,}000$ genes) out of the approximately $20{,}000$ genes in the genome. This dataset includes patient-derived melanoma cells with same genetic interventions but exposed to three conditions: co-culture with T cells derived from the patient's tumor ($73{,}114$ cells) (which can recognize and kill melanoma cells), interferon (IFN)-$\gamma$ treatment ($87{,}590$ cells) and control ($57{,}627$ cells) that we treat as three separate datasets. The goal of the experiment was to identify gene networks in the melanoma cancer cells that either confer resistance or sensitivity to T cell mediated killing, to identify targets for therapeutic intervention in cancer. For every dataset, we retain cells from 20\% of the interventions as a test set unavailable during training.  

We applied our baseline methods as well as DCD-FG to each of the three datasets. Because we do not have a ground truth causal graph, we use datapoints from held-out interventions to evaluate the models~\cite{gentzel2019case}, reporting both the interventional NLL (I-NLL) and the mean absolute error (I-MAE) across those interventions (Figure~\ref{fig:realdata}).
\begin{figure}[t]
    \centering
    \includegraphics[width=\textwidth]{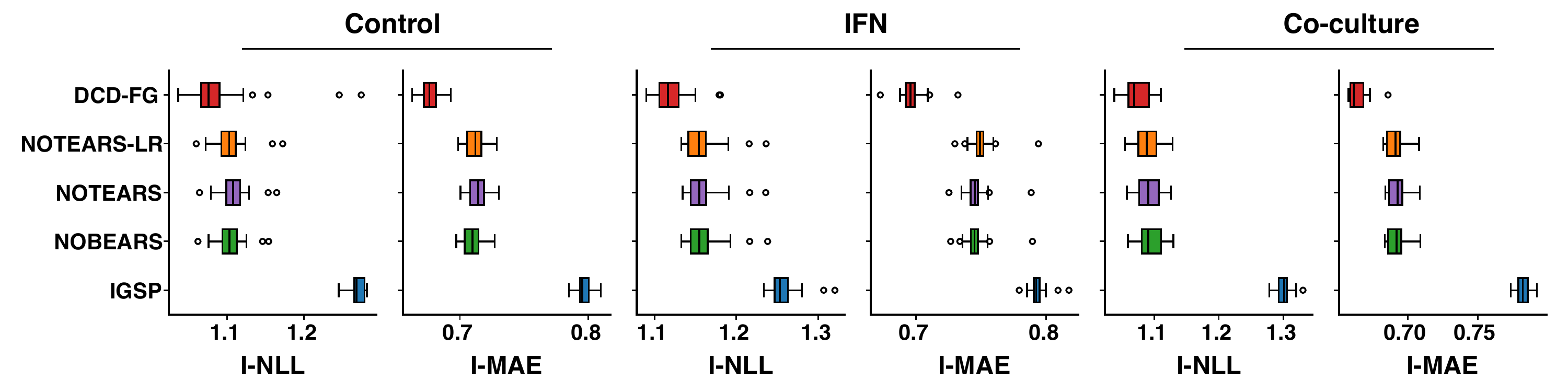}
    % \vspace{-0.5cm}
    \caption{Results on the Perturb-CITE-seq dataset~\cite{frangieh2021multimodal} (Section~\ref{sec:scrna}, lower is better).}
    \label{fig:realdata}
    % \vspace{-0.3cm}
\end{figure}
Note that accurately predicting the outcome of genetic interventions that were not measured experimentally is of high utility to biologists. 

DCD-FG outperformed all variants of the NOTEARS method by a large margin, including NOTEARS-LR, and for all metrics ($p < 0.01$, Wilcoxon signed-rank test). %Seemingly, NOTEARS identified a few hundred edges but most of those may not be generalize to hold-out perturbations. The low-rank variant of NOTEARS outperforms the empty graph, and systematically improves in terms of data fit, as well as interventional likelihood (although by a small margin). The range of the number of edges is usually of a few thousands for this baseline. By comparison, DCD-FG outperforms all baselines by a strong margin, both in terms of data fit as well as prediting held-out interventions. 
In order to diagnose the poor performance of the competing methods, we looked at the number of inferred edges by each method. All variants of NOTEARS identified extremely sparse graphs (less than a hundred edges for NOTEARS and NOBEARS, and a few thousand edges for NOTEARS-LR), which may explain their inability to predict the effect of held-out interventions. Interestingly, IGSP identified hundreds of thousands of edges, a number that was comparable to DCD-FG, but still had poor performance. This suggests that the IGSP-inferred graph did not recapitulate well the causal relationships between genes. 

In particular, we carefully examined the $f$-DAG $G_f$ obtained with the best performing model from our hyperparameter sweep on the IFN-$\gamma$ treated cells. That model has $m=20$ factors, and the half-square $G_f^2[V]$ has 196,303 edges. On average, each module has 194 ingoing edges and 116 outgoing edges. To facilitate visualization, we display the half-square over factors $G_f^2[F]$ in Figure~\ref{fig:dag}. 

\begin{figure}[ht]
    \centering
    % \vspace{-0.3cm}
    \includegraphics[width=\textwidth]{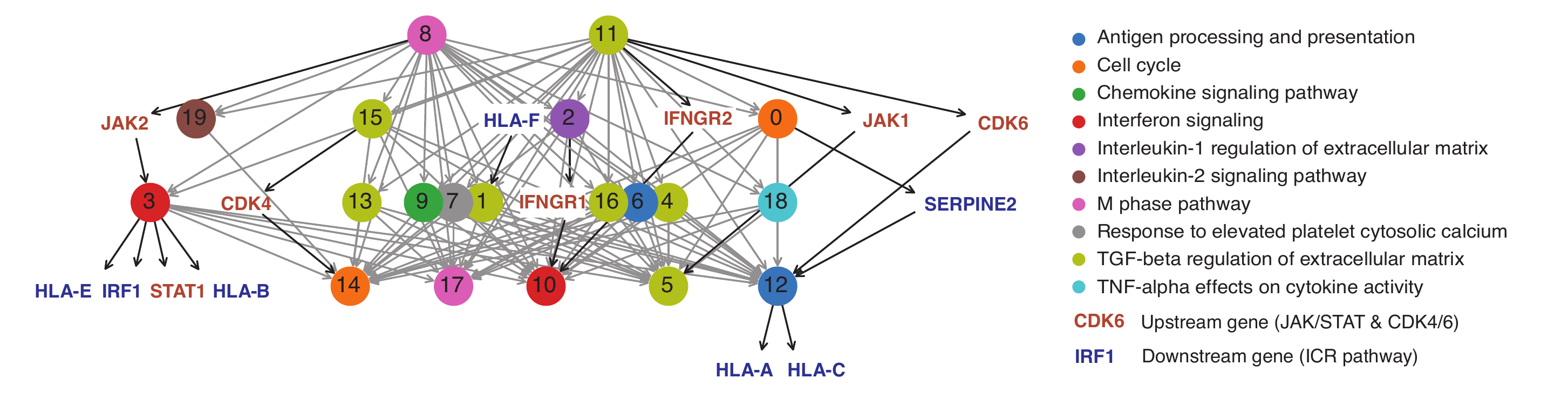}
    % \vspace{-0.8cm}
    \caption{Half-square $G_f^2[F]$ (completed with a few genes) of an $f$-DAG identified by DCD-FG on IFN-$\gamma$ treated malignant cells with interventions. Circles: factors, colored by the gene set enrichment analysis result for its incoming and outgoing genes. Empty nodes: genes, labelled name. Red/blue: genes expected to be up or downstream (resp.) by prior biological knowledge.}
    \label{fig:dag}
    % \vspace{-0.3cm}
\end{figure}

To begin to assess the biological relevance of the graph, we performed two analyses. First, we tested the list of incoming and outgoing genes for each factor for enrichment in genes from known biological processes (via~\cite{chen2013enrichr}). The top hits (node colors in Figure~\ref{fig:dag}) captured many relevant processes in perturbed malignant cells treated by IFN including antigen presentation (needed for recognition by the T cells), multiple innate immune and related signaling pathways (chemokine, interferon, TNF-$\alpha$, and TGF-$\beta$ signaling; all can affect the ability of immune cells to target cancer cells) and stages of the cell cycle. Thus, the graph captured key regulated processes in the response. Second, and more crucially, to highlight key genes in the context of the graph, we displayed them onto the half-square $G_f^2[F]$ based on their strongest link (details in Appendix~\ref{sec:app:exp}). As a proof of concept, we focused on two classes of genes: key known regulators we expect to be positioned upstream, such as the interferon receptors (IFNGR1 and 2) which sense the IFN signal, JAK/STAT, needed to transduce the signal, and CDK4 and 6, which regulate the cell cycle but also repress antigen presentation genes; and others we expect to be downstream, in particular those from an immune cancer resistance pathway we previously discovered in patient tumors (ICR~\cite{jerby2018cancer}; downstream) in Figure~\ref{fig:dag}. Excitingly, while no such information was used to constrain the model, it captured these ordered relations, including a causal path from the interferon receptors to interferon signaling modules, from JAK to interferon signaling to STAT, IRF1 and HLA genes, from CDK4 to the cell cycle, and from CDK6 and the IFNGR1 to HLA genes (antigen presentation, MHCI genes)~(\cite{frangieh2021multimodal, jerby2018cancer} and references therin). %Moreover, the graph also proposes important new relations, with a prominent role to TGF-$\beta$, a pathway implicated in resistance to natural and therapeutically induced anti-tumor immunity, as well as a link between the cell cycle and immune evasion~(\cite{frangieh2021multimodal, jerby2018cancer} and references therin). 
% > Aviv Regev: I really do not know this is not correct; the strong result in Jerby was that two main cell cycle regulators (CDK4 and 6) DO control this process. I think was is missing here is that we do not show the logic of the effect (crudely: positive or negative). If you could color edges this way it would be great, but it may not be doable in this dense mode.
Notably, there are also some connections that may not be borne out biologically, such as the separation of CDK4 / CDK6 to different pathways. Overall, DCD-FG is a promising starting point for deciphering gene regulation at the scale of the whole transcriptome with Perturb-seq data, and predicting the outcome of interventions that were not tested experimentally.

\section{Discussion}
\label{sec:discussion}
We have proposed DCD-FG, a novel approach for large-scale causal discovery that restricts the search space to factor directed acyclic graphs, and efficiently exploits this structure during inference. Our theoretical results suggest that this class of graphs offers statistical benefits under either the faithfulness assumption or under some stochastic edge perturbation model in random graphs. Our numerical experiments show that in important real-world examples, our method outperforms NOTEARS, NOBEARS as well as low-rank variants of those methods. 

Since the publication of NOTEARS~\cite{zheng2018dags}, two manuscripts highlighted that the method's evaluation may be confounded by the design of the simulations~\cite{kaiser2021,reisach2021}. However, those studies exclusively focus on causal discovery from observational data. The recent results obtained by DCDI on interventional data with a more suitable simulation design~\cite{brouillarddcdi}, as well as the results of this manuscript on real data show the potential promise of the overall framework. 

% Talk about papers finding cyclical stuff and / or paper removing the acyclicity penalty
Recently, several other papers identified the acyclicity constraint as a bottleneck for causal discovery learning, and propose to either use the constraint as a soft penalty~\cite{golem}, or discard it from the objective function~\cite{lippe2022efficient} by using a different parameterization of the causal graph learning approach, and interventional data. Although the approach from~\cite{golem} is currently restricted to learning linear causal models from observational data,~\cite{lippe2022efficient} could complement our approach by simplifying the optimization procedure but still explicitly model low-rank interactions.

We presented an application of DCD-FG to a large-scale high-throughput gene expression dataset with genetic perturbations ("Perturb-Seq"). The method had better predictive performance for held-out perturbations than state-of-the-art, and identified both well established relations and new intriguing ones, offering great utility to biologists. Notably, some of the causal relationships may not be accurate, and it is likely that several assumptions of the underlying model may be violated. The biological evaluation and validation of the method will therefore be important to more deeply assess the performance of DCD-FG. 

Future work will explore the specification of the noise model, for which count distributions (potentially as part of a latent variable model) may be more appropriate~\cite{scvi}, as well as the absence of confounding variables such as cell cycle~\cite{frangieh2021multimodal}. Additionally, we plan to investigate extending the framework of DCD-FG to the inference of causal models with feedback loops~\cite{freimer} in order to generate an even more exact and biologically interpretable causal graph. Finally, having a Bayesian alternative of DCD-FG (e.g., based on~\cite{dibs}) would allow scientists to apply those methods for automated experimental design and scientific discovery.

\begin{ack}
We thank Sébastien Lachapelle, Philippe Brouillard, Alexandre Drouin, Chandler Squires and Gonçalo Rui Alves Faria for insightful conversations about causal structure learning problems. We thank Natasa Tagasovska, Stephen Ra, and Kyunghyun Cho for general conversations about causal inference and biology. We also acknowledge Katie Geiger-Schuller, Chris Frangieh, Taka Kudo, Josh Weinstock and Basak Eraslan for discussions about Perturb-seq and the Perturb-CITE-seq dataset. We warmly thank Geoffrey N\'egiar, Natasa Tagasovska, and Tara Chari for their constructive criticisms on draft of this paper.

Disclosures: Romain Lopez and Jan-Christian Huetter are employees of Genentech. Jan-Christian Huetter has equity in Roche. Jonathan Pritchard acknowledges support from grant R01HG008140 from the National Human Genome Research Institute. Aviv Regev is a co-founder and equity holder of Celsius Therapeutics and an equity holder in Immunitas. She was an SAB member of ThermoFisher Scientific, Syros Pharmaceuticals, Neogene Therapeutics, and Asimov until July 31st, 2020; she has been an employee of Genentech since August 1st, 2020, and has equity in Roche. 
\end{ack}

\newpage 

\bibliographystyle{unsrt}
{\footnotesize
\bibliography{biblio}}

\newpage

%%%%%%%%%%%%%%%%%%%%%%%%%%%%%%%%%%%%%%%%%%%%%%%%%%%%%%%%%%%%
\section*{Checklist}

\begin{enumerate}

\item For all authors...
\begin{enumerate}
  \item Do the main claims made in the abstract and introduction accurately reflect the paper's contributions and scope?
    \answerYes{}
  \item Did you describe the limitations of your work?
    \answerYes{(Discussion section)}
  \item Did you discuss any potential negative societal impacts of your work?
    \answerYes{(Discussion section)} 
  \item Have you read the ethics review guidelines and ensured that your paper conforms to them?
    \answerYes{}
\end{enumerate}

\item If you are including theoretical results...
\begin{enumerate}
  \item Did you state the full set of assumptions of all theoretical results?
    \answerYes{(Appendix)}
        \item Did you include complete proofs of all theoretical results?
    \answerYes{(Appendix)} 
\end{enumerate}

\item If you ran experiments...
\begin{enumerate}
  \item Did you include the code, data, and instructions needed to reproduce the main experimental results (either in the supplemental material or as a URL)?
    \answerYes{}(Appendix) 
  \item Did you specify all the training details (e.g., data splits, hyperparameters, how they were chosen)?
    \answerYes{(Appendix)} 
        \item Did you report error bars (e.g., with respect to the random seed after running experiments multiple times)?
    \answerYes{(Box plots in main figure)}
        \item Did you include the total amount of compute and the type of resources used (e.g., type of GPUs, internal cluster, or cloud provider)?
    \answerYes{(Appendix)} 
\end{enumerate}

\item If you are using existing assets (e.g., code, data, models) or curating/releasing new assets...
\begin{enumerate}
  \item If your work uses existing assets, did you cite the creators?
    \answerYes{(Appendix)}
  \item Did you mention the license of the assets?
    \answerYes{(Appendix)}
  \item Did you include any new assets either in the supplemental material or as a URL?
    \answerYes{(Appendix)} 
  \item Did you discuss whether and how consent was obtained from people whose data you're using/curating?
    \answerNA{}
  \item Did you discuss whether the data you are using/curating contains personally identifiable information or offensive content?
    \answerNA{}
\end{enumerate}

\item If you used crowdsourcing or conducted research with human subjects...
\begin{enumerate}
  \item Did you include the full text of instructions given to participants and screenshots, if applicable?
    \answerNA{}
  \item Did you describe any potential participant risks, with links to Institutional Review Board (IRB) approvals, if applicable?
    \answerNA{}
  \item Did you include the estimated hourly wage paid to participants and the total amount spent on participant compensation?
    \answerNA{}
\end{enumerate}
\end{enumerate}
\newpage

%%%%%%%%%%%%%%%%%%%%%%%%%%%%%%%%%%%%%%%%%%%%%%%%%%%%%%%%%%%%

\appendix
\part*{Appendices}

In Appendix~\ref{sec:app:fdags}, we present definitions for $f$-DiGraphs and $f$-DAGs, as well as general properties of these graphs. In particular, we describe the relationship between adjacency matrices of these graphs and low-rank matrices. In Appendix~\ref{sec:app:ident}, we discuss the concept of identifiability of $f$-DAGs and Markov Equivalence Classes (MEC). In particular, we show that for a set of fixed factors $F$ and a growing number of variable nodes, random $f$-DAGs can be identified from observational data. In Appendix~\ref{sec:app:bool}, we prove our main theoretical result on Boolean-rank instability under edge perturbation. In Appendix~\ref{sec:app:acycl}, we discuss the time complexity of calculating differentiable acyclicity scores in the case of $f$-DAGs. In Appendix~\ref{sec:app:impl}, we present the implementation details for DCD-FG. In Appendix~\ref{sec:app:exp}, we provide details on our numerical experiments. 

\section{Factor Directed Acyclic Graphs}
\label{sec:app:fdags}

In this section, we present general definitions and properties of Factor Directed Graphs.

\subsection{Definitions}

Let $d\in \mathbb{N}$ be the number of feature vertices and $m \in \mathbb{N}$ be the number of factor vertices.  We start with a few definitions.
 
\begin{definition}(Factor Directed Graph)
Let $V = \{v_1, \ldots, v_d\}$ denote the set of \emph{feature vertices} (which we also call \emph{feature nodes}, \emph{variable vertices}, or \emph{variable nodes}), and $F = \{f_1, \ldots, f_m\}$ the set of \emph{factor vertices} (or \emph{factor nodes}). Let $E$ be a set of directed edges, such that there are no edges between two nodes of same type (factors or features). We define a \emph{Factor Directed Graph} ($f$-DiGraph) as the bipartite graph $G_f = (V, F, E)$.
\end{definition}

\begin{definition}(Set of factor graphs with $m$ modules and $d$ nodes)
The subset of factor graphs $\mathbb{G}^m_d$ with $d$ variables and $m$ factors is defined as:
\begin{align}
    \mathbb{G}^m_d = \left\{G_f \mid G_f = (V, F, E) \text{~is a factor directed graph and~} |V| = d, |F| = m\right\}.
\end{align}
\end{definition}

\begin{definition}(Half-square graphs and induced feature graph)
$G_f$ canonically induces two half-square graphs $G_f^2[V]$ and $G_f^2[F]$. $G_f^2[V]$ is defined as the graph on $V$ with an edge between two vertices of $V$ if there is a path of length exactly two between these vertices in $G_f$. $G_f^2[F]$ is defined similarly on the nodes $F$. We specifically write $G = G_f^2[V]$ and refer to it as the feature graph (as opposed to the factor graph $G_f$).
\end{definition}

\begin{definition}(Set of feature graphs with $m$ modules and $d$ nodes)
The set of feature graphs $\G^m_d$ formed from half-square graphs with $d$ variables and $m$ factors is defined as:
\begin{align}
    \G^m_d = \left\{G = G_f^2[V] \mid G_f \in \G^m_d\right\}.
\end{align}
\end{definition}

\subsection{General properties}
We start by outlining some general properties of $f$-DiGraphs.
First, we observe that every graph may be written as the feature graph of a bipartite graph.
\begin{prop}(Representation)
Let $\G$ denote the set of all feature graphs, i.e., directed graphs on $V$. Then we have
% \begin{align}
%     \forall m \in \mathbb{N}, \quad \G^m_d \subset ,
% \end{align}
% and
\begin{align}
    \G = \bigcup_{m = 1}^\infty\G^m_d.
\end{align}
\end{prop}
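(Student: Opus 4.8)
The plan is to establish the set equality by proving the two inclusions separately, with one direction immediate from the definitions and the other carrying all the content.

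First I would dispatch the inclusion $\bigcup_{m=1}^\infty \G^m_d \subseteq \G$. By definition, any member of $\G^m_d$ is a half-square graph $G_f^2[V]$ for some $f$-DiGraph $G_f = (V, F, E)$, and $G_f^2[V]$ is by construction a directed graph on the vertex set $V$. Hence every such graph lies in $\G$, the set of all directed graphs on $V$, and the union is contained in $\G$.

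The substantive direction is $\G \subseteq \bigcup_{m=1}^\infty \G^m_d$: I must show that every directed graph $G$ on $V$ is realizable as a feature half-square. I would argue this through Proposition~\ref{prop:boolrank}. The adjacency matrix $\mathcal{A}(G) \in \{0,1\}^{d \times d}$ of any directed graph is a Boolean matrix, and every Boolean matrix admits a finite Boolean rank $r$ (trivially $r \le d$, since the nonzero entries already furnish an explicit factorization). By the converse statement of Proposition~\ref{prop:boolrank}, any adjacency matrix of Boolean rank at most $m$ is the half-square of an $f$-DiGraph with $m$ factors, so $G \in \G^r_d$ and hence lies in the union. Concretely, one may exhibit the factorization by hand: introduce one factor $f_e$ per edge $e = (v_i, v_j) \in E(G)$, together with the two edges $(v_i, f_e)$ and $(f_e, v_j)$. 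A length-two path from $v_i$ to $v_j$ then exists in $G_f$ exactly when $(v_i, v_j) \in E(G)$, so no spurious edges arise and $G_f^2[V] = G$ on the nose.

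The only subtlety, which I expect to be the main (minor) obstacle, is the boundary case where $G$ has no edges, so that its natural Boolean rank is $0$ while the union is indexed from $m = 1$. I would resolve this by first recording the monotonicity $\G^r_d \subseteq \G^{r+1}_d$: appending a factor with no incident edges (or only incoming edges) to a realizing $f$-DiGraph creates no new length-two paths and therefore leaves its feature half-square unchanged, so dummy factors may always be inserted to reach any target count $m \ge r$. In particular the empty graph is realized by a single edgeless factor and thus belongs to $\G^1_d$, which closes the remaining gap and completes the inclusion.
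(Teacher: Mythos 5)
Your proposal is correct and takes essentially the same route as the paper: the paper's proof is exactly your explicit one-factor-per-edge construction (it sets $F = E$ and $E' = \bigcup_{e=(v_1,v_2)\in E}\{(v_1,e),(e,v_2)\}$, yielding a graph in $\mathbb{G}^{|E|}_d$ whose feature half-square is $G$), and your Boolean-rank framing via Proposition~\ref{prop:boolrank} is just a repackaging of that same factorization. Your treatment of the edgeless graph is in fact slightly more careful than the paper's, which tacitly produces a graph with $m = |E| = 0$ factors even though the union is indexed from $m = 1$; your monotonicity observation $\G^r_d \subseteq \G^{r+1}_d$ via dummy factors cleanly closes that corner case.
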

\begin{proof}
By definition, we have that $\G^m_d \subset \G$ for all $m$, which proves the reverse set inclusion. Therefore, we focus on the direct set inclusion. Let $G = (V, E) \in \G$. We set $F = E$ and define the set of edges $E'$ as
\begin{align}
    E' = \bigcup_{e = (v_1, v_2) \in E} \left\{ (v_1, e), (e, v_2) \right\}.
\end{align}
Then, for the bipartite graph $G_f' = (V, F, E') \in \mathbb{G}^{|E|}_d$, we obtain $G'^2_f[V] = G$ as desired.
\end{proof}
Second, we discuss the size of the set of DAGs within $\G^m_d$ compared to $\G$. We start with a lemma for a fixed topological ordering $\sigma$.
\begin{lemma}
Denote by $\mathcal{D}_d$ the subset of acyclic graphs in $\mathcal{G}$ with $d$ nodes, and by $\mathcal{D}^m_d$ the subset of acyclic graphs in $\G_d^m$. For $\sigma$, a permutation of $\{v_1, \ldots, v_d\}$, we denote by $\mathcal{D}^m_d(\sigma)$ (resp. $\mathcal{D}_d(\sigma)$) the subset of graphs in $\mathcal{D}^m_d$ (resp. $\mathcal{D}_d$) for which $\sigma$ is a topological ordering. Then, we have:
\begin{align}
    |\mathcal{D}_d(\sigma)| &= 2 ^{\frac{d(d-1)}{2}}, \\
    |\mathcal{D}^m_d(\sigma)| &\leq  \binom{d+m}{m} 2^{dm}.
\end{align}
\end{lemma}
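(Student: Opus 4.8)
The plan is to handle the two statements separately, since the first is a routine DAG-counting fact and the second is where the factor structure must be exploited. For $\lvert\mathcal D_d(\sigma)\rvert$, I would take $\sigma$ to be the identity ordering $v_1,\dots,v_d$ without loss of generality. A graph on $V$ has $\sigma$ as a topological ordering if and only if every edge $(v_i,v_j)$ satisfies $i<j$. There are exactly $\binom{d}{2}=d(d-1)/2$ such ordered pairs, each of which may be independently included or excluded while keeping $\sigma$ a valid ordering, and distinct edge sets yield distinct graphs; hence $\lvert\mathcal D_d(\sigma)\rvert=2^{d(d-1)/2}$.

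For the bound on $\lvert\mathcal D^m_d(\sigma)\rvert$, the plan is to lift each feature graph to a realizing factor graph, count factor graphs, and then exploit factor-label symmetry. First I would characterize the relevant factor graphs: writing a factor graph through its input sets $I_k=\{v:\matr U_{vk}=1\}$ and output sets $O_k=\{v:\matr V_{kv}=1\}$ for $k\in[m]$, Proposition~\ref{prop:boolrank} shows the feature graph $G$ has an edge $(v_i,v_j)$ exactly when some factor $k$ satisfies $v_i\in I_k$ and $v_j\in O_k$. I claim $G\in\mathcal D^m_d(\sigma)$ if and only if $G$ admits a realizing factor graph in which, for every factor $k$, all inputs precede all outputs under $\sigma$, i.e.\ $\max_\sigma I_k<\min_\sigma O_k$. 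The forward direction uses $\mathcal D^m_d(\sigma)\subseteq\G^m_d$ (so a realizing factor graph exists and is acyclic by Lemma~\ref{prop:acyclic}) together with the observation that $v_i\in I_k,\,v_j\in O_k$ forces the edge $v_i\to v_j$ into $G$, whence $i<j$; the converse is immediate, since then every feature edge points forward in $\sigma$.

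Then I would count. Encoding a valid pair $(I_k,O_k)$ by a cut $c_k\in\{0,\dots,d\}$ and an ``active'' subset $S_k\subseteq\{1,\dots,d\}$ via $I_k=S_k\cap\{1,\dots,c_k\}$ and $O_k=S_k\cap\{c_k+1,\dots,d\}$ covers every valid pair (take $c_k=\max_\sigma I_k$, or $c_k=0$ when $I_k=\emptyset$). This already gives a loose bound $\bigl((d+1)2^d\bigr)^m$, but to reach $\binom{d+m}{m}2^{dm}$ I would use that the feature graph is invariant under permuting factors: it suffices to record the cuts as a non-decreasing sequence $0\le c_1\le\cdots\le c_m\le d$, of which there are $\binom{(d+1)+m-1}{m}=\binom{d+m}{m}$, together with one subset $S_k$ per factor, giving $2^{dm}$ choices. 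Formally I would define a map $\Psi$ from these $\binom{d+m}{m}2^{dm}$ encodings to feature graphs by building the factor graph above and taking its half-square, and verify via the characterization of the previous paragraph (sorting the factors of any realizing factor graph by their cut value) that $\Psi$ is onto $\mathcal D^m_d(\sigma)$; surjectivity yields $\lvert\mathcal D^m_d(\sigma)\rvert\le\binom{d+m}{m}2^{dm}$.

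The main obstacle is the constant in the second bound: the naive per-factor count gives $(d+1)^m2^{dm}$, and obtaining the smaller $\binom{d+m}{m}$ requires correctly quotienting by the factor-relabeling symmetry. This is cleanest to handle not by dividing by $m!$ (the symmetry does not act freely, as factors may coincide or share a cut value) but by exhibiting the explicit surjection $\Psi$ from non-decreasing cut sequences. Care is also needed with degenerate factors (empty $I_k$ or $O_k$) when defining the canonical cut, and with verifying both directions of the inputs-before-outputs characterization so that $\Psi$ lands in, and covers exactly, $\mathcal D^m_d(\sigma)$.
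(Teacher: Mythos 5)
Your proposal is correct and follows essentially the same route as the paper: your non-decreasing cut sequence $0\le c_1\le\cdots\le c_m\le d$ is exactly the paper's interleaving of the $m$ factors into the feature ordering (counted by $\binom{d+m}{m}$), your per-factor subset $S_k=I_k\cup O_k$ yields the same $2^{dm}$ edge-pattern count that the paper obtains via a telescoping sum over incoming-edge patterns, and both arguments conclude with the inequality coming from surjectivity (non-injectivity of the half-square map). Your explicit inputs-before-outputs characterization of realizing factor graphs simply makes rigorous a step the paper leaves implicit, namely that every $G \in \mathcal{D}^m_d(\sigma)$ arises from an acyclic factor graph whose topological ordering is compatible with $\sigma$.
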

\begin{proof}
If $\sigma$ is a valid topological ordering for a graph, then its adjacency matrix is upper triangular (with zero on the diagonal) under this ordering of the rows and columns. 

In the case of $\mathcal{D}_d(\sigma)$, we must simply count the number of binary matrices that are upper triangular with zeros on the diagonal. As such matrices have $\frac{d(d-1)}{2}$ entries potentially taking one of two values, this proves the first part of this proposition. 

The case for $\mathcal{D}^m_d(\sigma)$ is slightly more technical. In this setting, we have $d$ variable nodes, and $m$ factors. We are able to arrange the factors in between the features to obtain all possible topological orderings of factor graphs compatible with $\sigma$,
\begin{align}
    (v_1, \ldots, v_{i_1}, f_1, v_{i_1+1}, \ldots, v_{i_m}, f_m, v_{i_m+1}, \ldots, v_d),
\end{align}
with $1 \leq i_1 < \ldots < i_m \leq d$, where, without loss of generality, we assumed the identity permutation on factors and features separately. We note that there are $\binom{d+m}{m}$ such possible combinations.

For a fixed arrangement, we may now observe that a variable node $v_a$ can only be connected to one or several factor nodes appearing before $v_a$ in the topological ordering, and each factor node $f_j$ may only be connected to one or several feature nodes appearing before $f_j$ in the topological ordering. We define the number of such possible adjacency patterns as $I$.

More precisely, each factor vertex $f_j$ has $2^{i_j}$ potential incoming edge patterns. Also, each feature vertex $v_a$ appearing after $f_j$ gives rise to $2^{j}$ potential edge patterns and there are $i_{j+1} - i_j$ such vertices. Therefore, writing $i_{m+1} = d$, we have:
\begin{align}
    \log_2 I &= \sum_{j=1}^m i_j + \sum_{j=1}^{m}j(i_{j+1} - i_{j})\\
    &= \sum_{j=1}^m i_j + \sum_{j=1}^{m-1} j i_{j+1} + dm - \sum_{j=1}^{m} j i_{j}\\
    &= dm + \sum_{j=1}^m i_j + \sum_{j=2}^{m} (j-1) i_{j} - \sum_{j=2}^{m} j i_{j} - i_1\\
    &= dm + \sum_{j=1}^m i_j - \sum_{j=1}^m i_j + \sum_{j=2}^{m} j i_{j} - \sum_{j=2}^{m} j i_{j} \\
    &= dm.
\end{align}
Because several factor graphs may yield the same half-square graph, we only have an upper bound, as claimed,
\begin{align}
    |\mathcal{D}^m_d(\sigma)| &\leq  \binom{d+m}{m} 2^{dm}. \qedhere
\end{align}
\end{proof}
We can use this lemma to prove the following bound for the cardinality of the entire set of graphs:
\begin{prop}(Cardinality of Directed Acyclic Graphs and half-square of $f$-DAGs)
Denote by $\mathcal{D}_d$ the subset of acyclic graphs in $\mathcal{G}$ with $d$ nodes, and by $\mathcal{D}^m_d$ the subset of acyclic graphs in $\G_d^m$. For $d\geq m$, we have the inequality
\begin{align}
\frac{|\mathcal{D}^m_d|}{|\mathcal{D}_d|} \leq &{} \left(1 + \frac{d}{m}\right)^{m + \nicefrac{1}{2}} \exp\left\{-\frac{d^2}{2} + d\left(m\log 2 + \log(m+d) + \frac{\log 2}{2} -1\right)\right\}\\
\leq &{} \exp \left\{ dm + 4d \log(m+d) - \frac{\log 2}{2} d^2 \right\},
%\frac{|\mathcal{D}^m_d|}{|\mathcal{D}_d|} \leq \exp \left\{ dm + 4d \log(m+d) - \frac{\log 2}{2} d^2 \right\},
\end{align}
and therefore for fixed $m$, $\frac{|\mathcal{D}^m_d|}{|\mathcal{D}_d|} \rightarrow 0$ when $d \rightarrow +\infty$.
\end{prop}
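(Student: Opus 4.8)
The plan is to sandwich the ratio between the per-ordering counts supplied by the preceding lemma, exploiting the cancellation of the $d!$ choices of topological ordering. For the numerator, every acyclic graph admits at least one topological ordering, so $\mathcal{D}^m_d = \bigcup_\sigma \mathcal{D}^m_d(\sigma)$, the union ranging over the $d!$ orderings $\sigma$ of the feature nodes; applying the lemma's bound to each term gives
\begin{align}
|\mathcal{D}^m_d| \;\le\; \sum_\sigma |\mathcal{D}^m_d(\sigma)| \;\le\; d!\binom{d+m}{m}2^{dm}.
\end{align}
For the denominator I would use only the crudest lower bound: for a single fixed ordering $\sigma$, the inclusion $\mathcal{D}_d(\sigma)\subseteq\mathcal{D}_d$ together with the lemma gives $|\mathcal{D}_d|\ge|\mathcal{D}_d(\sigma)| = 2^{d(d-1)/2}$. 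The key point is that on dividing, the $d!$ in the numerator cancels the $d!$ hidden in the binomial, $d!\binom{d+m}{m} = (d+m)!/m!$, so that
\begin{align}
\frac{|\mathcal{D}^m_d|}{|\mathcal{D}_d|} \;\le\; \frac{(d+m)!}{m!}\,2^{\,dm - \binom{d}{2}}.
\end{align}

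I would then turn $(d+m)!/m!$ into the stated closed form via Stirling's formula. Using two-sided (Robbins) bounds — an upper bound on $(d+m)!$ and a lower bound on $m!$ — the polynomial prefactors collapse to a constant at most $1$, and rearranging $(d+m)^{d+m+1/2}m^{-(m+1/2)} = (1+d/m)^{m+1/2}(d+m)^d$ gives $(d+m)!/m! \le (1+d/m)^{m+1/2}(d+m)^d e^{-d}$. Multiplying by $2^{dm-\binom{d}{2}}$ and writing $-\binom{d}{2}\log 2 = -\tfrac{\log 2}{2}d^2 + \tfrac{\log 2}{2}d$ reproduces the first displayed inequality, whose quadratic term is $-\tfrac{\log 2}{2}d^2$. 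The second, looser inequality then follows by elementary estimates valid for $d\ge m\ge 1$: bound $dm\log 2\le dm$, note $\log(1+d/m)\le\log(m+d)$ and $m+\tfrac12\le d+\tfrac12$, discard the negative $-d$ term, and absorb the remaining linear-in-$d$ contributions into $4d\log(m+d)$, leaving $-\tfrac{\log 2}{2}d^2$ intact. For fixed $m$ the exponent $dm+4d\log(m+d)-\tfrac{\log 2}{2}d^2$ is then dominated by its quadratic term and tends to $-\infty$, yielding the claimed convergence to $0$.

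The main obstacle is resisting the temptation to improve the denominator: the estimate works precisely because the single-ordering lower bound $|\mathcal{D}_d|\ge 2^{\binom{d}{2}}$ is paired with the union bound on the numerator so that the $d!$'s cancel exactly, leaving the decay governed by $2^{-\binom{d}{2}}$; a sharper but more involved count of $|\mathcal{D}_d|$ would only complicate the arithmetic without changing the leading order. A secondary care point is the choice of Stirling bounds, since a looser form such as $n!\le e\,n^{n+1/2}e^{-n}$ leaves a stray constant $e/\sqrt{2\pi}>1$ that the stated inequality — which carries no such prefactor — does not permit.
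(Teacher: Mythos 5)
Your proposal is correct and takes essentially the same route as the paper's proof: the same union bound $|\mathcal{D}^m_d| \le \sum_\sigma |\mathcal{D}^m_d(\sigma)| \le d!\binom{d+m}{m}2^{dm}$ over the $d!$ orderings, the same single-ordering lower bound $|\mathcal{D}_d| \ge 2^{d(d-1)/2}$ so that the factorials combine into $(d+m)!/m!$, and the same two-sided (Robbins) Stirling estimate whose correction factor $e^{1/(12(m+d)) - 1/(12m+1)} \le 1$ lets the prefactor be dropped, yielding the exponent $dm + 4d\log(m+d) - \tfrac{\log 2}{2}d^2$ under $d \ge m$. One remark: the quadratic term $-\tfrac{\log 2}{2}d^2$ you derive in the first inequality is also what the paper's own proof obtains, so the $-\tfrac{d^2}{2}$ printed in the proposition's first display (strictly stronger, since $\log 2 < 1$) appears to be a typo that neither your derivation nor the paper's actually supports.
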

\begin{proof}
Let us first provide a (loose but sufficient) lower bound on $|\mathcal{D}_d|$. In the light of the previous result for a fixed permutation, there are at last $2^{\frac{d(d-1)}{2}}$ DAGs in $|\mathcal{D}_d|$. 

Next, we want to establish an upper bound on $|\mathcal{D}^m_d|$. Because for each permutation $\sigma$, there are at most $|\mathcal{D}^m_d(\sigma)|$ distinct graphs, an upper bound on $|\mathcal{D}^m_d|$ is given by $d!\binom{d+m}{m} 2^{dm}$. 

Putting this together, and using a non-asymptotic version of Stirling's formula, we obtain
\begin{align}
    \frac{|\mathcal{D}^m_d|}{|\mathcal{D}_d|} & \leq \frac{(d+m)!}{m!}2^{dm}2^{-\frac{d(d-1)}{2}}\\
    & \leq \frac{\sqrt{2\pi(m+d)}(\nicefrac{m+d}{e})^{m+d}}{\sqrt{2\pi m}(\nicefrac{m}{e})^{m}}2^{dm}2^{-\frac{d(d-1)}{2}}e^{\frac{1}{12(m+d)} - \frac{1}{12m+1}} \\
    & \leq \left(m + d\right)^{m + d +\frac{1}{2}} 2^{dm - \frac{d(d-1)}{2}}\\
    & \leq \exp\left\{\left(m + d +\frac{1}{2}\right) \log (m + d) + dm \log 2 - \frac{\log 2}{2}d^2 + \frac{d\log 2}{2}\right\} \\
    & \leq \exp\left\{dm + 4d\log (m + d) - \frac{\log 2}{2}d^2\right\}
\end{align}
if $d \geq m$.
By comparing coefficients inside the exponential, for fixed $m$, this upper bound tends to $0$ as $d\to\infty$.
\end{proof}

\subsection{Relationship to low-rank matrices}
We now discuss properties of the mapping $\zeta: G_f \mapsto G = G_f^2[V]$ defined over the set of factor graphs. 

First, we remark that, in general, $\zeta$ is not injective, as shown by the following counterexample.
\begin{example}(Induced half-square is not injective) Let $V = \{v_1, v_2, v_3\}$ and $F = \{f_1, f_2\}$. Let $E_1 = \{(v_3, f_2), (v_3, f_1), (v_2, f_2)\}\cup\{(f_1, v_2), (f_2, v_1)\}$ and $E_2 = \{(v_3, f_2), (v_2, f_1)\}\cup\{(f_1, v_1), (f_2, v_1), (f_2, v_2)\}$. For $G_{f,1} =(V, F, E_1)$ and $G_{f,2} = (V, F, E_2)$, we have the identity $G_{f,1}^2[V] = G_{f,2}^2[V]$, in the sense that both graphs have the same set of vertices and edges.
\end{example}
Second, we characterize the image $\zeta(\mathbb{G}_d^m) = \G^m_d$ in terms of the Boolean rank of the associated adjacency matrices. Let us define the adjacency matrix $\mathcal{A}(G_f)$ of a factor graph $G_f \in \mathbb{G}_d^m$. Because the graph is bipartite, up to a permutation of the rows and columns, we may write it in block form,
\begin{align}
    \mathcal{A}(G_f) = \left[ \begin{array}{cc} \matr{0}_{d\times d}& \matr{U}\\ \matr{V}&\matr{0}_{m\times m} \end{array}\right],
    \label{eq:block}
\end{align}
where $\matr{U} \in \R^{d \times m}$ (resp. $\matr{V} \in \R^{m \times d}$) denotes the binary matrix encoding the presence or absence of edges towards factor nodes (resp. variable nodes) according to the edge set $E$. We now relate these two matrices to the adjacency matrix of the half-square graph.
\boolrank*
\begin{proof}
Let $G_f$ be a factor graph, and let $\matr{U}$ and $\matr{V}$ be defined as in \eqref{eq:block}. The adjacency matrix $\mathcal{A}(G)$ of the half-square graph $G = G_f^2[V]$ can be calculated as
\begin{align}
\label{eq:logic}
    \forall (i, j) \in [d]^2, \quad \mathcal{A}(G)_{ij} = \bigvee_{k=1}^m \matr{U}_{ik} \wedge \matr{V}_{kj},
\end{align}
where $\land$ and $\lor$ denote the logical AND and OR operators, respectively.

Indeed, by definition, there is an edge between two nodes $v_a$ and $v_b$ of $G$ if and only if there exists a path of length two between those two nodes in $G_f$. Because edges may exist only between factor and feature nodes, this condition is met if and only if there exists an edge between $v_a$ and $f$ as well as $f$ and $v_b$ for at least one factor node $f$.

This proves that every adjacency matrix of a graph in $\G^m_d$ can be written as the matrix product of a $d \times m$ and a $m \times d$ matrix for the Boolean arithmetic. By definition~\cite{miettinen2021recent}, $\mathcal{A}(G)$ therefore has Boolean rank bounded above by $m$. 

The converse follows by observing that, by definition, an adjacency matrix with Boolean rank bounded above by $m$ admits the decomposition \eqref{eq:logic} with binary matrices $\matr{U}$, $\matr{V}$. Defining an $f$-DiGraph with these adjacency matrices according to \eqref{eq:block} yields the claim.
\end{proof}
For one direction of Proposition~\ref{prop:boolrank}, we have a similar result for the weighted adjacency matrices $\matr{UV}$.
\begin{prop}[Bounded rank of weighted half-square adjacency matrix]
For a factor graph $G_f$ and matrices $\matr{U}$ and $\matr{V}$, the (regular matrix) product $\matr{UV}$ counts the number of paths of length two between two variable nodes in $G_f$, and therefore is a valid (weighted) adjacency matrix for $G = G_f^2[V]$. Additionally, $\matr{UV}$ has matrix rank bounded above by $\min(d, m)$. 
\end{prop}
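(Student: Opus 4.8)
The plan is to split the statement into its two independent assertions---the path-counting interpretation of $\matr{UV}$ and the rank bound---and dispatch each in turn, since neither is deeply entangled with the other.

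First I would establish the path-counting claim by simply reading off the definition of the ordinary matrix product. Writing
\begin{align}
    (\matr{UV})_{ij} = \sum_{k=1}^m \matr{U}_{ik}\matr{V}_{kj}, \quad (i,j)\in[d]^2,
\end{align}
each summand $\matr{U}_{ik}\matr{V}_{kj}$ equals $1$ precisely when $\matr{U}_{ik}=1$ and $\matr{V}_{kj}=1$, i.e.\ when there is an edge $v_i \to f_k$ and an edge $f_k \to v_j$ in $G_f$, and vanishes otherwise. Hence each summand is the indicator of a length-two path from $v_i$ to $v_j$ routed through the factor $f_k$, and the sum over $k$ counts the total number of such paths. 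This mirrors the logical identity \eqref{eq:logic} from the proof of Proposition~\ref{prop:boolrank}, with the Boolean disjunction replaced by integer addition. In particular, a nonnegative integer sum is zero exactly when every term vanishes, so $(\matr{UV})_{ij}>0$ if and only if $\mathcal{A}(G)_{ij}=1$; the support of $\matr{UV}$ therefore coincides with the edge set of $G = G_f^2[V]$, and $\matr{UV}$ is a valid weighted adjacency matrix.

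For the rank bound I would invoke submultiplicativity of rank under products. Since $\matr{U}$ has $m$ columns and $\matr{V}$ has $m$ rows (using the block structure \eqref{eq:block}), the product factors through an $m$-dimensional space, giving $\rank{\matr{UV}} \le \min\left(\rank{\matr{U}}, \rank{\matr{V}}\right) \le m$; and since $\matr{UV}$ is a $d\times d$ matrix, $\rank{\matr{UV}} \le d$ holds trivially. Combining these yields $\rank{\matr{UV}} \le \min(d, m)$, as claimed.

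Neither step presents a genuine obstacle: the rank inequality is standard linear algebra, and the path-counting identity is essentially the integer-arithmetic analogue of the Boolean computation already carried out for Proposition~\ref{prop:boolrank}. If anything, the only point requiring minor care is checking that passing from the Boolean product $\diamond$ to the ordinary product $\matr{UV}$ preserves the adjacency information rather than just the count---which it does, precisely because the entries are nonnegative and hence a sum is positive if and only if the corresponding Boolean disjunction is true.
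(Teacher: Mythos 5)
Your proposal is correct and follows essentially the same route as the paper's proof: the paper likewise obtains the path-counting interpretation by replacing $\lor$ with $+$ and $\land$ with $\times$ in the Boolean identity from Proposition~\ref{prop:boolrank}, notes that $\matr{UV}$ and $\mathcal{A}(G)$ share the same zero pattern, and concludes the rank bound $\min(d,m)$ directly from the factorization through an $m$-dimensional space. Your write-up merely spells out the standard rank submultiplicativity step that the paper leaves as ``by construction.''
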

\begin{proof}
Replacing $\lor$ by $+$ and $\land$ by $\times$ in $\eqref{eq:logic}$, we obtain the matrix product $\matr{UV}$. Because of this, individual entries in the matrix $\matr{UV}$ indeed count the number of distinct paths of length two between two feature nodes in $G_f$. As a consequence, $\matr{UV}$ has a zero entry if and only if $\mathcal{A}(G)$ has a zero entry. This proves that $\matr{UV}$ is a valid weighted adjacency matrix. By construction, its matrix rank is bounded above by $\min(d, m)$.
\end{proof}

\subsection{Alternate definitions of low-rank graphs}
\label{sec:app:alternate-defs}

Here, we introduce various notions of low-rank constraints on graphs, and make precise the class we are concerned with in this paper. Towards this end, one could consider a variety of subsets of $\mathcal{G}$, the set of all graphs on $d$ nodes, in particular:
\begin{enumerate}
\item $\mathcal{G}_{\mathrm{lin}}^m$: graphs that admit a weighted adjacency matrix $W$ that has a matrix factorization of rank $\leq m$.
\item $\mathcal{G}_{\mathrm{lin,nonneg}}^m$: graphs that admit a weighted adjacency matrix $W$ that has a non-negative matrix factorization of rank $\leq m$.
\item $\mathcal{G}_{\mathrm{bool}}^m$: graphs that admit a weighted adjacency matrix $W$ that has a Boolean matrix factorization of rank $\leq m$.
\end{enumerate}
Note that in the definition of $\mathcal{G}_{\mathrm{lin}}^m$, we allow for $W$ to encode the presence of an edge in $G$ with any non-zero entry, positive or negative. In this context, we have the following set inclusions:
\begin{align}
\mathcal{G}_{\mathrm{bool}}^m = \mathcal{G}_{\mathrm{lin,nonneg}}^m \subsetneq \mathcal{G}_{\mathrm{lin}}^m \subsetneq \mathcal{G},
\end{align}
for $m < d$. To understand this result, it is important to note that for a given matrix, its non-negative rank and Boolean rank do not necessarily coincide, but $\mathcal{G}_{\mathrm{bool}}^m = \mathcal{G}_{\mathrm{lin,nonneg}}^m$ since we allow for arbitrary weighted adjacency matrices in the definition of $\mathcal{G}_{\mathrm{lin,nonneg}}^m$.

The previous low-rank work~\cite{lowrankdags} searches for graphs in $\mathcal{G}_{\mathrm{lin}}^m$. By contrast, we do not consider $\mathcal{G}_{\mathrm{lin}}^m$, but instead choose to exclusively work with $\mathcal{G}_{\mathrm{bool}}^m$. Considering $\mathcal{G}^m_{\mathrm{bool}}$ instead of $\mathcal{G}^m_{\mathrm{lin}}$ gives further rise to an intuitive way of restricting the nonlinear functional relationships on top of the graphical structure while maintaining low asymptotic computational complexity, as presented in Section~\ref{sec:DDCFG}. Besides being more immediate when starting from a linear structural equation model, we see no particular reason for favoring $\mathcal{G}_{\mathrm{lin}}^m$ over $\mathcal{G}_{\mathrm{bool}}^m$ in light of the practical benefits outlined in the later sections of this paper. Moreover, the only linear low-rank models not captured in $\mathcal{G}_{\mathrm{bool}}^m$ are those in which the contributions of multiple factors cancel out to produce more zeros than expected from the sparsity pattern of the factors $U, V$, corresponding to a lack of faithfulness of the factor graph. We consider these models edge cases that could safely be excluded from the search space.
 
\section{Identifiability of $f$-DAGs}
\label{sec:app:ident}

In this section, we investigate the identifiability of Factor Directed Acyclic Graphs from observational data in the context of causal discovery.
We first define the concept of Markov Equivalence Class (MEC), and we introduce a Boolean-rank restricted equivalence class. We show that the latter is in general smaller (and sometimes strictly smaller) than the MEC.
 
\subsection{Markov equivalence classes}
Under the classical set of assumptions commonly employed in causal discovery, namely causal sufficiency, causal Markov property, and faithfulness, we may identify the causal DAG from observational data only up to its Markov Equivalence Class (MEC)~\cite{verma1991equivalence}:
\begin{definition}(Markov Equivalence Class) The MEC of the half-square graph $D \in \mathcal{D}^m_d$ is $M(D) = \{D' \mid D' \sim D\}$ where $\sim$ denotes Markov equivalence.
\end{definition}
A graphical rule for Markov equivalence is that two graphs are Markov equivalent if they share the same skeleton and v-structures~\cite{verma1991equivalence}. Ideally, we wish to characterize the complexity of searching for a DAG only in a subset of the Markov equivalence class that is composed of graphs in $\mathcal{D}^m_d$.
\begin{definition}($f$-MEC) The $f$-MEC $M_f^m(D)$ of a half-square graph $D \in \mathcal{D}^m_d$ is the set of DAGs that are Markov equivalent to $D$ and arise as the half-square of a factor graph with at most $m$ factors: $M_f^m(D) = M(D) \cap \mathcal{D}^m_d$.
\end{definition}
We represent a MEC with an essential graph, defined as the union of all vertices and edges in the MEC. In particular, we say that an edge is unoriented if there exists an edge and its reverse orientation in the MEC. 

Although by definition, we have the inclusion $M_f^m(D) \subset M(D)$, it is worth noting that this inclusion is in general not equal. 
\begin{example}
Let $V = \{v_1, v_2, v_3\}$, $F = {f_1}$ and $E = \{(v_3, f_1), (f_1, v_1), (f_1, v_2)\}$. The graph $G_f = (V, F, E)$ is a valid factor graph with one factor. The half-square graph $G$ has two edges $\{(v_3, v_1), (v_3, v_2)\}$ that are both unoriented for the (classical) Markov equivalence class. Indeed, $M(G)$ contains three graphs, obtained by flipping edges (except for the configuration that creates a v-structure). However, $M_f^1(G)$ has only a unique graph, because the two other graphs in $M(G)$ require two factors when being described as a factor graph.
\end{example}
We leave the problem of providing an algebraic characterization of an $f$-MEC as future work.

\subsection{Identifiability of Boolean low-rank graphs}

Because graphs in $\mathcal{D}^m_d$ potentially contain many v-structures, we obtain a simple condition for identifiability.
\begin{lemma}(Unoriented edges in Boolean low-rank graphs)
\label{lem:graphrule}
Let $D_f = (V, F, E)$ be an $f$-DAG and $D = D_f^2[V] \in \mathcal{D}^m_d$. For every unoriented edge $(v_i, v_j)$ in the essential graph of $D$, there exists a factor $f$ with unique parent $v_i$. Consequently, if every factor $f \in F$ has at least two parents in $D_f$, then the MEC of $D$ reduces to the singleton $\{D\}$.
\end{lemma}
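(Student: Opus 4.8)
The plan is to prove the statement by reasoning directly about v-structures in the half-square graph $D = D_f^2[V]$ and invoking the Verma--Pearl characterization of Markov equivalence (same skeleton and same v-structures). Recall that an edge $(v_i, v_j)$ of $D$ is unoriented in the essential graph precisely when we may reverse it without creating or destroying any v-structure. So the central task is to relate the existence of such a reversible edge to the structure of the underlying factor graph $D_f$, and in particular to the number of parents of each factor.

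First I would unpack the path-of-length-two definition of the half-square. An edge $(v_i, v_j)$ in $D$ exists iff there is a factor $f$ with $v_i \to f$ and $f \to v_j$ in $D_f$. Now suppose $(v_i,v_j)$ is unoriented; I want to produce a factor $f$ whose only parent is $v_i$. The idea is to argue that if every factor routing $v_i$ to $v_j$ had a second parent $v_k \neq v_i$, then $v_j$ would acquire a co-parent $v_k$ (since $v_k \to f \to v_j$ gives an edge $(v_k, v_j)$ in $D$), and one should check whether $v_i$ and $v_k$ are themselves adjacent in $D$. If they are not adjacent, then $v_i \to v_j \leftarrow v_k$ is a v-structure in $D$, which would forbid reversing $(v_i, v_j)$, contradicting that it is unoriented. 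The delicate part is handling the case where $v_i$ and $v_k$ \emph{are} adjacent; here I would use acyclicity (Lemma~\ref{prop:acyclic}) together with the topological ordering to rule out that the adjacency points the wrong way, and to show that across \emph{all} factors mediating the $v_i \to v_j$ edge one cannot simultaneously avoid creating a v-structure unless some mediating factor has $v_i$ as its sole parent.

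From the first claim the corollary is immediate: if every factor $f \in F$ has at least two parents, then no factor can have $v_i$ as its unique parent, so by the contrapositive of the first statement $D$ has no unoriented edges at all. A DAG whose essential graph is fully oriented is the unique member of its MEC, so $M(D) = \{D\}$, which is the desired conclusion.

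The main obstacle I anticipate is the bookkeeping around the case where the candidate co-parent $v_k$ of $v_j$ is adjacent to $v_i$. In that situation no v-structure is visibly created at $v_j$, so one cannot conclude reversibility is blocked from that single triple alone; I expect to need a more careful argument that quantifies over all factors $f$ with $v_i \to f \to v_j$ and all the co-parents they induce, showing that a reversal of $(v_i,v_j)$ must disturb at least one v-structure somewhere unless a sole-parent factor exists. Getting the logic of ``for all mediating factors, there exists a second parent'' versus ``there exists a mediating factor with a unique parent'' correctly negated, while respecting acyclicity, is where the real work lies; the v-structure counting itself is routine once the right factor is isolated.
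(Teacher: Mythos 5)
Your route is the same one the paper takes: pass to a factor $f$ mediating $v_i \to f \to v_j$, use the fact that an unoriented edge cannot participate in a v-structure, and argue that a second parent $v_k$ of $f$ would create the v-structure $v_i \to v_j \leftarrow v_k$; the corollary then follows by contraposition exactly as you say. The paper's proof is precisely this short argument, and it dispatches what you call the ``delicate part'' by simply asserting that any other feature vertex connected to $f$ would create a v-structure --- i.e., it never addresses the possibility that $v_k$ and $v_i$ are adjacent in $D$. So you have correctly isolated the crux. But your proposal only names the obstacle and defers it (``I expect to need a more careful argument''), and that deferred step is the entire mathematical content beyond routine v-structure counting, so as it stands the attempt is incomplete.

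Moreover, the specific repair you sketch --- showing that, quantifying over all factors mediating the edge $(v_i, v_j)$, some mediating factor must have $v_i$ as its sole parent --- is aiming at a false statement. Take $V = \{v_1, v_2, v_3\}$ and $F = \{f_1, f_2\}$ with edges $v_1 \to f_1$, $f_1 \to v_2$, $v_1 \to f_2$, $v_2 \to f_2$, $f_2 \to v_3$. The half-square $D$ is the complete triangle $v_1 \to v_2$, $v_1 \to v_3$, $v_2 \to v_3$, which has no v-structures, so its essential graph is entirely unoriented; yet the \emph{unique} factor mediating the unoriented edge $(v_1, v_3)$ is $f_2$, which has two parents. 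The sole-parent factor whose existence the lemma asserts is $f_1$, and it mediates a different edge, $(v_1, v_2)$, so no argument confined to the mediators of $(v_i,v_j)$ and to v-structures at $v_j$ can close the gap. (This example in fact shows that the adjacent-co-parent case is a genuine problem for the paper's own two-line proof as well, not just for yours: for the unoriented edge $(v_2, v_3)$, \emph{no} factor has unique parent $v_2$, since the parent sets are $\{v_1\}$ and $\{v_1, v_2\}$; the v-structure assertion is only valid when the co-parent $v_k$ is non-adjacent to $v_i$.) Note, however, that the consequence actually used downstream in Proposition~\ref{prop:mec_size} --- every factor having at least two parents forces a v-structure-free configuration to be impossible --- is not contradicted by this example, but establishing it in the adjacent case requires leaving the triple $(v_i, v_k, v_j)$: for instance, recursing along edges $(v_i, v_k)$ with $v_k$ earlier in topological order until a non-adjacent co-parent yields a v-structure, and then propagating orientations back through the resulting chain of triangles via Meek-type rules. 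That is the machinery your plan gestures at but does not supply.

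One smaller point: your opening characterization --- that an edge is unoriented precisely when it can be reversed without creating or destroying a v-structure --- is not an equivalence (single-edge reversibility within a Markov equivalence class characterizes \emph{covered} edges, and an edge can be compelled by Meek's rules without lying in any v-structure); fortunately you only use the correct direction, namely that an edge participating in a v-structure is compelled.
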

\begin{proof}
Let $D_f$ be a factor directed graph such that $D = D_f^2[V]$. Let $(v_i, v_j)$ be an edge of $D_f$ that is unoriented in $M(D)$. By definition of the factor directed graph, there exists $f \in \F$ such that the edges $(v_i, f)$ and $(f, v_j)$ are part of $\G$. Because the edge is unoriented, it cannot be part of a v-structure. Consequently, there are no other feature vertices $v_k$ connected to $f$ or it would create a v-structure in $D$. 

In the case that every factor $f \in \F$ has at least two parents, there can be no unoriented edges, and therefore the graph is identifiable, i.e., it is alone in its MEC.
\end{proof}
In order to quantify how frequent this configuration is, we introduce a random factor directed acyclic graph model, inspired by the Erd\H{o}s-R\'enyi model~\cite{erdHos1960evolution}:
\begin{definition}(Random sequence of growing $f$-DAGs)
Let $(D_{f,d})_{d=0}^\infty$ denote a random sequence of factor directed graphs defined recursively, with $D_{f, 0} = (\emptyset, F, \emptyset)$ and $|F| = m$. Let $D_{f,d} = (V_d, F, E_d)$ be the graph at step $d$, and $\sigma_d$ a permutation of $F \cup V_d$ that specifies a topological ordering $\{\sigma(v_1), \ldots, \sigma(v_d)\}$. We define $V_{d +1} = V_d \cup \{v_{d+1}\}$ where $v_{d+1}$ denotes a new variable node. We extend the permutation $\sigma_d$ into a new permutation $\sigma_{d+1}$ of $F \cup V_{d+1}$ by randomly inserting the node $v_{d+1}$ into the linear order induced by $\sigma_{d}$ (out of the $d+m+1$ possible choices). To obtain $E_{d+1}$, we add to $E_d$ edges of the form $(v_{d+1}, f)$ (resp. $(f, v_{d+1})$) if $(\sigma(v_{d+1}) \leq \sigma(f)$ (resp. $(\sigma(f) \leq \sigma(v_{d+1})$) independently with probability $p \in (0, 1)$. Finally, we define $D_{f, d+1} = (V_{d+1}, F, E_{d+1})$ and have $D_{d+1} = D_{f, d+1}^2[V] \in \mathcal{D}^m_{d+1}$. 
\label{def:erdos_dag}
\end{definition}
We show that the probability of having at least one unoriented edge in $M(D_d)$ for a fixed $m$ is small for large $d$.
\begin{prop}
\label{prop:mec_size}
For $D_d$ sampled according to Definition~\ref{def:erdos_dag}, the size of the MEC (and, by inclusion, of the $f$-MEC) converges to 1 with high probability for fixed $m$:
\begin{align}
    \mathbb{P}(|M(D_d)| = 1) \geq 1 - \phi(d, m, p), 
\end{align}
with $\phi(d, m, p) \rightarrow 1$ when $p$ and $m$ are fixed, and $d \rightarrow \infty$. Consequently, as the number of variable nodes $d$ grows, the DAG becomes identifiable. 
\end{prop}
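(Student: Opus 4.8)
The plan is to reduce the statement to the graphical criterion of Lemma~\ref{lem:graphrule} and then to control a single random-ordering quantity. By Lemma~\ref{lem:graphrule}, if every factor $f \in F$ has at least two parents in $D_{f,d}$, then $M(D_d) = \{D_d\}$. Hence it suffices to prove that the failure probability
\begin{align}
\mathbb{P}\left(\text{some factor has at most one parent in } D_{f,d}\right) =: \phi(d,m,p)
\end{align}
tends to $0$ as $d \to \infty$ for fixed $m$ and $p$. A union bound over the $m$ factors then reduces everything to bounding, for a single fixed factor $f$, the probability $\mathbb{P}(\deg^-(f) \le 1)$, where $\deg^-(f)$ is the number of parents (variable nodes) of $f$.

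Next I would condition on the topological ordering. Let $B_f$ denote the number of variable nodes that precede $f$ in the final order $\sigma_d$; crucially, the relative order of a variable node and $f$ is frozen the moment that variable node is inserted, so $B_f$ is well defined, and by the construction of Definition~\ref{def:erdos_dag} each of these $B_f$ preceding variable nodes is joined to $f$ independently with probability $p$. Thus, conditionally on $B_f$, the in-degree $\deg^-(f)$ is $\text{Binomial}(B_f, p)$, so that
\begin{align}
\mathbb{P}(\deg^-(f) \le 1 \mid B_f) = (1-p)^{B_f} + B_f\, p\, (1-p)^{B_f - 1} =: g(B_f).
\end{align}
The function $g$ is bounded by $1$ and satisfies $g(b) \to 0$ as $b \to \infty$, so the whole problem collapses to showing $B_f \to \infty$ in probability and then invoking bounded convergence to conclude $\mathbb{E}[g(B_f)] \to 0$.

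The key quantitative step, and the main obstacle, is to control $B_f$. The sequential uniform-insertion rule of Definition~\ref{def:erdos_dag} makes the final arrangement a uniform interleaving of the $d$ variable nodes into the fixed sequence of $m$ factors; equivalently, the positions of the factors form a uniformly random $m$-subset of $[d+m]$, and $B_f$ is (the relevant order statistic) minus its factor-rank. From this one reads off $\mathbb{E}[B_f] = \Theta(d)$ and, more to the point, $\mathbb{P}(B_f \le c) \to 0$ for every fixed $c$, since that event forces $f$ into the first $O(m+c)$ of the $d+m$ positions, an event of probability $O((m+c)/d)$. This yields $B_f \xrightarrow{P} \infty$ and hence $\mathbb{P}(\deg^-(f) \le 1) \to 0$; multiplying by $m$ gives $\phi(d,m,p) \to 0$.

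Two subtleties must be handled with care. First, one should justify the conditional Binomial structure even though the edge coins are revealed at insertion time rather than after the ordering is fixed; this is fine because a variable node's order relative to $f$, and its edge coin to $f$, are both determined at its insertion and never revised, so conditioning on the final ordering leaves the $B_f$ coins independent and $\text{Bernoulli}(p)$. Second, the insertion process induces mild dependence among the indicators ``variable node before $f$'', which is cleanly sidestepped by working directly with the final uniform interleaving and its order statistics. I would also note that already the case $m=1$ gives $\phi(d,1,p) = \Theta(1/(pd))$, so the decay here is only polynomial in $d$ rather than exponential, in contrast to the Boolean-rank instability result; this is expected, since $B_f$ itself has fluctuations of order $d$ and can be small with probability $\Theta(1/d)$.
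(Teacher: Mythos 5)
Your proposal is correct and follows essentially the same route as the paper's proof: reduce to the graphical criterion of Lemma~\ref{lem:graphrule}, control the probability that some factor has at most one parent via a bound that is linear in $m$, and observe that conditionally on the topological ordering the parent count of a fixed factor is $\mathrm{Binomial}(B_f, p)$ with $B_f$ the number of preceding variable nodes (the paper's $K_f$), where you correctly read the statement's $\phi(d,m,p)\rightarrow 1$ as the intended $\phi(d,m,p)\rightarrow 0$. The only difference is the final estimation step: the paper computes $\mathbb{E}\left[(1-p)^{K_f} + K_f p (1-p)^{K_f-1}\right]$ in closed form by treating $K_f$ as uniform on $\{0,\ldots,d\}$, yielding the explicit rate $\phi \approx 2m/((d+1)p)$, whereas you show $B_f \rightarrow \infty$ in probability via the uniform-interleaving description of the insertion process and conclude by bounded convergence --- a sound and, if anything, slightly more careful variant, since your explicit union bound over the $m$ factors and your order-statistics treatment of the factor positions sidestep the paper's looser claim that the $K_f$ are independently uniform, and your remark that the decay is only polynomial in $d$ (e.g.\ $\Theta(1/(pd))$ for $m=1$) agrees with the paper's computed rate.
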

\begin{proof}
According to the graphical rule in Lemma~\ref{lem:graphrule}, if all factors $f \in F$ have at least two parents then the MEC $M(D_n)$ of the DAG $D_n$ reduces to a singleton $\{D_n\}$ . Consequently, we have the lower bound
\begin{align}
    \Prob(|M(D_d)| = 1) &\geq \Prob(\forall f \in F: |\Pa(f)| \geq 2).
\end{align}
Calculating the right-hand side of the previous inequality essentially corresponds to calculating the distribution of the number of parents $|\Pa(f)|$ for each factor $f$. To bound this probability, we first note that we can obtain the probability distribution (on graphs) in Definition~\ref{def:erdos_dag} equivalently by fixing an order on the feature nodes and inserting $m$ factor nodes randomly, one by one, into the order, finally assigning edges independently as before. In this construction, for each factor $f$, the number of preceding feature nodes and the number of parents are independent. The number of parents follows a binomial distribution with parameters $(K_f, p)$ where $K_f$ is the number of feature nodes $K_f$ appearing before $f$ in the topological ordering $\sigma_d$. By construction, $K_f$ is independently sampled from a uniform distribution $K_f \sim \textrm{Categorical}(\{0, \ldots, d\})$. Therefore, it follows that
\begin{align}
    \Prob(\forall f \in F: |\Pa(f)| \geq 2) &= [\Prob(\text{a fixed~}f \in F \text{~verifies~} |\Pa(f)| \geq 2)]^m\\
    &= [1 - \Prob(\text{a fixed~}f \in F \text{~verifies~} |\Pa(f)| \in \{0, 1\})]^m\\
    &\geq 1 - m\mathbb{P}(A_d),
\end{align}
where we introduce the probabilistic event $A_d = \{\text{a fixed~}f \in F \text{~verifies~} |\Pa(f)| \in \{0, 1\}\}$. Then, conditioning on $K_f$, we obtain
\begin{align}
    \mathbb{P}(A_d) &= \frac{1}{d+1}\sum_{k = 0}^d \left[(1-p)^k + kp(1-p)^{k-1}\right]\\
    &= \frac{1}{d+1}\sum_{k = 0}^{d} (1-p)^k + \frac{p}{(d+1)(1-p)} \sum_{k=0}^{d}k(1-p)^{k}\\
    &= \frac{1-(1-p)^{d+1}}{(d+1)p} + \frac{p}{(d+1)(1-p)}\left[\frac{1-p}{p^2}(1- (1-p)^{d+1}) - \frac{d+1}{p} (1-p)^{d+1}\right]\\
    &= \frac{2(1 - (1-p)^{d+1})}{(d+1)p} - (1-p)^{d},
\end{align}
where we recognized the closed-form expression of a geometric series, and a finite arithmetico-geometric series, respectively. Hence, because $\mathbb{P}(A_d) \rightarrow 1$ when $d \rightarrow \infty$, we have that  $\mathbb{P}(|M(D_d)| = 1) \rightarrow 1$ when $d \rightarrow \infty$. 
\end{proof}

\subsection{Empirical validation}
\label{sec:empval-mec}

To simulate $f$-DAGs, we used a random graph model equivalent to Definition~\ref{def:erdos_dag} and specified as follows. For $d$ variable nodes and $m$ factor nodes, we sample a random permutation $\sigma$ that specifies a topological ordering on $[d+m]$. For each node $i$ in this topological ordering $\{\sigma(1), \ldots, \sigma(d+m)\}$, we draw the absence or the presence of an edge between node $i$ and each of its potential parents based on a Bernoulli distribution. If node $i$ is a variable node, then it may only be connected to factors present in $\{\sigma(1), \ldots, \sigma(i)\}$ with probability $p_v$. Similarly, if node $i$ is a variable node, then it may only be connected to factors present in $\{\sigma(1), \ldots, \sigma(i)\}$ with probability $p_f$. This is a natural extension of the simulations provided in~\cite{brouillarddcdi}, further adding the factor semantic. 

Using this framework, we sampled $T=50$ $f$-DAGs with $d$ nodes and $m$ modules ($p_v = p_f = 0.5$), and calculated the size of the MEC using the causaldag Python package (Figure~\ref{fig:dag-count}A).

\section{Boolean-rank instability under edge perturbation}
 \label{sec:app:bool}
In this section, we prove the Boolean-rank instability result (Theorem~\ref{thm:rank}). The key idea behind the proof is to identify a sufficient condition for the perturbation to induce a Boolean rank increase in the half-square graph that happens often for large $d$ and fixed $m$. The proof consists of four parts. First, we introduce a technical lemma regarding the overlap of random sets. Namely, any pair of distinct union of sets of random subsets from an alphabet differ in at least two elements with high probability (Lemma~\ref{lem:prob-set}). Second, we introduce another lemma showing that the sampling of patterns in a binary vector becomes exhaustive after enough independent draws (Lemma~\ref{lem:pattern}). Third, we relate those two conditions to a sufficient condition for the increase of the Boolean rank of the half-square of a $f$-DiGraph (Lemma~\ref{lem:suff-rank}). Putting everything together, we finally prove the main theorem (Theorem~\ref{thm:rank}).

\subsection{Random subset non-overlapping lemma} 

 \begin{lemma}
 \label{lem:prob-set}
 Let $\{\U_1, \ldots, \U_m\}$ be $m$ random subsets of the alphabet $\Omega = [d]$, where the inclusion of each letter $i \in \Omega$ in each subset $\U_k \subset \Omega$ for $k \in [m]$ is defined by sampling a Bernoulli random variable with parameter $p$, independently over $k \in [m]$ and $i \in [d]$. Further, let $S_1 \neq S_2$ be two distinct fixed subsets of $[m]$. Then, the two sets defined as the unions of subsets of $\{\U_1,
 \ldots, \U_m\}$ indexed by $S_1$ and $S_2$ have low probability of completely overlapping:
 \begin{align}
     \mathbb{P}\left(\left|\left(\bigcup_{j \in S_1} \U_j\right) \triangle \left(\bigcup_{j' \in S_2} \U_{j'}\right)\right| \leq 1\right) \leq \gamma q_A^{d},
 \end{align}
 where $\Delta$ denotes the symmetric set difference, $\gamma=1+\nicefrac{d}{(1-p(1-p)^m)} > 0$, and $q_A = 1 - p(1-p)^m \in (0, 1)$.
 \end{lemma}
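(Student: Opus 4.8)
The plan is to collapse the event onto a tail bound for a binomial random variable, exploiting independence across the letters of the alphabet, and then to control the per-letter probability by isolating a single index that distinguishes $S_1$ from $S_2$.

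First I would fix the two unions $A = \bigcup_{j \in S_1} \mathcal{U}_j$ and $B = \bigcup_{j' \in S_2} \mathcal{U}_{j'}$ and note that, for each letter $i \in [d]$, the indicator $\mathds{1}\{i \in A \triangle B\}$ is a function only of the Bernoulli variables $\{\mathds{1}\{i \in \mathcal{U}_k\}\}_{k \in [m]}$ attached to that letter. Since these families are independent across distinct letters $i$, the cardinality $|A \triangle B| = \sum_{i=1}^d \mathds{1}\{i \in A \triangle B\}$ is a sum of $d$ i.i.d.\ Bernoulli indicators, i.e.\ $|A \triangle B| \sim \mathrm{Binomial}(d, r)$, where $r := \mathbb{P}(i \in A \triangle B)$ is the same for every $i$ by symmetry. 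This reduces the claim to a bound on $\mathbb{P}(\mathrm{Binomial}(d, r) \leq 1) = (1-r)^d + d\,r\,(1-r)^{d-1}$, and the entire content now lies in establishing the lower bound $r \geq p(1-p)^m$.

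Second, I would lower bound $r$. Because $S_1 \neq S_2$, at least one of the set differences is nonempty; without loss of generality pick an index $k^\ast \in S_1 \setminus S_2$ (the case $k^\ast \in S_2 \setminus S_1$ is symmetric). Writing $X_k = \mathds{1}\{i \in \mathcal{U}_k\}$, I would consider the event $\{X_{k^\ast} = 1\} \cap \bigcap_{k \neq k^\ast}\{X_k = 0\}$. On this event $i$ belongs to $A$ (since $k^\ast \in S_1$ and $X_{k^\ast}=1$) but not to $B$ (since $k^\ast \notin S_2$ and every other coordinate vanishes), so $i \in A \triangle B$. This event has probability $p(1-p)^{m-1} \geq p(1-p)^m$, yielding $r \geq p(1-p)^m$ and hence $1 - r \leq 1 - p(1-p)^m = q_A$.

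Finally, I would assemble the estimate: since $0 \leq 1-r \leq q_A < 1$, we have both $(1-r)^d \leq q_A^d$ and $(1-r)^{d-1} \leq q_A^{d-1}$, and using $r \leq 1$ gives $d\,r\,(1-r)^{d-1} \leq d\,q_A^{d-1}$. Therefore
\begin{align}
\mathbb{P}\!\left(|A \triangle B| \leq 1\right) \leq q_A^d + d\,q_A^{d-1} = \left(1 + \frac{d}{q_A}\right) q_A^d = \gamma q_A^d,
\end{align}
which is exactly the claimed inequality with $\gamma = 1 + \nicefrac{d}{q_A} = 1 + \nicefrac{d}{(1-p(1-p)^m)}$. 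The main obstacle is the second step: one must carefully identify a single distinguishing coordinate and verify that forcing all other coordinates to zero cleanly separates membership in $A$ from membership in $B$. Once this per-letter lower bound is in hand, the remainder is a routine binomial tail computation and a matching of constants.
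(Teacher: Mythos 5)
Your proof is correct and follows essentially the same route as the paper's: both reduce the event to independent per-letter indicators of membership in the symmetric difference, lower-bound the per-letter probability by $p(1-p)^m$ via a one-sided event in which the letter lies in one union but not the other, and conclude with the identical bound $(1-r)^d + d\,r\,(1-r)^{d-1} \leq \left(1 + \nicefrac{d}{q_A}\right) q_A^d$. The only cosmetic difference is that you exhibit a single witness configuration ($X_{k^\ast} = 1$ and all other coordinates zero, giving $p(1-p)^{m-1}$), whereas the paper computes the one-sided probability exactly via the decomposition $S_1 = S_1' \amalg S_3$, $S_2 = S_2' \amalg S_3$ with $S_3 = S_1 \cap S_2$ before bounding it; both yield the same final constant.
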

 \begin{proof}
 We introduce $X^{S_i}_w$, the random variable that denotes whether letter $w \in \Omega$ is present in $\bigcup_{j \in S_i}\U_j$. We observe that a letter $w$ gives rise to an element in the symmetric difference of the two sets $S_1$ and $S_2$ if $X^{S_1}_w \neq X^{S_2}_w$. Although the sets $\{\U_1, \ldots, \U_m\}$ are constructed independently, there may be correlations in the union sets we consider in the case of overlap between $S_1$ and $S_2$. Therefore, we decompose the sets $S_1$ and $S_2$ into their overlapping and non-overlapping parts:
 \begin{align}
     S_1 &= S'_1 \amalg S_3\\
     S_2 &= S'_2 \amalg S_3,
 \end{align}
 where $\amalg$ denotes a disjoint union, $S_3 = S_1 \cap S_2$,  $S'_1 = S_1 \setminus S_3$ and $S'_2 = S_2 \setminus S_3$. Under these conditions, we may write
 \begin{align}
     A_w := \{X^{S_1}_w \neq X^{S_2}_w\} = \underbrace{\left\{w \in \bigcup_{j \in S_1} \U_j \cap w \notin \bigcup_{j \in S_2} \U_j\right\}}_{=:\bar{A}_w} \cup \left\{w \in \bigcup_{j \in S_2} \U_j \cap w \notin \bigcup_{j \in S_1} \U_j\right\}.
 \end{align}
Noticing that these events are symmetric in $S_1$ and $S_2$, we focus on the first one which we denote by $\bar{A}_w$. In each of the above events, $w$ cannot be in the union over $S_3$ because in that case, it would be in the union over both $S_1$ and $S_2$. Therefore, we can decompose $\bar{A}_w$ as
\begin{align}
    \bar{A}_w = \{w \in \bigcup_{j \in S'_1} \U_j\} \cap \{w \notin \bigcup_{j \in S'_2} \U_j\} \cap \{w \notin \bigcup_{j \in S_3} \U_j\}.
\end{align}
By the independence of $\{\mathcal{U}_1, \dots, \mathcal{U}_m\}$, we have that
 \begin{align}
    \mathbb{P}(\bar{A}_w) &= \left(1 - (1-p)^{|S'_1|}\right)(1-p)^{|S'_2|}(1-p)^{|S_3|}\\
    &= \left(1 - (1-p)^{|S'_1|}\right)(1-p)^{|S_2|}.
\end{align}
Then, because $|S_1'|, |S_2| \in \{0, \dots, m\}$, we have the bound
\begin{align}
    \mathbb{P}(A_w) \geq \mathbb{P}(\bar{A}_w) \geq p(1-p)^m.
    %p(1-p)^m \leq {}&\mathbb{P}(A_w) &{}\leq {}& 1-p\\
    %p \leq {}&1- \mathbb{P}(A_w) &{}\leq {}& 1- p(1-p)^m.
\end{align}
Now, noting the independence of the letters, we have that
\begin{align}
\mathbb{P}\left(\left|\left(\bigcup_{j \in S_1} \U_j\right) \Delta \left(\bigcup_{j' \in S_2} \U_{j'}\right)\right| \leq 1\right) &= \Prob(\text{At most one~}A_w\text{~is true})\\
&= (1 - \Prob(A_w))^d + d\Prob(A_w)(1- \Prob(A_w))^{d-1}\\
&\leq (1 - \Prob(A_w))^d + d (1 - \Prob(A_w))^{d-1} \\
&\leq (1-p(1-p)^m)^d + d (1-p(1-p)^m)^{d-1}\\
&\leq \left(1 + \frac{d}{1-p(1-p)^m}\right) (1-p(1-p)^m)^d.
\end{align}
This concludes the proof.
 \end{proof}
\subsection{Exhaustive binary pattern coverage lemma}
\begin{lemma}
\label{lem:pattern}
Let $\V = [v_1, \ldots, v_m] \in \{0, 1\}^m$ be a random vector with each entry independently sampled from a Bernoulli distribution with parameter $p$. Denote by $B$ the following probabilistic event:
\[
B = \{\text{$\exists x\in \{0,1\}^m$ that is not observed at least twice in $d$ independent draws from $\V$}\}.
\]
Then, the probability of $B$ is small for large $d$, in particular,
\begin{align}
    \Prob(B) &\leq \delta q_B^d
\end{align}
with $\delta = 2^m \left(1 + \frac{d}{1 - \max\{p^m, (1-p)^m\}}\right)$ and $q_B = 1- \min\{p^m, (1-p)^m\} \in (0, 1)$.
\end{lemma}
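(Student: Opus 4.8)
The plan is to bound $\Prob(B)$ by a union bound over the $2^m$ possible binary patterns, reducing the problem to controlling, for each fixed pattern, the probability that a binomial count is at most one. First I would note that $B$ is precisely the event that some pattern $x \in \{0,1\}^m$ occurs at most once among the $d$ draws, so writing $N_x$ for the number of draws equal to $x$,
\begin{align}
\Prob(B) \leq \sum_{x \in \{0,1\}^m} \Prob(N_x \leq 1).
\end{align}
Since the draws are i.i.d.\ and each equals $x$ with probability $\pi_x = p^{|x|}(1-p)^{m-|x|}$ (where $|x|$ is the number of ones in $x$), the count $N_x$ is Binomial$(d, \pi_x)$, and by the analogous computation appearing in the proof of Lemma~\ref{lem:prob-set},
\begin{align}
\Prob(N_x \leq 1) = (1-\pi_x)^d + d\,\pi_x(1-\pi_x)^{d-1}.
\end{align}

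The key quantitative step is to control this tail uniformly in $x$. For this I would use the two-sided bound $\pi_{\min} \leq \pi_x \leq \pi_{\max}$, where $\pi_{\min} = \min\{p^m, (1-p)^m\}$ and $\pi_{\max} = \max\{p^m, (1-p)^m\}$; these are the extreme values of $\pi_x = p^{|x|}(1-p)^{m-|x|}$ over $|x| \in \{0, \ldots, m\}$, attained at the all-zeros and all-ones patterns. Rewriting the tail as $(1-\pi_x)^d\bigl[1 + d\,\pi_x/(1-\pi_x)\bigr]$, I would bound the prefactor using the lower bound on $\pi_x$, namely $(1-\pi_x)^d \leq (1-\pi_{\min})^d = q_B^d$, and bound the bracketed term using the upper bound on $\pi_x$ together with the monotonicity of $t \mapsto t/(1-t)$ on $[0,1)$, giving
\begin{align}
1 + \frac{d\,\pi_x}{1-\pi_x} \leq 1 + \frac{d\,\pi_{\max}}{1-\pi_{\max}} \leq 1 + \frac{d}{1-\pi_{\max}}.
\end{align}

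Combining these estimates, every pattern satisfies $\Prob(N_x \leq 1) \leq \bigl(1 + d/(1-\pi_{\max})\bigr)q_B^d$, and summing over the $2^m$ patterns gives $\Prob(B) \leq 2^m\bigl(1 + d/(1-\pi_{\max})\bigr)q_B^d = \delta q_B^d$, as claimed; that $q_B \in (0,1)$ is immediate since $p \in (0,1)$ forces $\pi_{\min} \in (0,1)$. The argument is essentially routine once the union-bound reduction is in place; the one point demanding care is the opposing directions of the two bounds on $\pi_x$ — the geometric prefactor must be dominated via the smallest pattern probability $\pi_{\min}$ (to yield the decay $q_B^d$), whereas the linear-in-$d$ correction must be dominated via the largest pattern probability $\pi_{\max}$ (to yield the constant $\delta$), so I would keep these two extremes separate rather than collapsing $\pi_x$ to a single bound.
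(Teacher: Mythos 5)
Your proposal is correct and follows essentially the same route as the paper's proof: a union bound over the $2^m$ patterns, the exact binomial expression $(1-\pi_x)^d + d\,\pi_x(1-\pi_x)^{d-1}$ for the at-most-once event, and the same two-sided use of $\pi_{\min} = \min\{p^m,(1-p)^m\}$ for the geometric decay and $\pi_{\max} = \max\{p^m,(1-p)^m\}$ for the linear-in-$d$ prefactor. The only cosmetic difference is that you factor out $(1-\pi_x)^d$ before bounding while the paper bounds the two summands separately (using $p_x \leq 1$ and $1/(1-p_x)\leq 1/(1-p_\alpha)$), arriving at the identical constant $\delta$ and rate $q_B$.
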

\begin{proof}
By a union bound, we obtain
\begin{align}
    \Prob(B) &= \Prob\left(\exists x \in \{0,1\}^m: x \text{~appears at most once in~}d\text{~draws}\right)\\
    &\leq 2^m \max_{x \in \{0,1\}^m}\Prob\left(\text{a fixed~}x \in 2^m\text{~appears at most once in~}d\text{~draws}\right).
\end{align}
We simply need to upper bound this probability. For a fixed $x \in \{0,1\}^m$, denote by $|x|$ the number of non-zero entries in the binary vector $x$. We then have, by independence: 
\begin{align}
    \Prob\left(\text{a fixed~}x \in 2^m\text{~appears at most once in~}d\text{~draws}\right) = (1- p_x)^d + d p_x (1- p_x)^{d-1},
\end{align}
where
\begin{align}
    p_x = p^{|x|}(1-p)^{m-|x|} \in (p_\beta, p_\alpha),
\end{align}
with $p_\alpha = \max\{p^m, (1-p)^m\}$ and $p_\beta = \min\{p^m, (1-p)^m\}$. This inequality directly follows from distinguishing the cases $p \leq \nicefrac{1}{2}$ and $p > \nicefrac{1}{2}$. Plugging this into the first bound above, we have
\begin{align}
    \Prob(B) &\leq 2^m \left(1 + \frac{d}{1 - p_\alpha}\right)(1-p_\beta)^d,
\end{align}
which concludes the proof.
\end{proof}
\subsection{Boolean rank increase lemma}
\begin{lemma}
Let $G_f \in \mathcal{G}_d^m$ be a (fixed) $f$-DiGraph, with partial adjacency matrices $\matr{U}$ and $\matr{V}$. We denote by $G$ its half-square $G=G_f^2[V]$. Let us state two assumptions.

First, we say that the matrix $\matr{U}$ satisfies the non-overlapping condition if
\begin{align}
    \label{eq:nonoverlap}
    \forall (x_1, x_2) \in \{0, 1\}^m, x_1 \neq x_2 \implies d_\mathcal{H}\left(\matr{U} \diamond x_1, \matr{U} \diamond x_2\right) \geq 2 \tag{non-overlap}
\end{align}
where $d_\mathcal{H}$ denotes the Hamming distance between two binary vectors.

Second, we say that the matrix $\matr{V}$ satisfies the coverage condition if its columns cover the whole set of possible patterns at least twice, i.e.,
\begin{align}
    \label{eq:coverage}
    \forall x \in \{0,1\}^m \; \exists i_1 \neq i_2 \in [d]: \matr{V}_{:,i_1} = \matr{V}_{:,i_2} = x. \tag{coverage}
\end{align}

If both conditions are satisfied for $\matr{U}$ and $\matr{V}$, resp., then the adjacency matrix $\matr{A} = \matr{U} \diamond \matr{V}$ of the half-square graph $G$ has $2^m$ distinct columns (treated as binary vectors), and $\matr{A}$ has Boolean rank $m$. Moreover, for every entry $(i, j)$ of the adjacency matrix, the matrix $\matr{A}'$ obtained by replacing $\left(\matr{U} \diamond \matr{V}\right)_{ij}$ by $1 - \left(\matr{U} \diamond \matr{V}\right)_{ij}$ has Boolean rank $m+1$.
\label{lem:suff-rank}
\end{lemma}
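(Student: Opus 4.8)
The plan is to route everything through a single combinatorial invariant: the number of distinct columns of a Boolean matrix. The backbone is an elementary counting lemma that I would establish first. If a Boolean matrix $\matr{M}$ factors as $\matr{B} \diamond \matr{C}$ with $\matr{B}$ having $r$ columns, then each column of $\matr{M}$ equals $\matr{B} \diamond \matr{C}_{:,q} = \bigvee_{k \in \mathrm{supp}(\matr{C}_{:,q})} \matr{B}_{:,k}$, i.e. the Boolean OR of some subset of the $r$ columns of $\matr{B}$. Since there are only $2^r$ subsets, $\matr{M}$ has at most $2^r$ distinct columns; contrapositively, any Boolean matrix with more than $2^r$ distinct columns has Boolean rank strictly above $r$. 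Both rank lower bounds below are immediate consequences.

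For the first part, the coverage condition guarantees that every vector $\matr{U}\diamond x$ with $x \in \{0,1\}^m$ appears as a column of $\matr{A} = \matr{U}\diamond\matr{V}$, while the non-overlap condition (Hamming distance $\geq 2$, in particular positive) makes these $2^m$ vectors pairwise distinct; hence $\matr{A}$ has exactly $2^m$ distinct columns. The bound $\rankb{\matr{A}} \leq m$ is read off the given factorization, and the counting lemma applied with $r = m-1$ yields $\rankb{\matr{A}} \geq m$, so $\rankb{\matr{A}} = m$.

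For the lower bound after perturbation, note that flipping entry $(i,j)$ turns column $j$ from its old value $u := \matr{U}\diamond\matr{V}_{:,j}$ into a vector $w$ differing from $u$ in exactly the single coordinate $i$. I would argue $w$ is a genuinely new column value: were $w = \matr{U}\diamond x$, then $x \neq \matr{V}_{:,j}$ (since $w \neq u$), yet $d_\mathcal{H}(\matr{U}\diamond x, u) = 1 < 2$, contradicting non-overlap. Here the reason coverage was stated with multiplicity at least two becomes clear: some other column still carries the value $u$ after the flip, so $\matr{A}'$ keeps all $2^m$ old values and gains $w$, giving at least $2^m+1$ distinct columns. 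The counting lemma with $r=m$ then forces $\rankb{\matr{A}'} \geq m+1$.

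The upper bound $\rankb{\matr{A}'} \leq m+1$ is where I expect the real work, and I would note it actually holds for any rank-$m$ Boolean matrix, independently of the two hypotheses. A flip $0 \to 1$ is trivial: $\matr{A}' = \matr{A} \vee (e_i e_j^\top)$, so appending the column $e_i$ to $\matr{U}$ and the row $e_j^\top$ to $\matr{V}$ exhibits a rank-$(m+1)$ factorization. The case $1 \to 0$ is the main obstacle, since Boolean arithmetic has no subtraction to uncover one entry. Viewing the factorization as a cover of the ones of $\matr{A}$ by $m$ all-ones rectangles, I would let $K = \{k : \matr{U}_{ik}=1,\ \matr{V}_{kj}=1\}$ be the factors covering $(i,j)$, zero out row $i$ of $\matr{U}$ in exactly the columns of $K$ (removing the cover of $(i,j)$, but possibly also uncovering other ones $(i,q)$ in row $i$), and then restore those lost ones with one extra rectangle supported on row $i$ and on the uncovered columns $q \neq j$. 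One checks that $(i,j)$ is not re-covered (every factor covering it lies in $K$, and the extra rectangle excludes column $j$), that no zero of $\matr{A}$ is created (we only shrink $\matr{U}$ and add back entries that were ones of $\matr{A}$), and that all ones off $(i,j)$ survive. This gives $m$ modified columns plus one, so $\rankb{\matr{A}'} \leq m+1$, and combined with the lower bound the rank is exactly $m+1$.
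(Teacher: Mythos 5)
Your proposal is correct, and its core follows the same route as the paper's proof: coverage (with multiplicity two) guarantees that all $2^m$ values $\matr{U}\diamond x$ occur among the columns of $\matr{A}$, non-overlap makes them pairwise distinct and makes the flipped column a genuinely new value, and the ``at most $2^r$ distinct columns for Boolean rank $r$'' counting fact yields the lower bounds $\rankb{\matr{A}} \geq m$ and $\rankb{\matr{A}'} \geq m+1$ (the paper cites this counting fact to prior work on Boolean factorization, whereas you re-derive it in one line, which makes the argument self-contained). The genuine difference is your final paragraph: the paper's proof only establishes that at least $m+1$ factors are \emph{necessary} for $\matr{A}'$ and then asserts $\rankb{\matr{A}'} = m+1$, leaving the upper bound $\rankb{\matr{A}'} \leq m+1$ unargued --- trivial for a $0 \to 1$ flip, but not for $1 \to 0$, since Boolean arithmetic has no subtraction. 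Your rectangle-cover modification supplies this missing half, and it checks out: zeroing row $i$ of $\matr{U}$ on the covering set $K$ alters only row $i$ of the product, the extra rectangle $\{i\} \times Q$ restores exactly the uncovered ones of row $i$ while excluding column $j$, and no zero of $\matr{A}'$ is ever covered; moreover your observation that this holds for \emph{any} rank-$m$ Boolean matrix, independent of the two hypotheses, is a nice strengthening. Note that for the downstream Theorem~\ref{thm:rank} only the lower bound $\rankb{\matr{A}'} > \rankb{\matr{A}}$ is used, so the paper's omission is harmless there, but your version is what actually justifies the exact-rank claim as the lemma states it.
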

\begin{proof}
The reader will notice that \eqref{eq:nonoverlap} for $\matr{U}$ in this lemma is a matrix formulation of the hypothesis in lemma~\ref{lem:prob-set} for all sets $S_1, S_2 \subseteq [m]$, and that \eqref{eq:coverage} for $\matr{V}$ is a direct reformulation of the hypothesis in lemma~\ref{lem:pattern}.

Let us note that one consequence of \eqref{eq:nonoverlap} is that two different patterns in the columns of $\matr{V}$ will incur different patterns in the columns of $\matr{A}$ (the Hamming distance is bounded away from zero). Also, as a consequence of \eqref{eq:coverage}, we know that each pattern occurs at least once, and therefore the matrix $\matr{A}$ has $2^m$ distinct columns. Because at least $m$ factors in a Boolean decomposition are necessary to express $2^m$ distinct column patterns, $\matr{A}$ is of Boolean rank $m$~\cite{miettinen2008discrete}.

To show the second claim, for a fixed entry of the adjacency matrix, let $(i, j)$ be its indices. We replace $\left(\matr{U} \diamond \matr{V}\right)_{ij}$ by $1 - \left(\matr{U} \diamond \matr{V}\right)_{ij}$ in $\matr{A}$ to create $\matr{A}'$. By \eqref{eq:nonoverlap}, the $j$th column of $\matr{A}'$ is distinct from all other columns in $\matr{A}'$, since it differs in exactly one entry from all the other columns, but these columns have a Hamming distance of at least two from each other. Moreover, since every pattern occurs at least twice in $\matr{A}$ by \eqref{eq:coverage}, all $2^m$ original column patterns in $\matr{A}$ are still present in $\matr{A}'$. Therefore, $\matr{A}'$ has $2^m+1$ distinct columns. By the same argument as above, at least $m+1$ factors are necessary to express $\matr{A}'$ with a Boolean decomposition, and thus $\rankb{\matr{A}'} = m+1$.
\end{proof}

\subsection{Proof of the main theorem}

We introduce a random factor graph model over $\G_d^m$, inspired by the Erd\H{o}s-R\'enyi model~\cite{erdHos1960evolution}:
\begin{definition}(Random sequence of growing $f$-DiGraphs)
Let $(G_{f,d})_{d=0}^\infty$ denote a random sequence of factor directed graphs defined recursively, with $G_{f, 0} = (\emptyset, F, \emptyset)$. Let $G_{f,d} = (V_d, F, E_d)$ be the graph at step $d$. We define $V_{d +1} = V_d \cup \{v_{d+1}\}$ where $v_{d+1}$ denotes a new variable node. To obtain $E_{d+1}$, we connect the new variable node into and out of each factor independently with probability $p \in (0, 1)$. Finally, we define $G_{f, d+1} = (V_{d+1}, F, E_{d+1})$ and have $G_{d+1} = G_{f, d+1}^2[V] \in \mathcal{G}^m_{d+1}$. 
\label{def:erdos_graph}
\end{definition}
The reader will note that those graphs are not necessarily acyclic, but our result for the Boolean rank is true for the more general class of $f$-DiGraphs under this model. We now introduce our toy model for edge perturbations. 
  
\begin{definition}(Stochastic edge operator) Let $G \in \mathcal{G}^m_d$. We denote by $\Lambda$ the stochastic operator that samples a random entry of the adjacency matrix $\mathcal{A}(G)$ and either removes the present edge, or adds the absent edge.
\end{definition}

\boolinstab*
\begin{proof}
Let us denote the rank increase event by $R = \{\rankb{\Lambda(G_d)} > \rankb{G_d}\}$.
Thanks to Lemma~\ref{lem:suff-rank}, if the random matrices $\matr{U}$ and $\matr{V}$ corresponding to $G$ fulfill \eqref{eq:nonoverlap} and \eqref{eq:coverage}, the rank increases no matter which edge is flipped due to $\Lambda$. Therefore, we have
\begin{align}
    \mathbb{P}(\bar{R}) &\leq \Prob\left( \{\text{not \eqref{eq:nonoverlap}}\} \text{~or~} \{ \text{not \eqref{eq:coverage}} \}\right)\\
    & \leq \Prob\left( \{\text{not \eqref{eq:nonoverlap}}\}\right) + \Prob\left(\{ \text{not \eqref{eq:coverage}} \}\right)\\
    & \leq \Prob\left( \bigcup_{S_1, S_2}\{\left|\left(\cup_{j \in S_1} \U_j\right) \triangle \left(\cup_{j' \in S_2} \U_{j'}\right)\right| \leq 1\}\right) + \Prob\left(\{ \text{not \eqref{eq:coverage}} \}\right)\\
    & \leq 2^{2m}\,\Prob\left(\text{for fixed } S_1, S_2: \left|\left(\cup_{j \in S_1} \U_j\right) \triangle \left(\cup_{j' \in S_2} \U_{j'}\right)\right| \leq 1\right) + \Prob\left(\{ \text{not \eqref{eq:coverage}} \}\right)\\
    & \leq 2^{2m}\gamma q_A^d + \delta q_B^d\\
    & \leq \left(2^{2m}\gamma + \delta\right) \left(\max\{q_A, q_B\}\right)^d,
\end{align}
where we used union bounds and Lemmas~\ref{lem:prob-set} and \ref{lem:pattern} to bound the probabilities. Folding the linear dependence of $\gamma$ and $\delta$ on $d$ into the exponential term $q^d$ then concludes the proof.
\end{proof}

\subsection{Empirical validation}

We simulated $f$-DAGs using the same methodology presented in Appendix~\ref{sec:empval-mec} with $p_v=p_f=0.5$.

\paragraph{Boolean rank} For each combination of $d \in \{10, 20, \ldots, 90, 100\}$ and $m \in \{5, 10\}$, we simulated $T=2$ $f$-DAGs. In the first simulation, we added an edge at random. In the second, we removed an edge at random. In Figure~\ref{fig:dag-count}B, we calculated the Boolean rank of the adjacency matrix of the half-square graph $G$ before and after perturbation, using the methodology and the code from~\cite{kovacs2021binary} to approach the NP-hard problem of Boolean matrix factorization. More precisely, \cite{kovacs2021binary} provide a solver for the best Boolean rank $\tilde{m}$ approximation of a matrix $\matr{A}$ in Frobenius norm. Starting from the linear rank of the matrix $\matr{UV}$ as an initial guess, we repeatedly used this solver to obtain the smallest $\tilde{m}$ that resulted in a Frobenius norm residual of 0, i.e., perfect matrix reconstruction using $\tilde{m}$ factors. After edge perturbation, we re-ran the solver and classified the graph as leading to a rank increase if the residual was 1 and as not leading to a rank increase if the residual was 0. Intuitively, an error of 1 means that the new edge cannot be described by any set of $\tilde{m}$ factors, and an error of 0 means that a (potentially different) factorization with $\tilde{m}$ factors reconstructs the perturbed graph.

Out of the 400 simulations, there were two configurations where the resulting residual was neither 0 nor 1. In both cases, the algorithm had not converged in time, and we excluded these two runs from the analysis. We found that the fraction of these suboptimal factorizations increased rapidly for larger $m$ and therefore limited the analysis to small values of $m$. 

%we used the matrix rank of the matrix $\matr{UV}$ as initial guess $r^0$ on the Boolean rank. Then, we used the column generation method from~\cite{kovacs2021binary} to obtain a Boolean matrix decomposition of $\mathcal{A}(G)$ with at most $r^0$ factors. We reapplied this method until the obtained decomposition was accurate, meaning that the Boolean product equaled the adjacency matrix. 

%As a sanity check, we verified that the error (as measured by the Frobenius norm between the Boolean product of the decomposition and the adjacency matrix) was equal to either 1 or 0 when solving for the Boolean factorization of the perturbed matrix using the inferred rank of the original matrix. Intuitively, an error of 1 means that the new edge cannot be described by any set of $m$ factors, and an error of 0 means that a (potentially different) factorization with $m$ factors reconstructs the perturbed graph.

%Out of the 400 simulations, we found two configurations only where this condition was not respected. Agreeably, this corresponds to the algorithm not converging in time (Boolean matrix factorization is NP-hard). We found this fraction of suboptimal factorization to increase largely when $m$ is larger, therefore we limited this analysis to small values of $m$. 

\paragraph{Matrix rank} For larger values of $m$, we calculated the linear rank of the (binary) adjacency matrix of the perturbed graph. We note that the linear rank is only a heuristic approximation of the quantitative increase in complexity of the underlying matrix, because in general it provides neither an upper nor a lower bound on the Boolean rank~\cite{10.1145/3522594}. In this scenario, we chose a ratio of edges to perturb in the half-square graph $G=(V, E)$, given by the target False Discovery Rate (FDR) $q$. Out of these $q|E|$ corruptions, we sampled a number $N_1 = \textrm{Binomial}(q|E|, \nicefrac{1}{2})$ of edges present in $G$ uniformly at random to remove from the graph, and $N_2 = q|E| - N_1$ of edges not present in $G$ uniformly at random to add. Then, we reported the linear rank before and after perturbations for $T = 200$ configurations for every target FDR.

 \section{Characterization of acyclicity}
 \label{sec:app:acycl}
 We start by proving the graphical rule for acyclicity emerging from the structure of the bipartite graph.
 \acyclic*
 \begin{proof}
 To prove this, by symmetry between the sets of nodes $F$ and $V$, it is enough to prove the first equivalence. We prove each implication of the first equivalence by contraposition. 
 
 Let us first assume that there exists a cycle in $G_f$, that is, that there exists a path starting and ending at the same node. That node is either a feature node or a factor node. If the node is a feature node $v$, then we may write the path as $[(v, f_{i_1}), (f_{i_1}, v_{j_1}), \ldots, (v_{j_{\omega-1}}, f_{i_{\omega}}), (f_{i_\omega}, v)]$. By definition of the half-square, all feature nodes in this path are directly connected by at least one factor node. Therefore, the sequence $[(v, v_{j_1}), \ldots (v_{j_{\omega-1}}, v)]$ is a path in $G$. However, this path links $v$ to itself, so it is a cycle. If the node is a factor, we may simply shift the cycle one step to obtain a path that starts from a feature node.
 
 Let us now assume that there exists a cycle in $G$, for example $[(v, v_{j_1}), \ldots (v_{j_{\omega-1}}, v)]$. For every edge in $G$, there is by definition an edge of length 2 in $G_f$. Consequently, for every edge $(v_{j_a}, v_{j_{a+1}})$ of the path in $G$, there exists a path of length 2 $(v_{j_a}, f), (f, v_{j_{a+1}})$ in $G_f$ connecting $v_{j_a}$ and $v_{j_{a+1}}$. By concatenating these paths, we obtain a cycle in $G_f$.
 \end{proof}
 
We also note the existence of a more algebraic proof, based on the two following arguments. First, notice that for a matrix $W \in \mathbb{R}_+^{n\times n} = UV$, where all entries of $U$ and $V$ are also non-negative, $W$ is nilpotent if and only $W$ is a DAG. Second, $VU$ is nilpotent if and only if $UV$ is nilpotent.
 
 \subsection{Trace-exponential characterization}
 Interestingly, in the case of an $f$-DiGraph $G_f$, we can be more precise, and prove that the tr-exp penalty applied to $G$ can be related to the one applied to $G_f^2[F]$. 
 
 \begin{restatable}[$\Tr \exp$ penalty on $f$-DiGraphs]{prop}{trexp}
\label{prop:trexp}
Let $G_f = (V, F, E)$ be an $f$-DiGraph, and $(\matr{U}, \matr{V})$ its partial adjacency matrices. Then, the two following quantities are identical,
\begin{align}
    \Tr \exp\left\{ \matr{UV}\right\} - d = \Tr \exp\left\{ \matr{VU}\right\} - m,
\end{align} 
and are both equal to zero if and only if $G_f$ or, equivalently, any of its half-squares, is acyclic.
\end{restatable}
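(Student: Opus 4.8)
The plan is to reduce the equality to the classical fact that $\matr{UV}$ and $\matr{VU}$ share the same nonzero spectrum, and then to characterize the vanishing of the penalty via non-negativity of traces of matrix powers together with Lemma~\ref{prop:acyclic}.

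First I would recall the determinant identity: for $\matr{U}\in\R^{d\times m}$ and $\matr{V}\in\R^{m\times d}$ with $d\ge m$, one has $\det(\lambda I_d - \matr{UV}) = \lambda^{d-m}\det(\lambda I_m - \matr{VU})$ for all $\lambda$, which follows by computing the determinant of the block matrix $\bigl(\begin{smallmatrix} \lambda I_d & \matr{U}\\ \matr{V} & I_m\end{smallmatrix}\bigr)$ with Schur complements taken in two ways. This shows that the multiset of eigenvalues of $\matr{UV}$ (counted with algebraic multiplicity) equals that of $\matr{VU}$ augmented by $d-m$ additional zeros. Since for any square matrix $M$ one has $\Tr\exp\{M\} = \sum_i e^{\lambda_i(M)}$, where the sum runs over eigenvalues with algebraic multiplicity (as is seen from the Jordan form, each Jordan block of eigenvalue $\lambda$ and size $s$ contributing $s\,e^{\lambda}$ to the trace of the exponential), this yields
\begin{align}
\Tr\exp\{\matr{UV}\} = \Tr\exp\{\matr{VU}\} + (d-m)e^0 = \Tr\exp\{\matr{VU}\} + d - m,
\end{align}
and subtracting $d$ from both sides gives the claimed identity.

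Next, for the acyclicity characterization, I would use that $\matr{UV}$ is entrywise non-negative, being the path-counting weighted adjacency matrix of $G = G_f^2[V]$. Expanding the exponential gives $\Tr\exp\{\matr{UV}\} - d = \sum_{k\ge 1}\Tr\left((\matr{UV})^k\right)/k!$, and each summand is non-negative because $(\matr{UV})^k$ has non-negative entries and $\Tr((\matr{UV})^k) = \sum_i \bigl((\matr{UV})^k\bigr)_{ii}$ counts (weighted) closed directed walks of length $k$. Hence the sum vanishes iff $\Tr((\matr{UV})^k) = 0$ for every $k\ge 1$, i.e.\ iff $G$ has no closed directed walk, i.e.\ iff $G$ is acyclic. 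By Lemma~\ref{prop:acyclic}, acyclicity of $G = G_f^2[V]$ is equivalent to acyclicity of $G_f$ and of $G_f^2[F]$, completing the characterization; applying the same argument to the non-negative adjacency matrix $\matr{VU}$ of $G_f^2[F]$ confirms that $\Tr\exp\{\matr{VU}\} - m = 0$ is likewise equivalent to acyclicity, consistent with the equality just proven.

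The main obstacle is essentially bookkeeping: one must track algebraic multiplicities rather than merely the nonzero spectrum, which the determinant identity handles cleanly, and one must address the case distinction $d \ge m$ versus $m > d$, though the identity is symmetric under swapping the roles of $\matr{U},\matr{V}$ and the exponents $d,m$, so either ordering is covered. Everything else reduces to the path-counting interpretation of matrix powers and an invocation of Lemma~\ref{prop:acyclic}.
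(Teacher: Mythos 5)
Your proof is correct, but it reaches the central identity by a genuinely different route than the paper. The paper's proof expands $\Tr \exp\{\matr{UV}\} - d$ as the series $\sum_{k \ge 1} \Tr[(\matr{UV})^k]/k!$ and applies cyclicity of the trace term by term, $\Tr[(\matr{UV})^k] = \Tr[\matr{U}(\matr{VU})^{k-1}\matr{V}] = \Tr[(\matr{VU})^k]$ — a one-line elementary manipulation that needs no spectral theory and no case distinction between $d \ge m$ and $m > d$. You instead derive the identity from Sylvester's determinant theorem (via the two Schur complements of the block matrix), concluding that the spectra of $\matr{UV}$ and $\matr{VU}$ agree up to $d-m$ zero eigenvalues, and then pass through the Jordan form to evaluate $\Tr\exp$ as a sum of $e^{\lambda_i}$ with algebraic multiplicities; your bookkeeping of multiplicities and of the $m > d$ case is handled correctly. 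This is heavier machinery for the same conclusion, but it buys the stronger spectral fact as a byproduct — the same fact that underlies the paper's alternative spectral-radius penalty, so the connection is not without interest. For the acyclicity characterization, the paper simply cites the NOTEARS result that the tr-exp of a non-negatively weighted adjacency matrix equals its dimension iff the graph is acyclic, whereas you reprove it from scratch using entrywise non-negativity of $(\matr{UV})^k$ and the closed-walk interpretation of $\Tr[(\matr{UV})^k]$; your version is self-contained where the paper's is cited, and both conclude identically by invoking Lemma~\ref{prop:acyclic}.
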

 \begin{proof}
Because $\matr{UV}$ is a positively weighted adjacency matrix for $G$, we have that $\Tr \exp \{\matr{UV}\} = d$ if and only if $G$ is acyclic~\cite{zheng2018dags}. Similarly, $\Tr \exp \{\matr{VU}\} = m$ if and only if $G_f^2[F]$ is acyclic. Considering the equivalences of acyclicity in Proposition~\ref{prop:acyclic}, we must only prove the algebraic identity. For this, we write
\begin{align}
    \Tr \exp \{\matr{UV}\} - d &= \Tr \left[\sum_{k=0}^\infty \frac{(\matr{UV})^k}{k!}\right] - d \\
    &= \sum_{k=1}^\infty \frac{\Tr \left[(\matr{UV})^k\right]}{k!}\\
    &= \sum_{k=1}^\infty \frac{\Tr \left[\matr{U}(\matr{VU})^{k-1}\matr{V}\right]}{k!}\\
    &= \sum_{k=1}^\infty \frac{\Tr \left[(\matr{VU})^{k-1}\matr{VU}\right]}{k!}\\
    &= \sum_{k=1}^\infty \frac{\Tr \left[(\matr{VU})^{k}\right]}{k!}\\
    &= \Tr \exp \{\matr{VU}\} - m,
\end{align}
where we made use of the fact that $\Tr(\matr{AB}) = \Tr(\matr{BA})$.
 \end{proof}
 
 \subsection{Spectral characterization}
We present here a version of the power iteration method tailored to the case of $f$-DiGraphs. It calculates an approximation to the spectral radius of $\matr{UV}$, in turn approximately characterizing acyclicity of the corresponding graph $G$~(as presented in \cite{lee2019scaling}).

\begin{algorithm}[H]
\centering
  \caption{Factor power iteration
    \label{alg:power}}
\begin{algorithmic}[1]
\INPUT{Factor adjacency matrices $(\matr{U}, \matr{V}) \in \R^{d \times m}\times \R^{m \times d}$ and number of iterations $T \in \mathbb{N}$}
\STATE Initialize $p_0 \in \R^d$ and $q_0 \in \R^d$, either randomly or by warm-starting with previous estimates of the leading singular vectors of $\matr{U}\matr{V}$
\FOR{$t \in \{0, \ldots, T-1\}$} \do

    % \STATE \# update $p$
	\STATE $p_{t+1} =\matr{V}^\top \matr{U}^\top p_{t} / \norm{\matr{V}^\top \matr{U}^\top p_{t}}_2$ 
	\STATE $q_{t+1} =\matr{UV} q_{t} / \norm{\matr{UV} q_{t}}_2$

\ENDFOR

\OUTPUT{Approximate leading singular value $\hat{\rho}= \frac{p_T^\top \matr{UV} q_T}{p_T^\top q_T}$}
\end{algorithmic}
\end{algorithm}

%     \begin{algorithm}[H]
%   \caption{Block-optimized\\ power iteration
%     \label{alg:power}}
% \begin{algorithmic}[1]
% \INPUT{$(U, V)$ and $T \in \mathbb{N}$}
% \STATE  Init. $p^0 = (p^0_f, p^0_v)$ and $q^0=(q^0_f, q^0_v)$
% \FOR{$t \in \{1, \ldots, T\}$} \do

%     % \STATE \# update $p$
% 	\STATE $(p^{t+1}_f, p^{t+1}_v) = (V^\top p^{t}_v, U^\top p^{t}_f)$
% 	\STATE $p^{t+1} = p^{t+1} / \norm{p^{t+1}}_2$

%     % \STATE \# update $q$
% 	\STATE $(q^{t+1}_f, q^{t+1}_v) = (Uq^{t}_v, Vq^{t}_f)$
% 	\STATE $q^{t+1} = q^{t+1} / \norm{q^{t+1}}_2$

% \ENDFOR

% \OUTPUT{$\hat{\rho}= \frac{\left(p^T_v\right)^\top Vq^T_f + \left(p^T_f\right)^\top Uq^T_v}{\left(p^T_f\right)^\top q^T_f + \left(p^T_v\right)^\top q^T_v}$}
% \end{algorithmic}
% \end{algorithm}

\section{Implementation details for DCD-FG}
\label{sec:app:impl}
In this section, we present the implementation details for DCD-FG. 

\subsection{Factor MLP: Likelihood model and architecture details}
For adjacency matrices $\matr{U}$ and $\matr{V}$ of the $f$-DiGraph $G_f$, we detail now how, for DCD-FG, we construct a likelihood model that is both non-linear and uses the factor semantics. We define $h_f$ as a (deterministic) variable attached to factor node $f \in F$, calculated as the scalar output of a multi-layer perceptron (MLP) on the input variables of each factor defined by the matrix $\matr{U}$,
\begin{align}
h_f = \textrm{MLP}(\matr{U}_{:, f} \circ X; \Theta_f),
\end{align}
for neural networks parameters $\Theta_f$, and $X \in \R^d$, where masking with $\matr{U}$ is a computationally efficient way to restrict the input to the neural network to a potentially varying set of input variables. We provide the default hyperparameters for these MLPs in Appendix~\ref{sec:app:exp}. Further, we let the conditional distribution of each node depend linearly on its parent factors defined by the matrix $\matr{V}$,
\begin{align}
X_j \sim \textrm{Normal}\left(\alpha_j^\top (\matr{V}_{j,:}\circ h) + \beta_j, \sigma^2_j \right),
\end{align}
for parameters $\alpha_j \in \mathbb{R}^m$, $\beta_j \in \mathbb{R}$, $\sigma_j > 0$.

These equations correctly specify a density model $p_\Theta(X_j \mid X_{-j})$ as long as the obtained likelihood does not depend on feature $X_j$, but only on the other features $X_{-j}$. In particular, this is true for all $j$ if $\mathrm{diag}(\matr{UV}) = \matr{0}_{d \times d}$ (i.e., if there are no self-loops in $G$). 

\subsection{Linear model and relationship to NOTEARS-LR}
We briefly explore connections between our factor architecture, and the NOTEARS-LR model~\cite{lowrankdags}. In the context of our factor architecture with linear models, each factor variable $h_f$ is a linear combination of $X$ with weights $r_f \in \R^d$. Omitting bias terms for simplicity, we collect these weights into a matrix $\matr{R}= [r_1, \ldots, r_m]$. Similarly, each feature $X_j$ is a linear combination of $h_f$ with weights $\alpha_j \in \R^m$ that we similarly condense into a matrix $\matr{A} = [\alpha_1, \ldots \alpha_d]$. In this case, the mean of our Gaussian conditional likelihood model can be written as
\begin{align}    
\mathbb{E}[X_j \mid X]  = \matr{A} \matr{R} X.
\end{align}
Interestingly, without further modifications, we observed that this model suffers from mis-specification, as well as poor performance due to the possibility of self-loops. In the linear case, a simple workaround to this problem is to mask contributions due to self-loops by removing the diagonal of the matrix $\matr{AR}$. This corresponds to modifying the likelihood model to
\begin{align}    
\mathbb{E}[X_j \mid X]  = \matr{M}_{\mathrm{no-loop}} \circ (\matr{A} \matr{R}) X,
\end{align}
where $\matr{M}_{\mathrm{no-loop}} = \mathds{1}\mathds{1}^\top - \matr{I}_{d \times d}$ is a self-loop removing mask. This modification is exactly the linear model proposed by NOTEARS-LR. While this simple modification effectively eliminates self-loops, it necessitates expanding the full matrix $\matr{A} \matr{R}$, which incurs a runtime cost of $O(d^2)$. Next, we describe an alternative approach that avoids this step.

\subsection{Removing self-loops in $f$-DAGs}
Parametrizing the $f$-DAG adjacency matrix by $\matr{M} = [\matr{U}, \matr{V}]$ with otherwise unconstrained binary matrices $\matr{U}$ and $\matr{V}$
causes the induced feature graphs to potentially have a large number of self-loops. As emphasized before, this causes the local likelihood to not be correctly specified. We also found this to be detrimental to the performance of the model, even though the acyclicity score should promote the absence of self-loops in later stages of training. We hypothesize that when self-loops are present, the model starts off training by predicting every variable by itself, which in turn prevents any other signal from being picked up during the training stage. Therefore, the model never learns a meaningful predictive model that can be accurately pruned into a DAG.  

To circumvent this issue, we propose an alternative model in which the matrices $\matr{U}$ and $\matr{V}$ are additionally constrained to explicitly remove self-loops. More precisely, we calculate $\matr{M} = [\matr{U}, \matr{V}]$ as a deterministic mapping from a single $d \times m$ matrix $\matr{W}$, taking value in $\{0, -1, 1\}^{d \times m}$ with independent entries sampled according to a Gumbel-softmax distribution~\cite{jang2017categorical} with parameters $\Phi$. 

The intuition behind this single matrix is that each individual entry $\matr{W}_{ij}$ decides whether there is an edge between a factor $f_j$ and a feature $v_i$, as well as its orientation. More precisely, 
\begin{align}
 \forall i \in [d], j \in [m], (\matr{U}_{ij}, \matr{V}_{ji})  &=  \left\{\begin{array}{ll}
        (1, 0), & \text{for } \matr{W}_{ij} = 1,\\
        (0, 1), & \text{for } \matr{W}_{ij} = -1,\\
        (0, 0), & \text{for } \matr{W}_{ij} = 0.
        \end{array}\right.
\end{align}
Because the entries $\matr{U}_{ij}$ and $\matr{V}_{ji}$ may never be both equal to 1, there are no self-loops in the induced half-square graph. Indeed, the number of self-loops is simply the trace of the adjacency matrix, which is equal to zero,
\begin{align}
    \Tr(\matr{UV}) = \sum_{i = 1}^d\sum_{j=1}^m \matr{U}_{ij}\matr{V}_{ji} = 0.
\end{align}
In the case of DCD-FG, it is important to notice that the matrices $\matr{U}$ and $\matr{V}$ are not fixed, but sampled from a random distribution $\matr{M}(\Phi)$ (we drop dependence to the parameter $\Phi$ for convenience of notation). For the purpose of later enforcing acyclicity, we calculate the expectation of the weighted (random) adjacency matrices formed from $\matr{M}(\Phi)$, which, by independence of the entries of $\matr{W}$, is
\begin{align}
    \label{eq:product-expectation}
    \forall i, j \in [d]^2, \mathbb{E}[\matr{UV}]_{ij} &= \left\{\begin{array}{ll}
        (\mathbb{E}[\matr{U}]\mathbb{E}[\matr{V}])_{ij}, & \text{for } i \neq j,\\
        0, & \text{for } i = j.
        \end{array}\right.
\end{align}
Analogously, the expectation of the adjacency matrix of the induced factor graph is
\begin{align}
    \label{eq:product-expectation-factor}
    \forall k, \ell \in [m]^2, \mathbb{E}[\matr{VU}]_{k\ell} &= \left\{\begin{array}{ll}
        (\mathbb{E}[\matr{V}]\mathbb{E}[\matr{U}])_{k\ell}, & \text{for } k \neq \ell,\\
        0, & \text{for } k = \ell.
        \end{array}\right.
\end{align}

\subsection{Acyclicity penalties}
We now show how acyclicity penalties can be applied to our factored representations (tr-exp factor and spectral factor in the experiments), and that they have the claimed runtime bounds, i.e., that they are linear in $d$. For both the tr-exp and the spectral radius penalty, we apply the continuous acyclicity penalty $\mathcal{C}(\mathbb{E}[\matr{M}(\Phi)])$ to the expectation of $\matr{M}(\Phi) = [\matr{U}(\Phi), \matr{V}(\Phi)]$, as in~\cite{brouillarddcdi}. We leave the investigation of instead calculating gradients of the expected penalty $\mathbb{E}[\mathcal{C}(\matr{M}(\Phi))]$ with respect to $\Phi$ using reparameterized samples of $\matr{M}(\Phi)$ as future work. 

\paragraph{Tr-exp factor penalty} We apply the tr-exp penalty to $\mathbb{E}[\matr{VU}]$, the weighted adjacency matrix on the half-square $G^2_f[F]$ induced by the factor nodes. As described above in \eqref{eq:product-expectation-factor}, $\mathbb{E}[\matr{VU}]$ is calculated by calculating the matrix product of the expectations of $\matr{V}$ and $\matr{U}$, before setting the diagonal to zero. This $m \times m$ matrix can be calculated in $O(m^2d)$, and the gradient of the penalty $h(\mathbb{E}[\matr{M}]) = \Tr \exp \{\mathbb{E}[\matr{VU}]\} - m$ can be calculated in $O(m^3)$, yielding a total runtime of $O(m^3 + m^2d)$. 

\paragraph{Spectral radius factor penalty} We apply the factor power iteration (Algorithm~\ref{alg:power}) to the matrix $\mathbb{E}[\matr{UV}]$ to maintain left and right eigenvectors for the leading singular value in $O(Tmd)$ operations, and then calculate the gradient of that singular value in time $O(md)$. In this case, we use a simple modification of Algorithm~\ref{alg:power} with a diagonal offset, noticing that 
\begin{align}
    \mathbb{E}[\matr{UV}] = \mathbb{E}[\matr{U}]\mathbb{E}[\matr{V}] - \mathrm{diag}(\mathbb{E}[\matr{U}]\mathbb{E}[\matr{V}]),
\end{align}
and that the matrix-vector multiplications in Algorithm~\ref{alg:power} can be calculated in time $O(md)$.

\subsection{Augmented Lagrangian}
This section is adapted from the supplementary materials of the work of Brouillard, Lachapelle et al.~\cite{brouillarddcdi} and outlines how the DAG-constrained optimization problem can be solved with first-order methods. 

Let us recall that the score function and the optimization problem for DCD-FG, assuming perfect interventions, are defined as:
\begin{align}
    \max_{\Phi, \Theta} \; &\mathcal{S}(\Phi, \Theta) \text{~such that~} \mathcal{C}(\mathbb{E}[\matr{M}(\Phi)]) = 0,\\
    \text{where } &\mathcal{S}(\Phi, \Theta) = \mathbb{E}_{\matr{M}' \sim \matr{M}(\Phi)} \left[\sum_{k=1}^K \mathbb{E}_{X \sim P_{\text{data}}^{(k)}} \sum_{j \nin \I_k}\log p^j_\Theta(X_j; \matr{M}'_j,  X_{-j})\right] - \lambda \norm{\mathbb{E}\left[\matr{M}(\Phi)\right]}_1.
\end{align}
The augmented Lagrangian transforms the constrained problem into a sequence of unconstrained problems of the form
\begin{align}
\label{eq:criterion}
    \max_{\Phi, \Theta} \mathcal{S}(\Phi, \Theta) -\gamma_t \mathcal{C}(\mathbb{E}[\matr{M}(\Phi)]) - \frac{\mu_t}{2} \left(\mathcal{C}(\mathbb{E}[\matr{M}(\Phi)])\right)^2,
\end{align}
where $\gamma_t$ and $\mu_t$ are the Lagrange multiplier and the penalty coefficient of the $t$-th unconstrained optimization problem, respectively. In all our experiments, we initialize $\gamma_0 = 0$ and $\mu_0 = 10^{-8}$. Each such problem is approximately solved using a first-order stochastic optimization procedure (RMSProp in our experiments). We assume that a subproblem has converged when \eqref{eq:criterion} evaluated on a validation set stops increasing. Let $(\Phi_t^*, \Theta_t^*)$ be the approximate solution to subproblem $t$. Then, $\gamma_t$ and $\mu_t$ are updated according to the following rule:
\begin{align}
    \gamma_{t+1} &= \gamma_t + \mu_t  \mathcal{C}(\mathbb{E}[\matr{M}(\Phi_t^*)])\\
    \mu_{t+1} &= \left\{\begin{array}{ll}
        \eta \mu_t, & \text{if } \mathcal{C}(\mathbb{E}[\matr{M}(\Phi_t^*)]) > \delta \mathcal{C}(\mathbb{E}[\matr{M}(\Phi_{t-1}^*)]),\\
        \mu_t, & \text{otherwise},
        \end{array}\right.
\end{align}
with $\eta = 2$ and $\delta = 0.9$. Each subproblem $t$ is initialized using the previous subproblem's solution $(\Phi_t^*, \Theta_t^*)$. The augmented Lagrangian procedure is stopped when $\mathcal{C}(\mathbb{E}[\matr{M}(\Phi_{t}^*)]) < 10^{-8}$, or $\mu_t > 10^{32}$.

The gradient of \eqref{eq:criterion} with respect to the parameters $\Phi$ and $\Theta$ is estimated on a minibatch of observations. To compute the gradient of the likelihood part with respect to $\Phi$, we follow~\cite{brouillarddcdi} and use a Straight-Through Gumbel-Softmax estimator~\cite{jang2017categorical}. This approach relies on discrete Bernoulli samples at the forward pass, but uses the reparameterized samples of the Gumbel softmax distribution for the backward pass (with fixed temperature parameter $T=1$). 

\subsubsection{Code Statement}
We implemented DCD-FG in PyTorch, using the DCDI codebase as a starting point. We extensively refactored and modified the original code (simulations, loss functions, training, evaluation) in order to scale to thousands of variables, and to accommodate the simulation of factor graphs. The software is available as open-source on GitHub at \url{https://github.com/Genentech/dcdfg} under the Apache 2.0 licence. 

\section{Details for empirical evaluation of DCD-FG}
\label{sec:app:exp}
In this section, we provide the necessary details for reproducing the experiments in the paper. 

\subsection{Synthetic data sets}
For each type of synthetic data set, we first sampled an $f$-DAG as explained in Appendix~\ref{sec:empval-mec}, with $p_v = 0.1$ and $p_f = 0.2$ and then we sampled the causal mechanisms, adapting the method from~\cite{brouillarddcdi} as follows. In each of our half-square graphs ($d=100$), and for each intervention regime $k \in [K]$, where $K=100$, intervention targets with a size of 1 to 3 nodes were chosen uniformly at random. $n / (K+1)$ independent observations were sampled for each interventional setting. The data were normalized, i.e., we subtracted the mean and divided by the standard deviation. In cases where IGSP required feature aggregation, we clustered the features using spectral clustering as implemented by scikit-learn~\cite{scikit-learn}.

In the linear data sets, each node was set to be a linear function of its parent nodes (in the $f$-DAG), with additional Gaussian noise of standard deviation $\sigma = 0.4$. The coefficients were sampled uniformly from $[-1, -0.25] \cup [0.25, 1]$ (to make sure they are bounded away from zero). Interventions were handled by instead sampling the intervened-upon node from an isotropic Gaussian distribution with unit variance. 

In the non-linear data sets (NN), each node was set to be a non-linear function of its parents nodes (in the $f$-DAG), with additional Gaussian noise of standard deviation $\sigma = 0.4$. The non-linear functions were fully-connected neural networks with one hidden layer of $20$ units and hyperbolic tangent as nonlinearitiy in the hidden layer. The weights of each neural network were sampled from isotropic Gaussian distributions with unit variance. Similarly to the linear model, interventions were handled by instead sampling the intervened-upon node from an isotropic Gaussian distribution with unit variance.

\subsection{Preprocessing of the Perturb-CITE-seq data set}

We downloaded the data set from the Single Cell Portal of the Broad Institute (accession code SCP1064). We converted the data from log-normalized count per millions into raw counts for processing with the scanpy package~\cite{scanpy}. We removed cell profiles with less than $500$ expressed genes, and genes expressed in less than $500$ cells. Then, we filtered the genes to include only the genetically-perturbed  genes and the most variable genes for a total of $d = 1{,}000$ genes. We finally partitioned the cells from each of the three conditions (co-culture, IFN$\gamma$, and control) into distinct datasets. We used spectral clustering as implemented in sklearn for clustering, and selected three gene sets with $10, 20$ and $50$ modules.

\subsection{Baseline Methods}
In this section, we provide additional details on the baseline methods and cite the implementations that were used. NOTEARS~\cite{zheng2018dags} was extended to handle perfect interventions, and to use a Gaussian likelihood (with unequal variance across features). In contrast to the original implementation that used a second-order optimization method, the reimplementation used in this paper relies on first-order optimization. A similar GPU reimplementation was used in the NO-BEARS manuscript~\cite{lee2019scaling} and shown to be 100x faster for $d=300$. NOTEARS-LR~\cite{lowrankdags} and NOBEARS~\cite{lee2019scaling} were also reimplemented to handle interventions, and use a Gaussian likelihood. 

We noticed floating point overflow in most experiments ($d > 100$) using 16 bit precision when calculating the tr-exp penalty. We first explored using 32 bit precision, but instead preferred scaling the matrix before computation of the penalty. We have noticed that using the spectral radius of the initialized adjacency matrix as scaling factor provided a nice safeguard, and have applied this throughout all the models. 

Finally, we noticed that for the NOTEARS baseline, thresholding the weight matrix at $w^* = 0.3$ (as done in the original NOTEARS paper) provided poor performance on this benchmark, and in the biological dataset. Therefore, we adopted a single strategy for pruning the adjacency matrices into DAGs for all of NOTEARS, NOBEARS, NOTEARS-LR and DCD-FG. We threshold the weighted adjacency matrices (weights for NOTEARS, NOBEARS and NOTEARS-LR, and probabilities of edge for DCD-FG) where the threshold $t^*$ is obtained by binary search with $T=20$ evaluation of an (exact) acyclicity test to find the largest possible DAG for each method. 

For IGSP, we used the implementation from \verb+https://github.com/uhlerlab/causaldag+. The cutoff values used for \verb+alpha-inv+ was always the same as \verb+alpha+. We used tools from the Python package Causal Discovery Toolbox~\cite{kalainathan2019causal} for calculating the SHD metrics.

\subsection{Default hyperparameters and hyperparameter search}

For all methods, we performed an exhaustive hyperparameter grid search. The models were trained on $80\%$ observations and evaluated on $20\%$ of the remaining ones (distinct interventions were used in the training and the held-out data used for evaluations). For all of NOTEARS, NOBEARS, NOTEARS-LR and DCD-FG, we searched over the regularization coefficient for sparsity $\lambda$. Additionally, for NOTEARS-LR and DCD-FG, we searched over the number of learned factors $m$. For DCD-FG, we used the acyclicity penalty as a supplementary hyperparameter (spectral or trace of exponential). For IGSP, we scored the output by using it as a mask for fitting a linear model, and we explored several cutoff values \verb+alpha+ based on the value used in the original publication~\cite{igsp}, and the best performing value in the experiments of~\cite{brouillarddcdi}. Because IGSP did not terminate in many scenarios (after 5 hours, even with the Gaussian conditional independence), we ran the method in several instances of feature aggregation and observation subsampling and report the best performance. Feature aggregation means that we decreased the dimensionality of the dataset by summarizing the feature set into a cluster set by averaging all features within a cluster, and mapping intervention targets from the feature set to the cluster set. The complete hyperparameter search space for each algorithm is described in Table~\ref{tab:hyperparam}.

\begin{table}[h]
\centering
\caption{Hyperparameter search spaces for each algorithm.\label{tab:hyperparam}
}
\vspace{0.3cm}
\begin{tabular}{ll}
\toprule
 & \textbf{ Hyperparameter space}\\
 \midrule
 \multirow{3}{*}{\textbf{DCD-FG}}     & $\log_{10}(\lambda) \in \{-3, -2, -1, 0, 1, 2\}$              \\
                            & $m \in \{10, 15, 20, 30, 50\}$                  \\
                            & DAG penalty $\in \{\text{spectral}, \text{tr-exp}\}$               \\[0.1cm] \hline
\multirow{3}{*}{\textbf{NOTEARS-LR}} & $\log_{10}(\lambda) \in \{-3, -2, -1, 0, 1, 2\}$               \\
                            & $m \in \{10, 15, 20, 30, 50\}$                 \\
                            & DAG penalty $\in \{\text{spectral}, \text{tr-exp}\}$               \\[0.1cm] \hline
                            \multirow{2}{*}{\textbf{NOTEARS}}    & $\log_{10}(\lambda) \in \{-3, -2, -1, 0, 1, 2\}$               \\
                            & DAG penalty $\in \{\text{spectral}, \text{tr-exp}\}$              \\[0.1cm] \hline
            \textbf{NOBEARS} & $\log_{10}(\lambda) \in \{-3, -2, -1, 0, 1, 2\}$               \\[0.1cm] \hline
\multirow{4}{*}{\textbf{IGSP}}       & \verb+alpha+  $\in \{1 e-3, 1 e-5\}$               \\
                            & CI test $\in \{\text{Gaussian}, \text{KCI}\}$                   \\
                            & Feature clustering (\# clusters) $\in \{10, 20, 50\}$           \\
                            & Observation sub-sampling (only for bio dataset) $\in \{0.1, 0.25, 0.5\}$        \\
 \bottomrule
\end{tabular}
\end{table}

DCD-FG, NOTEARS-LR, NOTEARS and NOBEARS share several default hyperparameters related to the optimization procedure and the architecture of the neural networks (for DCD-FG only) that we outline in Table~\ref{tab:default} (values are similar to those in~\cite{brouillarddcdi}). Neural networks were designed with leaky-ReLU activation functions, and initialized following the Xavier initialization~\cite{glorot2010understanding}. RMSprop was used as the optimizer~\cite{tieleman2012lecture} with minibatches of size 64.

\begin{table}[h]
\centering
\caption{Default hyperparameters for DCD-FG, NOTEARS, NOTEARS-LR and NOBEARS. \label{tab:default}\\
}
\vspace{0.3cm}
\begin{tabular}{l}
\toprule
 \textbf{Hyperparameters}\\
 \midrule
$\mu_0= 10^{-8}$, $\gamma_0 = 0$, $\eta=2$, $\delta=0.9$\\
Augmented Lagrangian constraint threshold: $10^{-8}$\\
Learning rate: $2 . 10^{-3}$\\
\# hidden units: $16$ (DCD-FG only)\\
\# hidden layers: $2$ (DCD-FG only)\\ 
 \bottomrule
\end{tabular}
\end{table}

\subsection{Assessment of statistical significance}
For every claim of the form ``Algorithm X and Y outperformed Algorithm W and Z with respect to metric $h$'', we applied a Wilcoxon signed-rank test to the difference of scores between the worst performing model out of X, Y and the best performing model out of W, Z. We alternatively applied an Mann–Whitney U test (unpaired test) by concatenating the results of each algorithm (X, Y) and (W, Z) and recovered identical statistical significance assessments. 

\subsection{Additional experimental results}
Here, we report the results of additional experiments intended to complement the experiments presented in the main paper and to investigate the robustness of DCD-FG:
\begin{itemize}
    \item Aggregation of precision and recall on our Gaussian causal structural models, as presented in Figure~\ref{fig:gaussian}, into a F1-score (Figure~\ref{fig:f1_score})
    \item Performance of DCD-FG for different values of the rank parameter ($m$)~(Figure~\ref{fig:hyper}) on Gaussian causal structural models. For this, we considered the same data as for Figure~\ref{fig:gaussian}, but performed hyperparameter search over all parameters besides $m$, for different choices of $m$. We observe that the performance is fairly robust to a wide variety of choices of $m$, and that validation likelihood is strongly correlated with the resulting performance, justifying our selection of $m$ by optimizing hold-out likelihood.
    \item Alternative benchmark for a different exogeneous noise distribution (uniform instead of Gaussian; with matching variance; Figure~\ref{fig:uniform}). For this, we chose the same graph generative and conditional likelihood model as in Figure~\ref{fig:gaussian}, only changing the noise distribution. While performance slightly degrades, the relative ordering of DCD-FG compared to NOTEARS-LR and NOTEARS remained the same.
    \item Alternative benchmark with observational data (Figure~\ref{fig:obs}). Using the same graph generative and conditional likelihood model as in Figure~\ref{fig:gaussian}, we generated the same number of observations, but without any interventions. Again, while performance decreases in this regime, DCD-FG remains the best-performig method.
    \item Total runtime of methods on all experiments (Figure~\ref{fig:total_runtime}).
\end{itemize} 

\begin{figure}[ht]
    \centering
    \includegraphics[width=0.2\textwidth]{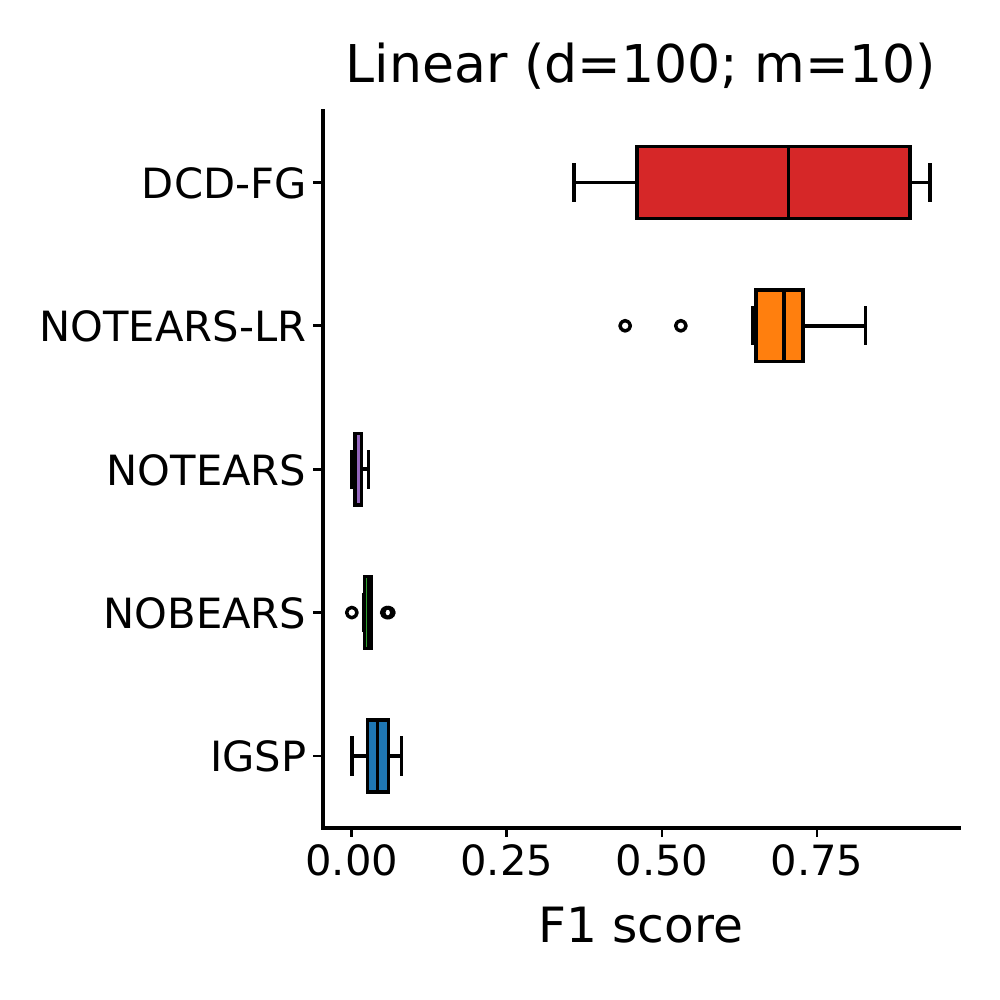}\includegraphics[width=0.2\textwidth]{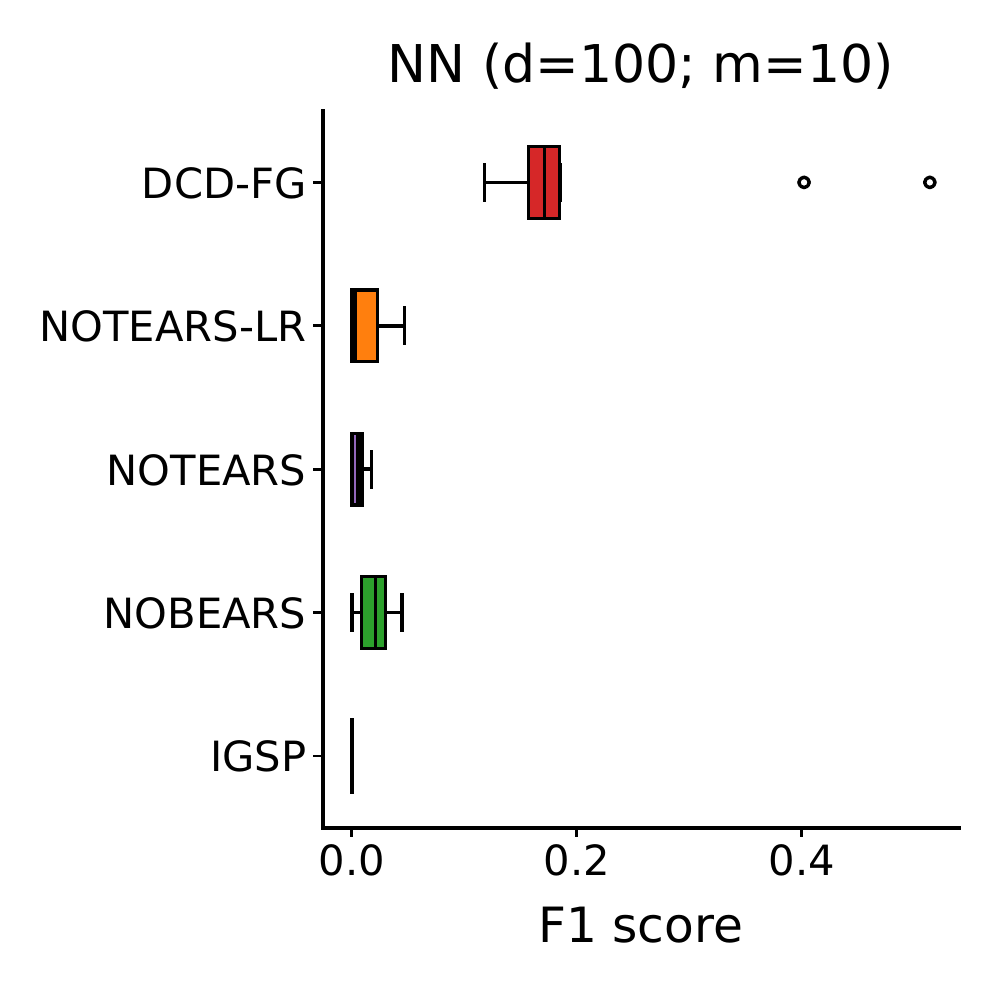}
    \caption{F1-score for Gaussian causal structural models experiments.}
    \label{fig:f1_score}
\end{figure}

\begin{figure}[ht]
    \centering
    \includegraphics[width=0.5\textwidth]{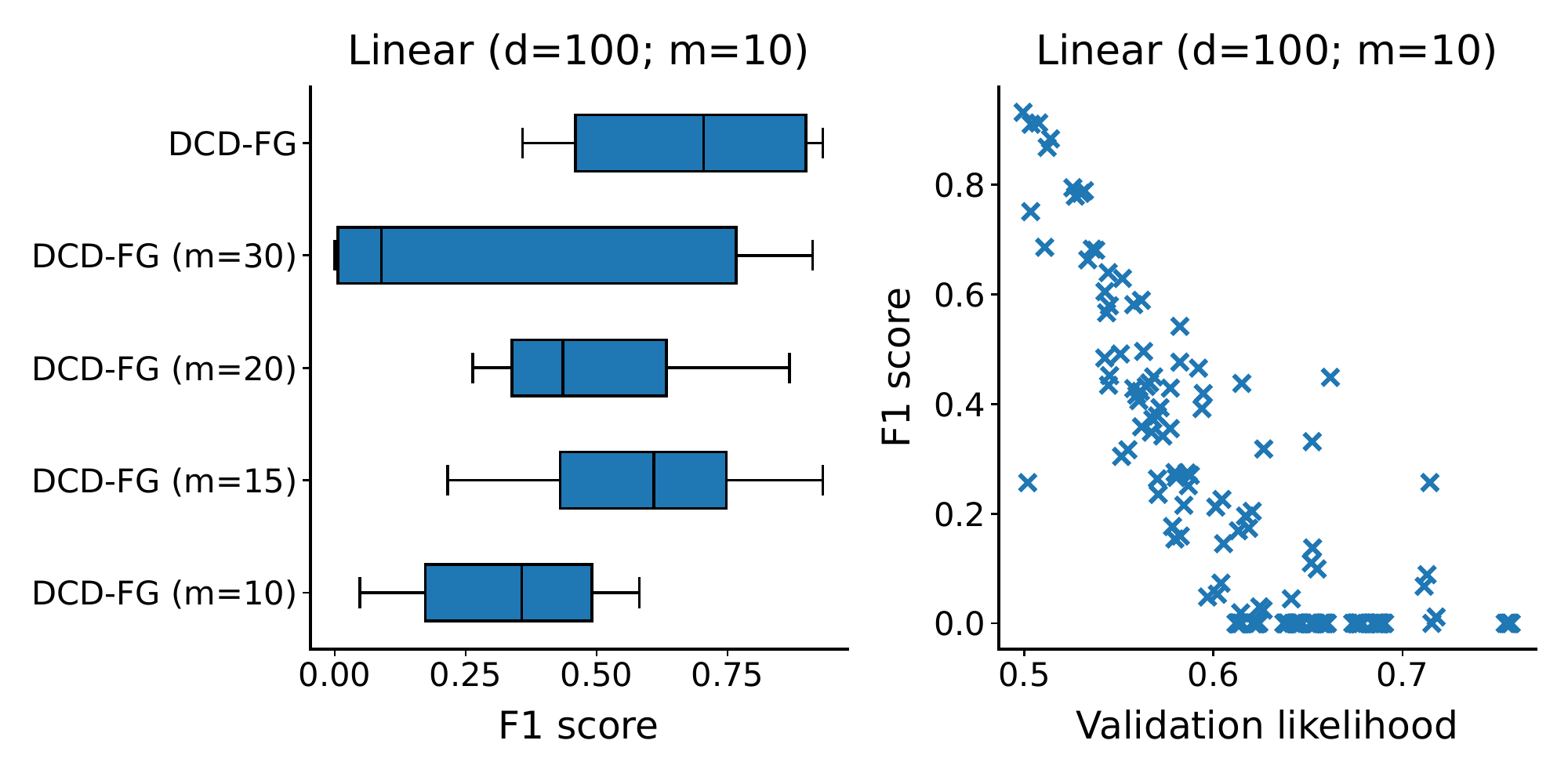}
    \caption{Robustness of DCD-FG to the rank hyperparameter.}
    \label{fig:hyper}
\end{figure}

\begin{figure}[ht]
    \centering
    \includegraphics[width=0.3\textwidth]{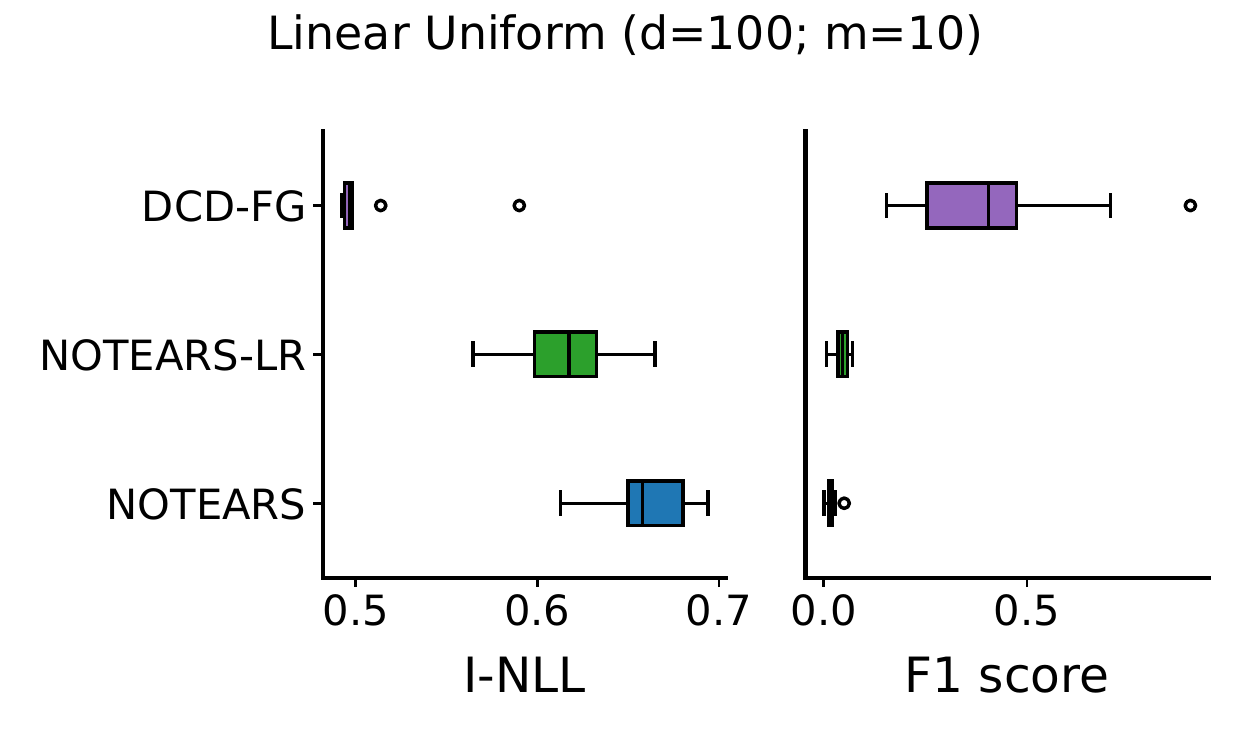}
    \includegraphics[width=0.3\textwidth]{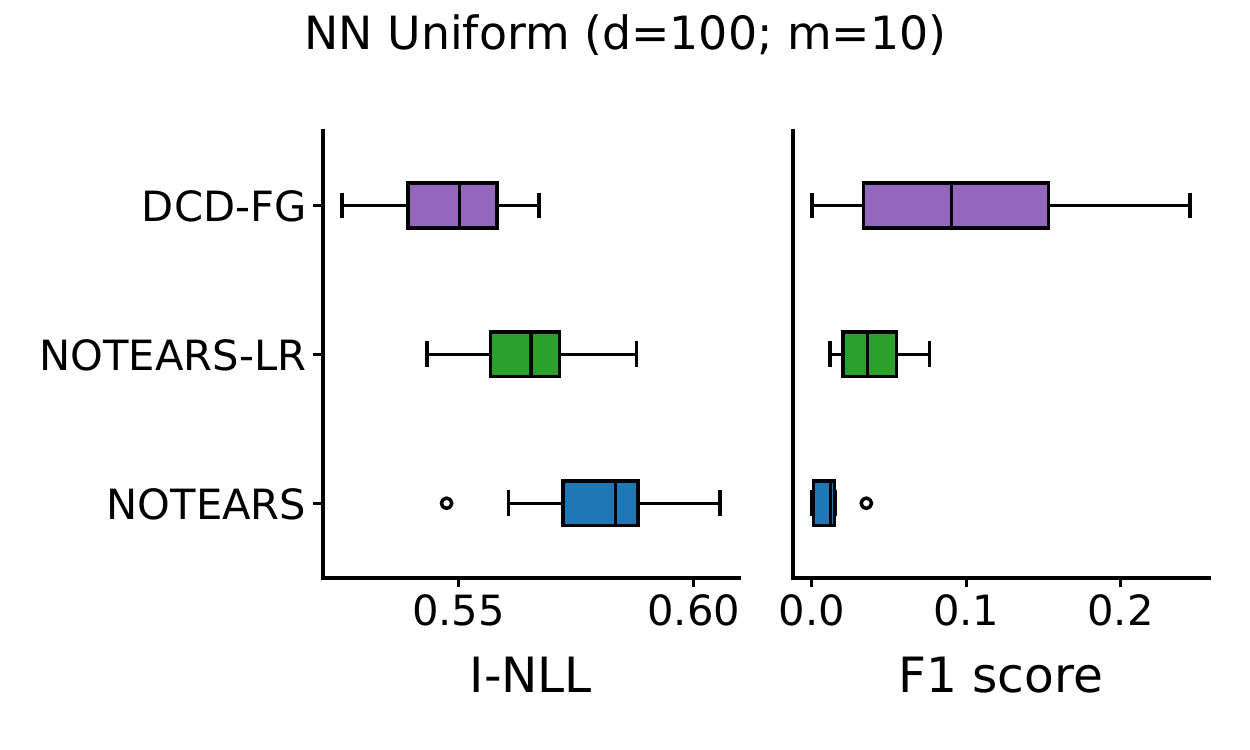}
    \caption{Robustness of DCD-FG to the model mispecification.}
    \label{fig:uniform}
\end{figure}

\begin{figure}[ht]
    \centering
    \includegraphics[width=0.3\textwidth]{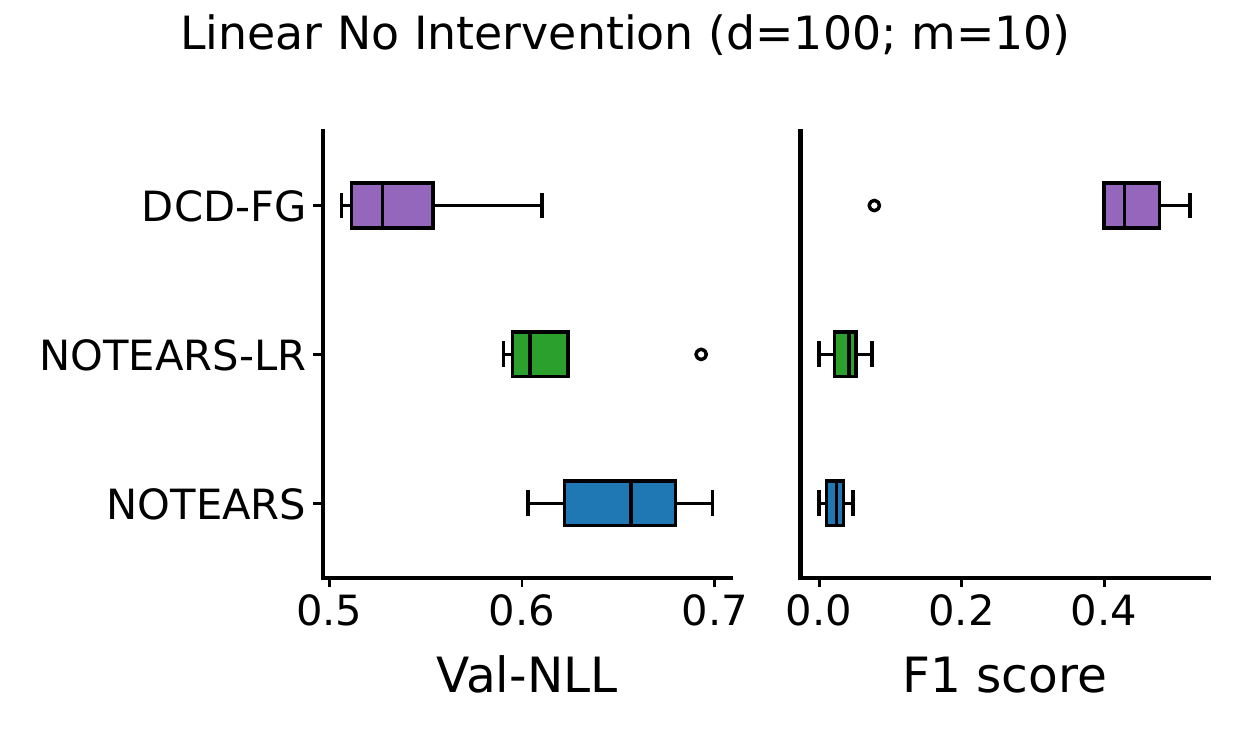}
    \includegraphics[width=0.3\textwidth]{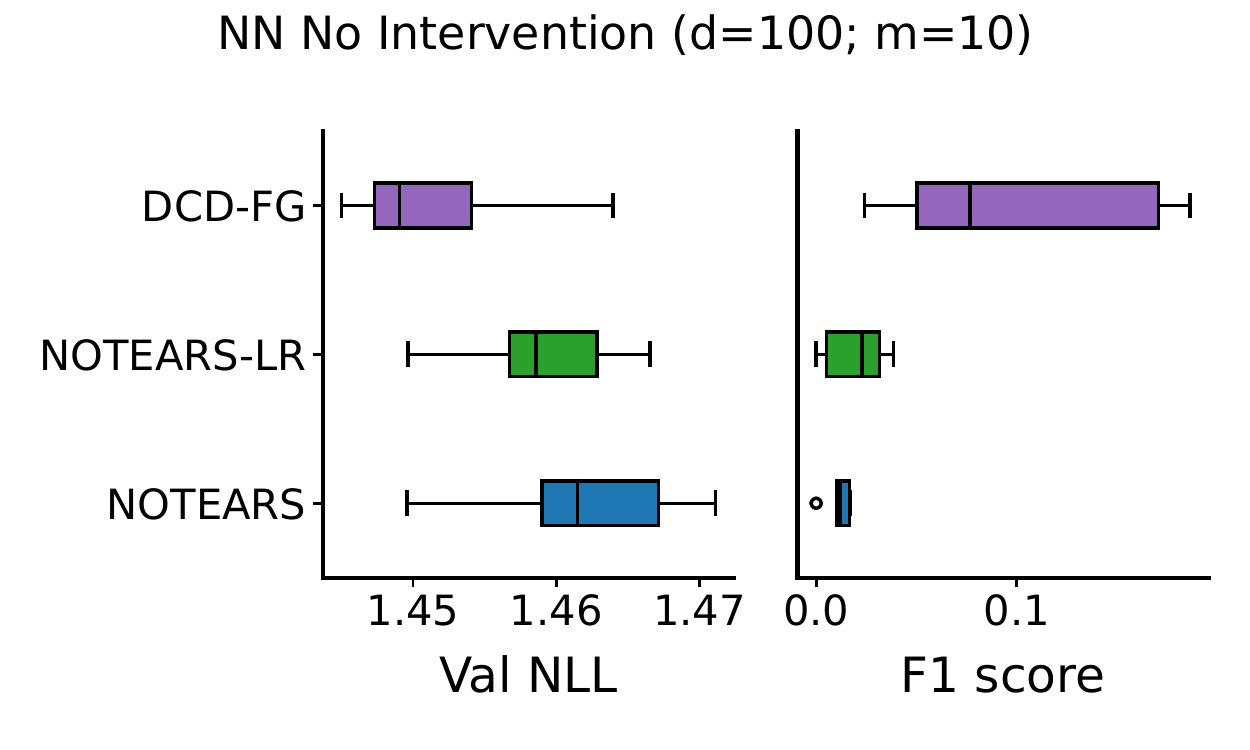}
    \caption{Performance of DCD-FG with observational data.}
    \label{fig:obs}
\end{figure}

\begin{figure}[ht]
    \centering
    \includegraphics[width=0.6\textwidth]{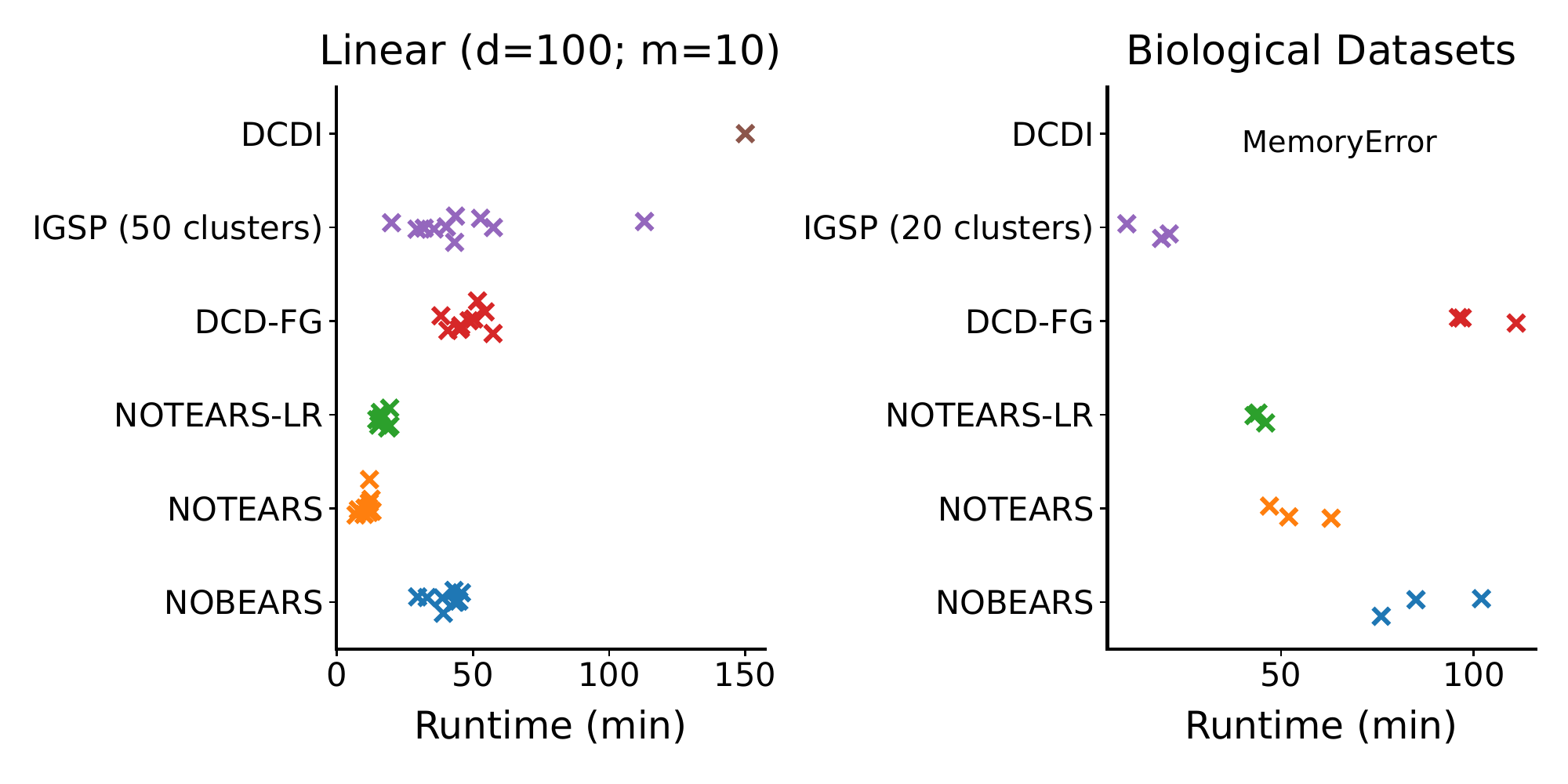}
    \caption{Total runtime of DCD-FG on one NVIDIA Tesla T4 GPU with 15Gb of RAM.}
    \label{fig:total_runtime}
\end{figure}

\subsection{Biological interpretation of the $f$-DAG}
The $f$-DAG inferred by DCD-FG for the cells treated with IFN$\gamma$ has $m=20$ factors, and the half-square $G$ has $196{,}303$ edges. We show a histogram of ingoing and outgoing edges to each factor in Figure~\ref{fig:modules}.

\begin{figure}[h]
    \centering
    \includegraphics[width=0.5\textwidth]{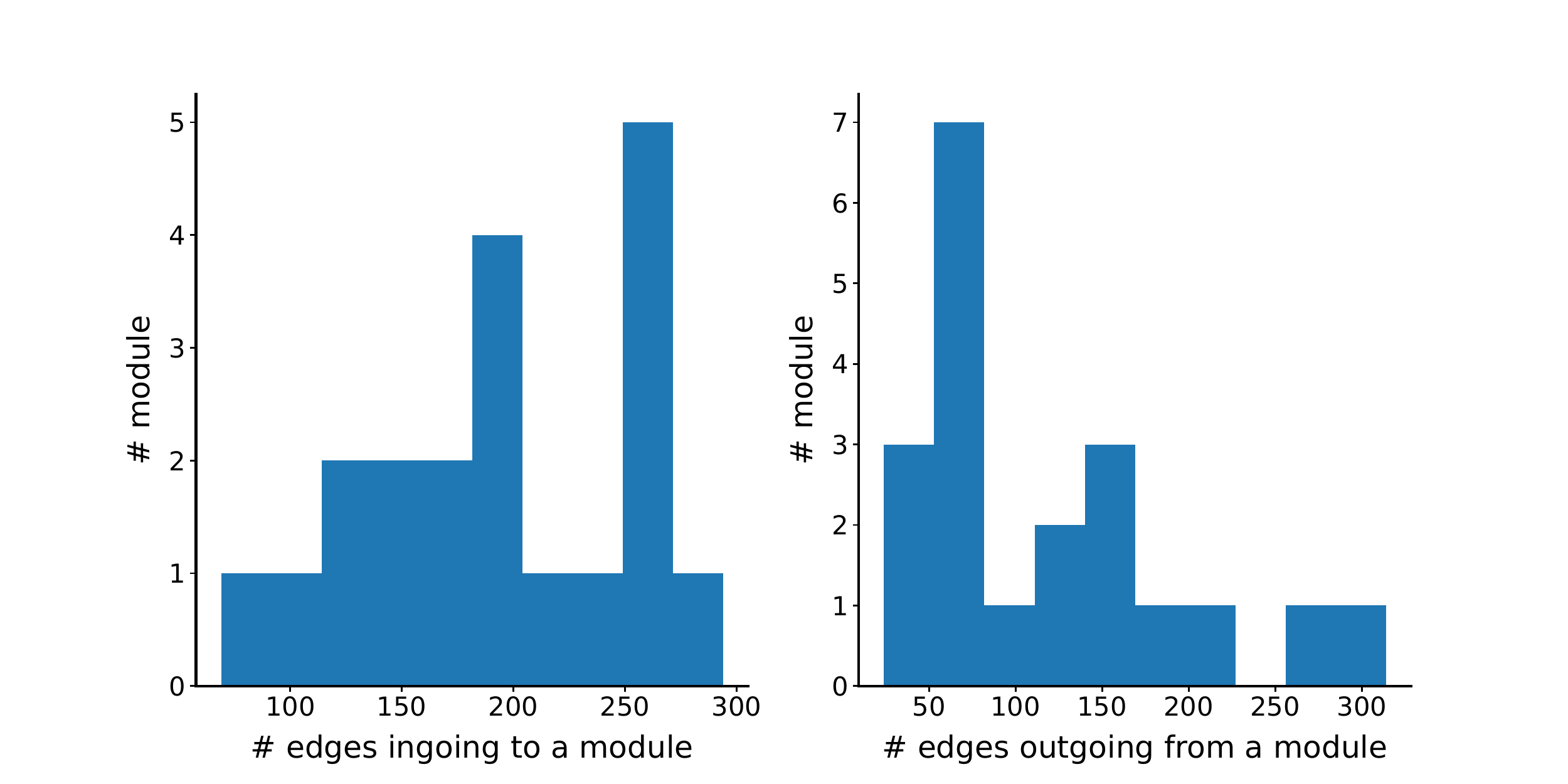}
    \caption{Distributions of ingoing and outgoing edges of modules in the $f$-DAG estimated on the IFN$\gamma$ treated cells.}
    \label{fig:modules}
\end{figure}

In order to assign biologically meaningful names to the factors, we performed Gene Set Enrichment Analysis (GSEA) via the enrichr method~\cite{chen2013enrichr}, applied independently to each factor $f$ by considering the set of genes that are connected to $f$ in either direction. In order to place the genes onto the factor half-square, we inferred for each gene the strongest factor-to-factor edge in which it appears. Indeed, we noticed that the same gene may appear upstream and/or downstream of several factors. For better interpretability, we selected the strongest parent factor and child factor based on the weights of the model (either the linear model in case of link from factor to gene, or the first layer of the MLP in case of link from gene to factor). We anticipate that further work will be necessary to visualize and interpret those large graphs for more involved biological applications, but we currently leave this as future work.

\end{document}